
\documentclass[preprint,12pt]{elsarticle}

\usepackage[table]{xcolor}



\usepackage{amssymb}

\usepackage{lineno}

\usepackage{amsmath}
\usepackage{amssymb}
\usepackage{mathtools}
\usepackage{amsthm}

\usepackage{mymacros}

\usepackage{enumitem}
\usepackage{hyperref}

\usepackage{graphicx}
\usepackage{subfig}

\usepackage{pgfplots}
\pgfplotsset{compat=newest}
\usepgfplotslibrary{groupplots}

\usepackage{algorithm}
\usepackage{algorithmic}
\usepackage{circuitikz}

\usepackage{natbib}

\newcommand{\algrested}{\texttt{R-ed-UCB}\@\xspace}

\newcommand{\red}{\texttt{R-ed}\@\xspace}

\definecolor{color12}{RGB}{0, 153, 136}
\definecolor{color11}{RGB}{238, 119, 51}
\definecolor{black}{rgb}{0, 0, 0}

\newcommand{\hl}[1]{{\color{black}#1}}

\usepackage{multirow}

\allowdisplaybreaks[4]

\journal{Artificial Intelligence Journal}

\begin{document}

\begin{frontmatter}



\title{Rising Rested Bandits: \\ Lower Bounds and Efficient Algorithms}


\author[inst1]{Marco Fiandri}
\author[inst1]{Alberto Maria Metelli}
\author[inst1]{Francesco Trovò}

\affiliation[inst1]{organization={Dipartimento di Elettronica, Informazione e Bioingegneria, Politecnico di Milano},
            addressline={Piazza Leonardo da Vinci, 32}, 
             postcode={20133}, 
            city={Milano},
            country={Italy}}

\begin{abstract}{
This paper is in the field of stochastic Multi-Armed Bandits (MABs), \ie those sequential selection techniques able to learn online using 
only the feedback given by the chosen option (a.k.a.~arm). We study a particular case of the rested bandits in which the arms' 
expected reward is monotonically non-decreasing and concave. We study the inherent sample complexity of the regret minimization
problem by deriving suitable regret lower bounds. Then, we design an algorithm for the rested case (\algrested), providing a regret bound depending on the properties of 
the instance and, under certain circumstances, of $\widetilde{\mathcal{O}}(T^{\frac{2}{3}})$. We empirically compare our algorithms with 
state-of-the-art methods for non-stationary MABs over several synthetically generated tasks and an online model selection problem for a 
real-world dataset.}
\end{abstract}



\begin{keyword}
Multi-Armed Bandits \sep Rising Bandits \sep Regret minimization
\end{keyword}

\end{frontmatter}

\section{Introduction}
The classical stochastic MAB framework~\citep{lattimore2020bandit} has been successfully applied to a number of applications, such as advertising, recommendation, and networking.
MABs model the scenario in which a learner sequentially selects (a.k.a.~pulls) an option (a.k.a.~arm) in a finite set and receives feedback (a.k.a.~reward) corresponding to the chosen option.
The goal of \emph{regret-minimization} algorithms is to guarantee the \emph{no-regret} property, meaning that the loss due to not knowing the best arm is increasing sublinearly with the learning horizon. 
One of the assumptions that allows designing no-regret algorithms consists in requiring that the expected reward provided by the available options is \emph{stationary}, \ie rewards come from a fixed distribution throughout the learning process.

However, the arms' expected reward may change over time due to intrinsic modifications of the arms or the environment.
The \emph{adversarial} algorithms offer a no-regret approach, in which no assumption on the nature of the reward is required.
It has been shown that, in this setting, it is possible to design effective algorithms,  e.g., \texttt{EXP3}~\citep{auer1995gambling}. However, in practice, their performance is unsatisfactory for several reasons. First, the \emph{non-stationarity} of real-world cases is far from being adversarial. Second, the guarantees delivered by such algorithms focus on a (static) regret notion, competing against a constant (best) action, are often poorly informative in real-world scenarios~\citep{DekelTA12}.
Instead, non-stationarity is explicitly accounted for by a surge of methods that consider either abrupt changes~\citep[\eg][]{garivier2011upper}, smoothly changing environments~\citep[\eg][]{fiandri2024sliding}, or bounded reward variation~\citep[\eg][]{besbes2014stochastic}. Under these regularity assumptions on the non-stationary evolution, it is possible to design provably efficient algorithms minimizing a stronger notion of (dynamic) regret, where the algorithm is evaluated against a sequence of (best) actions.


While in non-stationary MABs, the arms' expected reward changes as an effect of \emph{nature}, a different setting arises when the expected reward changes as an effect of \emph{pulling} the arm. This is the case of \emph{rested} bandits~\citep{tekin2012online}. It is worth stressing the distinction between rested and non-stationary bandits (a.k.a.~\emph{restless}). Indeed, in the former case, the evolution of the arm is triggered by the algorithm itself when pulling it. This makes regret minimization in rested bandits more challenging compared to the non-stationary one. Indeed, regularity assumptions on the arms' expected reward evolution are needed~\cite{heidari2016tight, seznec2019rotting}. The literature distinguishes between \emph{rotting} and \emph{rising} rested bandits. In rotting rested bandits~\citep{seznec2019rotting}, the expected reward of the arms is monotonically non-increasing as a function of the pulls, modeling degradation phenomena. Knowing the monotonicity property allows deriving more specialized algorithms, exploiting the process characteristics. Notably, the symmetric problem, in which monotonically non-decreasing expected rewards are enforced, cannot be addressed with the same approaches. Indeed, it was shown that it represents a significantly more complex problem, even for deterministic arms~\citep{heidari2016tight}. In this non-decreasing setting, a common assumption is the concavity of the expected reward function that defines the \emph{rising} rested bandits setting~\citep{li2020efficient}.

This paper aims to study the regret minimization problem in the setting of the \emph{stochastic} rested bandits when the arms' expected reward is monotonically non-decreasing. This setting arises in several real-world sequential decision-making problems. For instance, suppose we have to choose among a set of optimization algorithms to maximize an unknown stochastic concave function. In this setting, we expect that all the algorithms progressively \emph{increase} (on average) the function value and eventually converge to the optimum, possibly with different speeds. Therefore, we wonder which candidate algorithm to assign the available resources (\eg computational power or samples) to identify the one that converges faster on average to the optimum. This \emph{online model selection} process can be represented as a \emph{rested} MAB~\citep{tekin2012online} with non-decreasing expected rewards. Indeed, each optimization algorithm (arms) and the function value do not evolve if we do not select (pull) it. Other examples of such settings are represented by the so-called Combined Algorithm Selection and Hyperparameter optimization (CASH,~\citet{thornton2013auto},~\citet{li2020efficient}), whose goal is to identify the best learning algorithm together with the best hyperparameter configuration for a given machine learning task, one of the most fundamental problems in Automatic Machine Learning (AutoML). The setting of the concave rising bandit is also extremely useful when modeling satiation effects in recommendations (\citet{clerici2023linear},\citet{xu2023online}), and concave learning curves have been shown to arise in
various laboratory environments (\eg see ~\citet{jovanovic1995bayesian}; ~\citet{anderson1991reflections}) and is very natural and common in the context of human
learning (\eg see the work by~\citet{SonMeta}).

\paragraph{Original Contribution}
In this paper, we address the regret minimization problem in stochastic rising bandits, i.e., stochastic bandits, in which the expected rewards are monotonically non-decreasing and concave as a function of the number of pulls. More specifically:

\begin{itemize}
	\item we provide a set of regret lower bounds highlighting the inherent complexity of regret minimization in rising rested bandits. In particular, we show that the non-decreasing and concavity properties are not enough to guarantee the existence of no-regret algorithms;
	\item we design \algrested and optimistic algorithm for the rising rested bandits that make use of a suitably 
	designed estimator for the expected reward;
	\item we show that \algrested suffer an expected regret that depends on the expected reward function profile and, under some conditions, of order $\widetilde{\mathcal{O}}(T^{\frac{2}{3}})$;\footnote{With $\widetilde{\mathcal{O}}(\cdot)$ we disregard logarithmic terms in the order.}
	\item we illustrate, using synthetic and real-world
	data, the effectiveness of our approaches, compared with state-of-the-art algorithms for the non-stationary bandits.
\end{itemize}

\section{Related Works}
In this section, we survey the literature connected to the settings and approaches presented in this paper.

\paragraph{Restless and Rested Bandits}
The \emph{rested} and \emph{restless} bandit settings have been introduced by~\citet{tekin2012online} and further developed by~\citet{ortner2012regret,russac2019weighted} in the restless version and by~\citet{mintz2020nonstationary,pike2019recovering} in the rested one. Originally the evolution of the expected reward was modeled via a stochastic process, \eg a Markov chain with finite state space or a linear regression process. For instance,~\citet{NEURIPS2020_89ae0fe2} proposes an optimistic approach based on the estimation of the transition kernel of the underlying chain. More recently, the terms rested and restless have been employed to denote arms whose expected reward changes as time passes, for restless ones, or whenever being pulled, for rested ones~\citep{seznec2019rotting,seznec2020asingle}. That is the setting we target in this work.

\paragraph{Non-Stationary Bandits}
The restless bandits, without a fixed temporal reward evolution, are usually addressed via non-stationary MAB approaches, that include both passive~\citep[\eg][]{garivier2011upper,besbes2014stochastic,auer2019adaptively,fiandri2024sliding}
and active~\citep[\eg][]{liu2018change,besson2019efficient,cao2019nearly} methods. The former algorithms base their selection criterion on the most recent feedbacks, while the latter actively try to detect if a change in the arms' rewards occurred and use only data gathered after the last change. \citet{garivier2011upper} employ a discounted reward approach (\texttt{D-UCB}) or an adaptive sliding window (\texttt{SW-UCB}), proving a $\widetilde{\mathcal{O}}(\sqrt{T})$ regret when the number of abrupt changes is known. Similar results have been obtained by~\citet{auer2019adaptively} without knowing the number of changes, at the price of resorting to the doubling trick. \citet{besbes2014stochastic} provide an algorithm, namely \texttt{RExp3}, a modification \texttt{EXP3}, originally designed for adversarial MABs, to give a regret bound of $\mathcal{O}(T^{\frac{2}{3}})$ under the assumption that the total variation $V_T$ of the arms' expected reward is known. The knowledge of $V_T$ has been removed by~\citet{chen2019anew} using the doubling trick. In~\citet{fiandri2024sliding}, an approach in which the combined use of a sliding window on a Thompson Sampling-like algorithm provides theoretical guarantees both on abruptly and smoothly changing environments. Nonetheless, in our setting, their result might lead to linear regret for specific instances. Notably, none of the above explicitly use assumptions on the monotonicity of the expected reward over time.

\paragraph{Rising Bandits}
The \emph{rising} bandit problem has been tackled in its deterministic version by~\citet{heidari2016tight,li2020efficient}. In~\citet{heidari2016tight}, the authors design an online algorithm to minimize the regret of selecting an increasing and concave function among a finite set. This study assumes that the learner receives feedback about the true value of the reward function, \ie no stochasticity is present. In~\citet{li2020efficient}, the authors model the problem of parameter optimization for machine learning models as a rising bandit setting. They propose an online algorithm having good empirical performance, still in the case of deterministic rewards. A case where the reward is increasing in expectation (or equivalently decreasing in loss), but no longer deterministic, is provided by~\citet{cella2021best}.
However, the expected reward follows a given parametric form known to the learner, who estimates such parameters in the best-arm identification and regret-minimization frameworks. The need for knowing the parametric form of the expected reward makes these approaches hardly applicable for arbitrary increasing functions. Recently, a surge of approaches has been designed for addressing other learning problems in stochastic rising bandits, including best-arm identification~\citep{TakemoriUG24,mussibest}. Finally, \citet{genalti2024graph} proposes a novel framework that interpolates between rested and restless bandits, still assuming the rising condition.

\paragraph{Corralling Bandits}
It is also worth mentioning the \emph{corralling} bandits~\citep{agarwal2017corralling, pacchiano2020model, abbasi2020regret, pacchiano2020regret, arora2021corralling}, a setting in which the goal is to minimize the regret of a process choosing among a finite set of bandit algorithms. This setting, close to online model selection, is characterized by particular assumptions. Indeed, each arm corresponds to a learning algorithm, operating on a bandit, endowed with a (possibly known) regret bound, sometimes requiring additional conditions (\eg stability).

\section{Problem Setting}\label{sec:problemSetting}
A rested $K$-armed MAB\footnote{We refer to the definition of~\citep{levine2017rotting,seznec2020asingle} and not to the one of~\citep{tekin2012online} that assumes an underlying Markov chain governing the arms' distributions.} is defined as a vector of probability distributions $\bm{\nu} = (\nu_i)_{i \in [K]}$, where $\nu_i : \Nat \rightarrow \Delta(\Reals)$ depends on a parameter $n \in \Nat$ for every $i \in [K]$, where $[K] \coloneq \{1, \ldots, K\}$. Let $T \in \Nat$ be the optimization horizon, at each round $t \in [T]$, the agent selects an arm $I_t \in [K]$ and observes a reward $R_{t} \sim \nu_{I_{t}}(N_{I_t,t})$, where $N_{i,t} = \sum_{l=1}^{t} \Ind\{I_l=i\}$ is the number of times the arm $i \in [K]$ was pulled up to round $t$ so that the reward depends on the number of pulls $N_{I_t,t} = N_{I_t,t-1}+1$ of arm $I_t$ up to $t$. Thus, the expected reward of a rested arm changes when being pulled and, therefore, it models phenomena that evolve as a consequence of the agent intervention.

For every arm $i \in [K]$, we define its expected reward $\mu_i : \Nat \rightarrow \Reals$ as the expectation of the reward, \ie $\mu_i(n) = \E_{R \sim \nu_{i}(n)}[R]$ and denote the vector of expected rewards as $\bm{\mu} = (\mu_i)_{i \in [K]}$. We assume that the expected rewards are bounded in $[0,1]$, and that the rewards are $\sigma^2$-subgaussian, \ie $\E_{R \sim \nu_{i}(n)}[e^{\lambda  (R-\mu_i(t,n))}] \le e^{\frac{\sigma\lambda^2}{2}}$, for every $\lambda \in \Reals$.

%

\paragraph{Rising Bandits}

We revise the \emph{rising} bandits notion, \ie MABs with expected rewards \emph{non-decreasing} and \emph{concave} as a function of $n$~\citep{heidari2016tight}.\footnote{Deterministic bandits with non-decreasing expected rewards were introduced in~\citep{heidari2016tight} with the term \emph{improving}. In~\citep{li2020efficient}, the term \emph{rising} was used to denote the improving bandits with concave expected rewards (concavity was already employed by~\citet{heidari2016tight}).}

\begin{ass}[Non-Decreasing expected reward]\label{ass:incr}
Let $\bm{\nu}$ be a MAB, for every arm $i\in [K]$, function $\mu_i(\cdot)$ is non-decreasing. In particular, we define the \emph{increments} $\gamma_i(n) \coloneqq \mu_i(n+1) - \mu_i(n) \ge 0$ for every $n \in \Nat$.\footnote{For $n = 0$, we conventionally set $\gamma_i(0) = \mu_i(1)$.}
\end{ass}
From an economic perspective, $\gamma_i(\cdot)$ represents the \emph{increase of total return} (or expected reward) we obtain by adding a factor of production, \ie pulling the arm.
In the next sections, we analyze how the following assumption defines a remarkable class of bandits with non-decreasing expected rewards~\citep{heidari2016tight}.

\begin{ass}[Concave expected reward]\label{ass:decrDeriv}
Let $\bm{\nu}$ be a MAB, for every arm $i\in [K]$, function $\mu_i(\cdot)$ is concave, \ie $\gamma_i(n+1) - \gamma_i(n) \le 0$.
\end{ass}

As pointed out by~\citet{heidari2016tight}, the concavity assumption corresponds, in economics, to the \emph{decrease of marginal returns} that emerges when adding a factor of production, \ie pulling the arm (rested) or letting time evolve for one unit (restless).

Formally, we define \emph{rising} a stochastic MAB in which Assumption~\ref{ass:incr} and Assumption~\ref{ass:decrDeriv} hold.

\paragraph{Learning Problem}
Let $t \in [T]$ be a round, we denote with $\Hs_t = (I_l,R_l)_{l=1}^t$ the \emph{history} of observations up to $t$. A (non-stationary) deterministic policy is a function $\pi : \Hs_{t-1} \mapsto I_t$ mapping a history to an arm. For the sake of concision, we will write $\pi(t) \coloneqq \pi(\Hs_{t-1})$. The performance of a policy $\pi$ in a MAB with expected rewards $\bm{\mu}$ is the \emph{expected cumulative reward} collected over the $T$ rounds, formally:
\begin{align*}
    J_{\bm{\mu}}({\pi},T) \coloneq  \mathbb{E}_{\bm{\mu},\pi} \left[\sum_{t \in [T]} \mu_{I_t} \left(N_{I_t,t}\right) \right],
\end{align*}
and the expectation is computed over the histories.
A policy ${\pi}^*_{\bm{\mu},T}$ is \emph{optimal} if it maximizes the expected cumulative reward: ${\pi}^*_{\bm{\mu},T} \in \argmax_{\pi} \{ J_{\bm{\mu}}({\pi},T) \}$. Denoting with $J_{\bm{\mu}}^*\left(T\right) \coloneq J_{\bm{\mu}}({\pi}^*_{\bm{\mu},T},T)$ the expected cumulative reward of an optimal policy, the suboptimal policies $\pi$ are evaluated via the \emph{expected cumulative regret} $R_{\bm{\mu}}(\pi,T)$, formally defined as follows:
\begin{align}\label{eq:regret}
	R_{\bm{\mu}}(\pi,T) \coloneq J_{\bm{\mu}}^*\left(T\right) - J_{\bm{\mu}}\left(\pi,T\right).
\end{align}

\paragraph{Problem Characterization}
To characterize the problem instance, we introduce the following quantity, namely the \emph{cumulative increment}, defined for every $M \in [T]$ and $q \in [0,1]$ as:
\begin{align}\label{eq:magicQuantity}
	& \Upsilon_{\bm{\mu}}(M,q) \coloneq  \sum_{l=1}^{M-1} \max_{i \in [K]}\{\gamma_i(l)^q\}.
\end{align}
The cumulative increment accounts for how fast the arms expected rewards reach their asymptotic value, \ie become stationary. Intuitively, small values of $\Upsilon_{\bm{\mu}}(M,q)$ lead to simpler problems, as they are closer to stationary bandits. It is worth noting that $0 \le \Upsilon_{\bm{\mu}}(M,q) \le K M^{1-q}$. Table~\ref{tab:rates} reports some bounds on $\Upsilon_{\bm{\mu}}(M,q)$ for particular choices of $\gamma_i(l)$ and $q$.\footnote{Formal proofs for the reported bounds are provided in Lemma~\ref{lemma:lemmaBoundsGamma}.} When $q=1$, the cumulative increment $\Upsilon_{\bm{\mu}}(T,1)$ corresponds to the total variation $V_T \coloneq \sum_{l=1}^{T-1} \max_{i \in [K]} \left\{\gamma_i(l) \right\}$~\citep{besbes2014stochastic}.


\begin{table}
\caption{$\mathcal{O}(\cdot)$ rates of $\Upsilon_{\bm{\mu}}(M,q)$ in the case $\gamma_i(l) \le f(l)$ for all $i \in [K]$ and $l \in \Nat$.}\label{tab:rates}
\centering\renewcommand{\arraystretch}{1.5}
\begin{tabular}{r|C{2cm}|C{2cm}|C{2cm}|C{2cm}|}
\cline{2-5}
$f(l)$ &  $e^{-cl}$ & $l^{-c}$ ($cq > 1$) & $l^{-c}$ ($cq = 1$) & $l^{-c}$ ($cq \le 1$)\\\cline{2-5}
  \rule{0pt}{15pt} $\Upsilon_{\bm{\mu}}(M,q)$ & $\displaystyle \frac{e^{-cq}}{cq}$ & $\displaystyle \frac{1}{cq-1}$ &  $\displaystyle \log M$ & $\displaystyle \frac{M^{1-cq}}{1-cq}$ \\\cline{2-5}
\end{tabular}
\end{table}

%
%
%

\paragraph{Optimal Policy} 
We recall that the \emph{oracle constant} policy is the one that always plays at each round $t \in [T]$ the arm that maximizes the sum of the expected rewards over the horizon $T$, is optimal for the non-increasing rested bandits. Formally:

\begin{restatable}[\citet{heidari2016tight}]{thr}{thrRestedOptimal}\label{thr:thrRestedOptimal}
	Let $\pi^\star_{\bm{\mu},T}$ be the \emph{oracle constant} policy:
	\begin{align*}
		\pi^\star_{\bm{\mu},T}(t) \in \arg \max_{i \in [K]} \Bigg\{ \sum_{l \in [T]} \mu_i(l)\Bigg\}, \quad \forall t \in [T].
	\end{align*}
	Then, $\pi^\star_{\bm{\mu},T}$ is optimal for the rested non-decreasing bandits (\ie under Assumption~\ref{ass:incr}). We will denote with $\pi^\star_{\bm{\mu},T}(t) \equiv : i^\star(T)$ the optimal constant arm.
\end{restatable}
The result holds under the non-decreasing property (Assumption~\ref{ass:incr}) only, without requiring concavity (Assumption~\ref{ass:decrDeriv}). However, this policy cannot be computed in practice as it requires knowing the full function $\mu_i(\cdot)$ in advance. 

\clearpage
\section{Lower Bounds for Regret Minimization in Stochastic Rising Bandits}\label{sec:lbs}

In this section, we provide a set of regret lower bounds highlighting the inherent and statistical challenges of
the regret minimization problem in stochastic rising bandits.
At first, in Section~\ref{sec:prelimRes}, we provide
a technical result extending for our setting the classical result that rewrites the regret in terms of the number of pulls for the suboptimal arms. Then, in Section~\ref{sec:restedNon}, we
discuss some non-learnability results for deterministic rising bandits and, in Section~\ref{sec:stochrestlow}, extend these results for the stochastic case. Finally, in Section~\ref{sec:ups}, we show a lower bound that highlights the dependence on the
complexity term $\Upsilon_{\bm{\mu}}(T,q)$.

\subsection{Regret Decomposition}\label{sec:prelimRes}
To provide a meaningful construction of the lower bounds, some additional notation is needed. Let $T \in \mathbb{N}$ be the time horizon, the arm's \emph{average expected reward} for every arm $i \in [K]$ is defined as:
\begin{align}
    \overline{\mu}_i(T) \coloneqq \frac{1}{T}\sum_{n \in [T]}\mu_i(n).
\end{align}
This allows rephrasing the optimal arm as $i^{\star}(T) \in \arg \max_{i \in [K]}  \overline{\mu}_i(T)$. Furthermore, we define the average expected regret suffered when playing arm $i \in [K]$ instead of the optimal one $i^{\star}(T)$ as follows:
\begin{align}
    \overline{\Delta}_i  \coloneqq \overline{\mu}_{i^{\star}(T)}(T)-\overline{\mu}_{i}(T).
\end{align}
In the following, for a 2-armed bandit problem, we omit the subscript and denote it just as $\overline{\Delta}$. These definitions allow obtaining a technical result that will be of crucial importance in the derivation of the regret lower bounds that holds for \textit{any} rising rested bandit.

\begin{restatable}[Regret Decomposition]{lemma}{lowerBoundWithDeltaBar}\label{lemma:Regret decomposition}
Let $T \in \mathbb{N}$ be the time horizon, $\pi$ a fixed policy, and $\bm{\mu}$ a rising rested bandit (Assumptions~\ref{ass:incr} and~\ref{ass:decrDeriv}). Then, it holds that:
\begin{align}
    R_{\bm{\mu}}(\pi,T) \ge \sum_{i \neq i^{\star}(T)} \overline{\Delta}_i \mathbb{E}_{\bm{\mu},\pi}[N_{i,T}].
\end{align}
\end{restatable}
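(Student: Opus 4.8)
The plan is to relate the cumulative reward $J_{\bm{\mu}}(\pi,T)$ and the optimal value $J^*_{\bm{\mu}}(T)$ to the number of pulls $N_{i,T}$ of each arm, exploiting two facts specific to the rising rested setting: the optimal policy is the oracle constant arm (Theorem~\ref{thr:thrRestedOptimal}), and the reward collected from any arm $i$ in the first $n$ pulls is at least $n\,\overline{\mu}_i(T)$ whenever $n \le T$, because concavity forces the \emph{average} of the first $n$ values of $\mu_i(\cdot)$ to be non-increasing in $n$ and hence at least $\overline{\mu}_i(T)$. That is the one genuinely new ingredient; everything else is bookkeeping.

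First I would write $J_{\bm{\mu}}^*(T) = \sum_{t\in[T]}\mu_{i^\star(T)}(t) = T\,\overline{\mu}_{i^\star(T)}(T)$, using Theorem~\ref{thr:thrRestedOptimal}. Next, for the policy $\pi$, since a realized history uses arm $i$ exactly $N_{i,T}$ times and those pulls see the rewards $\mu_i(1),\dots,\mu_i(N_{i,T})$ in order, one has pathwise
\begin{align*}
\sum_{t\in[T]}\mu_{I_t}(N_{I_t,t}) = \sum_{i\in[K]} \sum_{n=1}^{N_{i,T}} \mu_i(n) = \sum_{i\in[K]} N_{i,T}\,\overline{\mu}_i(N_{i,T}),
\end{align*}
where $\overline{\mu}_i(m) \coloneqq \frac1m\sum_{n=1}^m \mu_i(n)$ is the running average. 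The key step is then the claim that for every $i$ and every $m$ with $1\le m\le T$ we have $\overline{\mu}_i(m) \ge \overline{\mu}_i(T)$: this follows from Assumption~\ref{ass:decrDeriv}, since concavity of $\mu_i$ makes $m\mapsto \overline{\mu}_i(m)$ non-increasing (the new term added when passing from $m$ to $m+1$ is $\mu_i(m+1)\le \overline{\mu}_i(m)$ by concavity/monotonicity of increments). Hence, since $\sum_i N_{i,T} = T$,
\begin{align*}
\sum_{i\in[K]} N_{i,T}\,\overline{\mu}_i(N_{i,T}) \ge \sum_{i\in[K]} N_{i,T}\,\overline{\mu}_i(T),
\end{align*}
and subtracting from $T\,\overline{\mu}_{i^\star(T)}(T) = \sum_i N_{i,T}\,\overline{\mu}_{i^\star(T)}(T)$ gives the pathwise bound $\sum_i N_{i,T}\,\overline{\Delta}_i$, where the $i^\star(T)$ term vanishes. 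Taking expectations over histories and noting $\overline{\Delta}_{i^\star(T)} = 0$ yields $R_{\bm{\mu}}(\pi,T) \ge \sum_{i\neq i^\star(T)} \overline{\Delta}_i\,\mathbb{E}_{\bm{\mu},\pi}[N_{i,T}]$.

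The main obstacle is establishing and correctly invoking the monotonicity of the running average $\overline{\mu}_i(\cdot)$ and making sure the inequality points the right way for \emph{every} arm simultaneously, including the case where some $N_{i,T} > $ the number of pulls at which $\mu_i$ effectively saturates; concavity (Assumption~\ref{ass:decrDeriv}) is exactly what is needed here and is why the statement requires it, not just Assumption~\ref{ass:incr}. A minor subtlety is that $N_{i,T}$ is random and possibly $0$ for some arms, but the convention $\overline{\mu}_i(0)\cdot 0 = 0$ and the pathwise identity handle this cleanly, so after taking expectations the bound follows with no concentration argument needed.
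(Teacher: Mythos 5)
Your overall architecture is the same as the paper's (write the realized regret as $\sum_i N_{i,T}\overline{\mu}_{i^\star(T)}(T)-\sum_i\sum_{n=1}^{N_{i,T}}\mu_i(n)$ and compare prefix sums of $\mu_i$ to $N_{i,T}\,\overline{\mu}_i(T)$), but your key step is stated in the wrong direction and its justification is false. You claim that $m\mapsto\overline{\mu}_i(m)$ is non-increasing because ``$\mu_i(m+1)\le\overline{\mu}_i(m)$ by concavity.'' For a non-decreasing sequence the opposite holds: $\mu_i(m+1)\ge\mu_i(n)$ for all $n\le m$, hence $\mu_i(m+1)\ge\overline{\mu}_i(m)$ and the running average is \emph{non-decreasing} (take $\mu_i(n)=n/T$ as a counterexample to your claim). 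Consequently the correct inequality is $\overline{\mu}_i(N_{i,T})\le\overline{\mu}_i(T)$, i.e., the reward collected in the first $n$ pulls is \emph{at most} $n\,\overline{\mu}_i(T)$, not at least. With the inequality as you wrote it, you lower-bound $J_{\bm{\mu}}(\pi,T)$ and therefore, after subtracting from $J^*_{\bm{\mu}}(T)$, you obtain $R_{\bm{\mu}}(\pi,T)\le\sum_i\overline{\Delta}_i\,\mathbb{E}[N_{i,T}]$ — an upper bound, which is not what the lemma asserts.

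The fix is simply to reverse the inequality: by Assumption~\ref{ass:incr} alone, the prefix average satisfies $\overline{\mu}_i(m)\le\overline{\mu}_i(T)$ for $m\le T$, so $J_{\bm{\mu}}(\pi,T)\le\sum_i N_{i,T}\,\overline{\mu}_i(T)$ pathwise (and, on the optimal-arm side, the paper equivalently lower-bounds the suffix sum $\sum_{n=N_{1,T}+1}^{T}\mu_1(n)\ge(T-N_{1,T})\overline{\mu}_1(T)$). Subtracting then gives $R_{\bm{\mu}}(\pi,T)\ge\sum_{i\ne i^\star(T)}\overline{\Delta}_i\,\mathbb{E}[N_{i,T}]$, which is exactly the paper's proof. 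Note also that concavity (Assumption~\ref{ass:decrDeriv}) plays no role here — monotonicity suffices — so your diagnosis that concavity is ``exactly what is needed'' is misplaced as well.
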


Lemma~\ref{lemma:Regret decomposition} establishes a relationship between the regret suffered in the original rising bandit and the regret suffered in a standard bandit by any policy that plays the arms the same expected number of times  $\mathbb{E}_{\bm{\mu},\pi}[N_{i, T}]$ and in which the arms have as expected rewards the values $\overline{\mu}_i(T)$. As we shall see in the next sections, this result allows for simplifying the computation of the expected regret in the lower bound construction.

\subsection{Lower Bounds for the Deterministic Setting}\label{sec:restedNon} 
We now prove a result highlighting the challenges of the non-decreasing rested bandits. We show that with no assumptions on the expected reward $\mu_i(n)$ (\eg concavity), it is impossible to devise a no-regret algorithm.

\begin{restatable}[Non-Learnability under Assumption~\ref{ass:incr}]{thr}{nonLearnable}\label{thr:nonLearnable}
	Let $\mathcal{M}_1(\gamma_{\max})$ be the set of $2$-armed non-decreasing (Assumption~\ref{ass:incr}) deterministic rested bandit with $\gamma_i(n) \le \gamma_{\max} \le 1$ for all $i \in [K]$ and $n \in \Nat$. For every learning policy $\pi$, it holds that:
	\begin{align*}
		\sup_{\bm{\mu} \in \mathcal{M}_1(\gamma_{\max})} R_{\bm{\mu}}(\pi, T) \ge \left\lfloor \frac{\gamma_{\max}}{12} T \right\rfloor. 
	\end{align*}
\end{restatable}

Some observations are due to provide an intuition of the result. To find a regret lower bound it is necessary to select (at least) two bandit problems balancing two conflicting requirements: (i) the instances should be built so that a sequence of actions that is good for one bandit is not good for the other, and (ii)  we shall ask for this instances to be sufficiently similar so that the policy cannot statistically identify the true bandit with
reasonable statistical accuracy.
As illustrated in Figure~\ref{fig:instance1}, it is possible to design two 2-armed instances ($A$ and $B$) that are \emph{identical} up to a number of pulls \emph{linearly proportional} to the time horizon $T$ in a way that the minimum error we can collect at every pull, depending on $\overline{\Delta} = \Omega(1)$ (as highlighted by Lemma~\ref{lemma:Regret decomposition}), is a non-decreasing function of $T$.

The intuition behind this result is that if we enforce no condition on the increment $\gamma_i(n)$, we cannot predict how much the arm expected reward will increase in the future. Therefore, we face the dilemma of whether or not to pull an arm that is currently believed to be suboptimal, hoping its expected reward will increase. If we decide to pull it and its expected reward will not actually increase, or if we decide not to pull it and its expected reward will actually increase, becoming optimal, we will suffer linear regret.

\begin{figure}[t]
\begin{center}
\resizebox{\textwidth}{!}{
\begin{tikzpicture}[
  declare function={
    func(\x)= (\x < 3) * (0) + (\x >= 3) * (1);
  }
]
\begin{groupplot}[
  group style={
    group size=2 by 1,
    vertical sep=0pt,
    horizontal sep=3cm,
    group name=G},
     width=7cm,height=5cm,
  axis lines=middle,
  legend style={at={(1.2,1)},anchor=north,legend cell align=left} 
  ]
  \nextgroupplot[xmin=0, xmax=13, ymax=1.1, ymin=-0.02, domain=0:12, samples=1000, xtick = {0,3,12}, xticklabels = {$0$, $\lfloor\frac{T}{3}\rfloor$, $T$}, xlabel={$n$}]
  \addplot[red,  ultra thick] (x,.5);\addlegendentry{$\mu^A_1$}
  \addplot[blue,  ultra thick] {func(x)};\addlegendentry{$\mu^A_2$}
  \addplot[blue,  ultra thick, densely dashed] (x,0.33);\addlegendentry{$\overline{\mu}^A_2$}
  \draw [ <->, >=Stealth] (1,0.333) -- (1,0.5) node[pos=0.5,right]{\footnotesize $\overline{\Delta}$};
  
  \nextgroupplot[xmin=0, xmax=13, ymax=1.1, ymin=-0.02, domain=0:12, samples=1000, xtick = {0,12}, xticklabels = {$0$, $T$},xlabel={$n$}]
  \addplot[red,  ultra thick] (x,.5);\addlegendentry{$\mu^B_1$}
  \addplot[blue, ultra thick] (x,0);\addlegendentry{$\mu^B_2$}
  \addlegendentry{$\mu^B_1$}
  \draw [ <->, >=Stealth] (1,0) -- (1,0.5) node[pos=0.5,right]{\footnotesize $\overline{\Delta}$};
\end{groupplot}
\end{tikzpicture}
}
\end{center}
\caption{The two 2-armed instances (A on the left, B on the right) of rested bandits with non-decreasing expected rewards used in the proof of Theorem~\ref{thr:nonLearnable}.}\label{fig:instance1}
\end{figure}
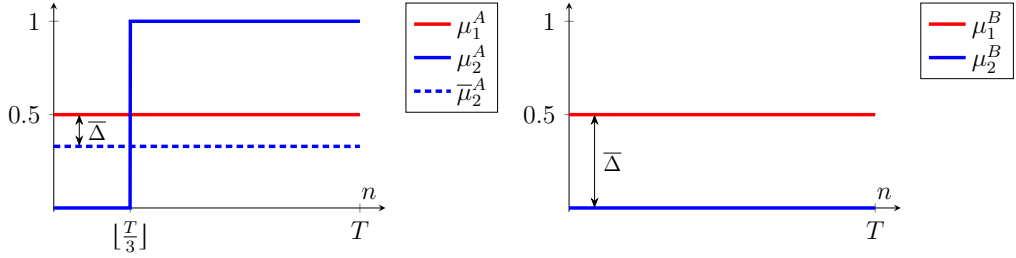

Theorem~\ref{thr:nonLearnable} highlights the importance of Assumption~\ref{ass:decrDeriv} regarding concavity. However, the following result shows that, even for concave expected rewards it is not possible to design no-regret algorithms.

\begin{restatable}[Non-Learnability under Assumptions~\ref{ass:incr} and~\ref{ass:decrDeriv}]{thr}{nonLearnableLinear}\label{thr:nonLearnable2}
	Let $\mathcal{M}_2$ be the set of $2$-armed rising (Assumptions~\ref{ass:incr} and~\ref{ass:decrDeriv}) deterministic rested bandit. For every learning policy $\pi$, it holds that:
	\begin{align*}
		\sup_{\bm{\mu} \in \mathcal{M}_2} R_{\bm{\mu}}(\pi, T) \ge \left\lfloor  \frac{T}{64} \right\rfloor. 
	\end{align*}
\end{restatable}

Theorem~\ref{thr:nonLearnable2} illustrates that there exist instances of the rising rested bandit for which the regret is at least linear. Figure~\ref{fig:instance2} illustrates how, even when concavity is present, the linearity of the expected reward function $\mu_i(\cdot)$ does not provide a sufficiently strong structure to predict the future behaviour of the arms effectively. Indeed, instances $A$ and $B$ are identical up to $\lfloor \frac{T}{2} \rfloor$ but, then, arm $2$ changes its behavior and becomes optimal for instance $B$ (while for instance $A$, the optimal arm is $1$). Intuitively, even if concavity is present, to distinguish the instances both arms have to be pulled a number of times that is \emph{linearly proportional} to $T$.

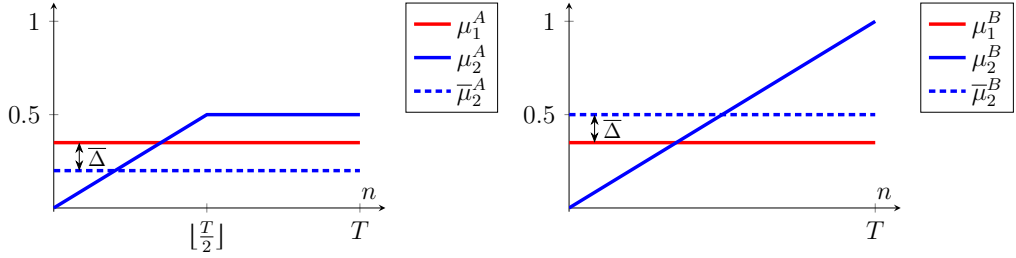
\begin{figure}[t]
\begin{center}
\resizebox{\textwidth}{!}{
\begin{tikzpicture}[
  declare function={
    func(\x)= (\x < 6) * (0.0833 * \x) + (\x >= 6) * (.5);
  }
]
\begin{groupplot}[
  group style={
    group size=2 by 1,
    vertical sep=0pt,
    horizontal sep=3cm,
    group name=G},
     width=7cm,height=5cm,
  axis lines=middle,
  legend style={at={(1.2,1)},anchor=north,legend cell align=left} 
  ]
  \nextgroupplot[xmin=0, xmax=13, ymax=1.1, ymin=-0.02, domain=0:12, samples=1000, xtick = {0,6,12}, xticklabels = {$0$, $\lfloor\frac{T}{2}\rfloor$, $T$}, xlabel={$n$}]
  \addplot[red,  ultra thick] (x,0.35);\addlegendentry{$\mu^A_1$}
  \addplot[blue,  ultra thick] {func(x)};\addlegendentry{$\mu^A_2$}
  \addplot[blue,  ultra thick, densely dashed] (x,0.2);\addlegendentry{$\overline{\mu}^A_2$}
  \draw [ <->, >=Stealth] (1,0.2) -- (1,0.35) node[pos=0.5,right]{\footnotesize $\overline{\Delta}$};
  
  \nextgroupplot[xmin=0, xmax=13, ymax=1.1, ymin=-0.02, domain=0:12, samples=1000, xtick = {0,12}, xticklabels = {$0$, $T$},xlabel={$n$}]
  \addplot[red,  ultra thick] (x,0.35);\addlegendentry{$\mu^B_1$}
  \addplot[blue, ultra thick] (x,0.0833*x);\addlegendentry{$\mu^B_2$}
  \addplot[blue,  ultra thick, densely dashed] (x,0.5);\addlegendentry{$\overline{\mu}^B_2$}
    \draw [ <->, >=Stealth] (1,0.35) -- (1,0.5) node[pos=0.5,right]{\footnotesize $\overline{\Delta}$};
\end{groupplot}
\end{tikzpicture}
}
\end{center}
\caption{The two 2-armed instances (A on the left, B on the right) of rested bandits with non-decreasing expected rewards used in the proof of Theorem~\ref{thr:nonLearnable2}.}\label{fig:instance2}
\end{figure}

The following corollary shows under which conditions the problem can be learnable, unveiling what factors contribute to the complexity of the rising rested problem.

\begin{restatable}[Learnability under Assumptions~\ref{ass:incr} and~\ref{ass:decrDeriv}]{coroll}{LearnableLinearbeta}\label{thr:Learnablebeta}
	Let  $\beta \in [0,1)$ and $\mathcal{M}_{2,\beta}$ be the set of $2$-armed rising (Assumptions~\ref{ass:incr} and~\ref{ass:decrDeriv}) deterministic rested bandit that can be identical only up to $\lfloor T^{\beta} \rfloor \le \frac{T}{2}$ pull. For every learning policy $\pi$, it holds that:
	\begin{align*}
		\sup_{\bm{\mu} \in \mathcal{M}_{2,\beta}} R_{\bm{\mu}}(\pi, T) \ge  \left\lfloor \frac{T^{\beta}}{32} \right\rfloor. 
	\end{align*}
\end{restatable}

This corollary follows the same ideas as Theorem~\ref{thr:nonLearnable2} with the only difference that the additional parameter $\beta$
controls the maximum round in which the two instances $A$ and $B$ are indistinguishable, i.e., $\lfloor T^\beta \rfloor$, as highlighted in Figure \ref{fig:instance4}. The lower bound highlights how the regret scales proportionally to $\lfloor T^\beta \rfloor$ unveiling that regret minimization
in deterministic rising bandits is possible when $\beta < 1$.

\begin{figure}[t]
	\begin{center}
		\resizebox{\textwidth}{!}{
			\begin{tikzpicture}[
				declare function={
					func(\x)= (\x < 5) * (0.0833 * \x) + (\x >= 5) * (.4165);
				}
				]
				\begin{groupplot}[
					group style={
						group size=2 by 1,
						vertical sep=0pt,
						horizontal sep=3cm,
						group name=G},
					width=7cm,height=5cm,
					axis lines=middle,
					legend style={at={(1.2,1)},anchor=north,legend cell align=left} 
					]
					\nextgroupplot[xmin=0, xmax=13, ymax=1.1, ymin=-0.02, domain=0:12, samples=1000, xtick = {0,5,12}, xticklabels = {$0$, $\lfloor T^\beta \rfloor$, $T$}, xlabel={$n$}]
					\addplot[red,  ultra thick] (x,0.33);\addlegendentry{$\mu^A_1$}
					\addplot[blue,  ultra thick] {func(x)};\addlegendentry{$\mu^A_2$}
					\addplot[blue,  ultra thick, densely dashed] (x,0.175);\addlegendentry{$\overline{\mu}^A_2$}
					\draw [ <->, >=Stealth] (1,0.175) -- (1,0.33) node[pos=0.5,right]{\footnotesize $\overline{\Delta}$};
					
					\nextgroupplot[xmin=0, xmax=13, ymax=1.1, ymin=-0.02, domain=0:12, samples=1000, xtick = {0,12}, xticklabels = {$0$, $T$},xlabel={$n$}]
					\addplot[red,  ultra thick] (x,0.33);\addlegendentry{$\mu^B_1$}
					\addplot[blue, ultra thick] (x,0.0833*x);\addlegendentry{$\mu^B_2$}
					\addplot[blue,  ultra thick, densely dashed] (x,0.5);\addlegendentry{$\overline{\mu}^B_2$}
					\draw [ <->, >=Stealth] (1,0.33) -- (1,0.5) node[pos=0.5,right]{\footnotesize $\overline{\Delta}$};
				\end{groupplot}
			\end{tikzpicture}
		}
	\end{center}
	\caption{The two 2-armed instances (A on the left, B on the right) of rested bandits with non-decreasing expected rewards used in the proof of Corollary~\ref{thr:Learnablebeta}.}\label{fig:instance4}
\end{figure}
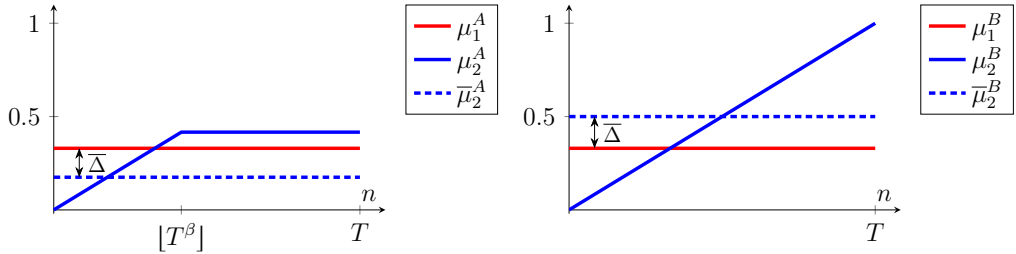

\subsection{Lower Bounds for the Stochastic Setting}\label{sec:stochrestlow}
If we move to the analysis of the problem in the stochastic setting, we have the following general minimax lower bound.
\begin{restatable}[Lower Bound under Assumptions~\ref{ass:incr} and~\ref{ass:decrDeriv}]{thr}{apprendiamopoco}\label{thr:apprendiamopoco}
	Let $\beta \in [0,1)$ and let $\mathcal{N}_{2,\beta}$ be the set of $2$-armed rising (Assumptions~\ref{ass:incr} and~\ref{ass:decrDeriv}) rested bandit with Normal reward and unit variance that can be identical up to $\lfloor T^{\beta} \rfloor \le \frac{T}{2}$ pulls. For every learning policy $\pi$, it holds that:
	\begin{align*}
		\sup_{\bm{\mu} \in \mathcal{N}_{2,\beta}} R_{\bm{\mu}}(\pi, T) \ge  \frac{\lfloor T^{\beta} + T^{\frac{2}{3}} \rfloor}{64 \sqrt{e}}.
	\end{align*}
\end{restatable}
The proof of Theorem~\ref{thr:apprendiamopoco} uses the same instances provided in Figure~\ref{fig:instance4} we used for Corollary~\ref{thr:Learnablebeta}.
Notice that the case for $\beta = 1$ is still covered by Theorem~\ref{thr:nonLearnable2} so that, even in the stochastic setting, the problem is unlearnable. The theorem also sheds light on the additional source of complexity given by the stochastic nature of the realizations. Indeed, delving deeper into the result, we see that in the stochastic case, due to the new source of error, an additional term of order $T^{\frac{2}{3}}$ arises. This implies that even in the simplest settings, i.e., when $\beta = 0$, differently from what would happen in the deterministic setting, we have at least a regret of order $T^{\frac{2}{3}}$.

\subsection[Upsilon-Dependent Regret Lower Bound]{$\Upsilon_{\bm{\mu}}$-Dependent Regret Lower Bound}\label{sec:ups}

In this section, we derive an instance-dependent regret lower bound, where the instance is described by the cumulative increment function $\Upsilon_{\bm{\mu}}(\cdot, \cdot)$.

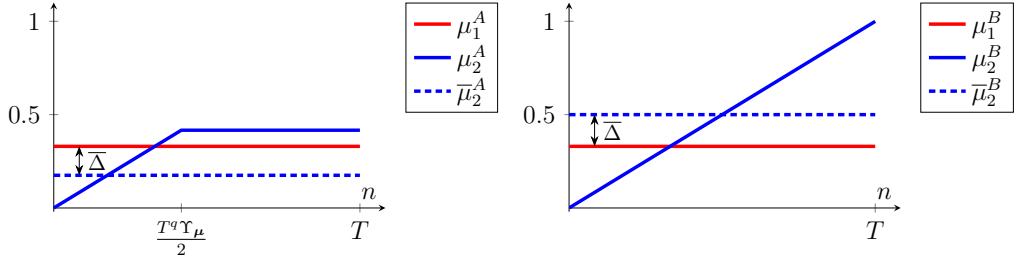
\begin{figure}[t]
	\begin{center}
		\resizebox{\textwidth}{!}{
			\begin{tikzpicture}[
				declare function={
					func(\x)= (\x < 5) * (0.0833 * \x) + (\x >= 5) * (.4165);
				}
				]
				\begin{groupplot}[
					group style={
						group size=2 by 1,
						vertical sep=0pt,
						horizontal sep=3cm,
						group name=G},
					width=7cm,height=5cm,
					axis lines=middle,
					legend style={at={(1.2,1)},anchor=north,legend cell align=left} 
					]
					\nextgroupplot[xmin=0, xmax=13, ymax=1.1, ymin=-0.02, domain=0:12, samples=1000, xtick = {0,5,12}, xticklabels = {$0$, $\frac{T^q\Upsilon_{\bm{\mu}}}{2}$, $T$}, xlabel={$n$}]
					\addplot[red,  ultra thick] (x,0.33);\addlegendentry{$\mu^A_1$}
					\addplot[blue,  ultra thick] {func(x)};\addlegendentry{$\mu^A_2$}
					\addplot[blue,  ultra thick, densely dashed] (x,0.175);\addlegendentry{$\overline{\mu}^A_2$}
					\draw [ <->, >=Stealth] (1,0.175) -- (1,0.33) node[pos=0.5,right]{\footnotesize $\overline{\Delta}$};
					
					\nextgroupplot[xmin=0, xmax=13, ymax=1.1, ymin=-0.02, domain=0:12, samples=1000, xtick = {0,12}, xticklabels = {$0$, $T$},xlabel={$n$}]
					\addplot[red,  ultra thick] (x,0.33);\addlegendentry{$\mu^B_1$}
					\addplot[blue, ultra thick] (x,0.0833*x);\addlegendentry{$\mu^B_2$}
					\addplot[blue,  ultra thick, densely dashed] (x,0.5);\addlegendentry{$\overline{\mu}^B_2$}
					\draw [ <->, >=Stealth] (1,0.33) -- (1,0.5) node[pos=0.5,right]{\footnotesize $\overline{\Delta}$};
				\end{groupplot}
			\end{tikzpicture}
		}
	\end{center}
\caption{The two 2-armed instances (A on the left, B on the right) of rested bandits with non-decreasing expected rewards used in the proof of Theorem~\ref{thr:nonLearnable3}.}\label{fig:instance3}
\end{figure}

\begin{restatable}[$\Upsilon_{\bm{\mu}}$-Dependent Regret Lower Bound under Assumptions~\ref{ass:incr} and~\ref{ass:decrDeriv}]{thr}{nonLearnableInstanceDep}\label{thr:nonLearnable3}
	Let $q \in [0,1]$, $\overline{\Upsilon} \ge 0$,  and $\mathcal{M}_2(\overline{\Upsilon})$ be the set of $2$-armed rising (Assumptions~\ref{ass:incr} and~\ref{ass:decrDeriv}) deterministic rested bandit such that ${\Upsilon_{\bm{\mu}}(T,q)} \ge {\overline{\Upsilon}}$. For every learning policy $\pi$, it holds that:
	\begin{align*}
		\sup_{\bm{\mu} \in \mathcal{M}_2(\overline{\Upsilon})} \frac{R_{\bm{\mu}}(\pi, T)}{\Upsilon_{\bm{\mu}}(T,q)} \ge \frac{T^q}{8}. 
	\end{align*}
\end{restatable}

Some comments are in order. First of all, the set of rising bandit instances $\mathcal{M}_2(\overline{\Upsilon})$ includes all the bandits in which the complexity term $\Upsilon_{\bm{\mu}}(T,q)$ is
larger than a constant value $\overline{\Upsilon}$. Thus, the interpretation of Theorem~\ref{thr:nonLearnable3} is that
a dependence on $\Upsilon_{\bm{\mu}}(T,q)$ is unavoidable whatever subset of deterministic rising bandits is considered 
that contains instances with ${\Upsilon_{\bm{\mu}}(T,q)} \ge {\overline{\Upsilon}}$.
The instances that are employed to obtain such a result are depicted in Figure~\ref{fig:instance3}.

%
%
%
%

\section{The \algrested Algorithm}\label{sec:alg}

\begin{algorithm}[t]
	\begin{algorithmic}
	\STATE \textbf{Input}: number of rounds $K$, optimistic indexes $(B_i)_{i \in [K]}$
	\STATE Initialize $N_{i} \leftarrow 0$ for all $i \in [K]$
	\FOR{$t \in [T]$}{
		\STATE Pull $I_t \in \argmax_{i \in [K]} \{B_i(t)\}$
		\STATE Observe $R_t \sim \nu_{I_t}(t, N_{I_t}+1)$
		\STATE Update $B_{I_t}$ and $N_{I_t} \leftarrow N_{I_t} + 1$ 
	}
	\ENDFOR
	\end{algorithmic}
	\caption{\texttt{R-ed-UCB}}\label{alg:alg}
\end{algorithm}

In this section, we devise and analyze learning algorithms for rested rising bandits for both the deterministic (Section~\ref{sec:restedDet}) and stochastic  (Section~\ref{sec:restedStoc}) settings. We will present an \emph{optimistic} algorithm, \texttt{Rising Rested Upper Confidence Bounds} (\algrested),  whose structure is summarized in Algorithm~\ref{alg:alg} and parametrized by an exploration index $B_i(t)$ that will be designed case by case.

\subsection{Deterministic Setting}\label{sec:restedDet}
To progressively introduce the core ideas, we begin with the case of deterministic arms ($\sigma = 0$). 
We devise an optimistic estimator of $\mu_i(t)$, namely $\overline{\mu}_i^{\text{\red}} (t)$, having observed the exact expected rewards $(\mu_i(n))_{n=1}^{N_{i,t-1}}$. Differently from the rotting setting, these expected rewards are an underestimation of $\mu_i(t)$. Therefore, we exploit the non-decreasing assumption (Assumption~\ref{ass:incr}) to derive the identity:
%
%
\begin{equation}\label{eq:estRestedDet}
\begin{aligned}
	\mu_i(t) =  \textcolor{color11}{\underbrace{\mu_i(N_{i,t-1})}_{\text{(most recent expected reward)}}} + \textcolor{color12}{\underbrace{\sum_{n=N_{i,t-1}}^{t-1} \gamma_i(n)\textcolor{black}{.}}_{\text{(sum of future increments)}}}
\end{aligned}
\end{equation}
By exploiting the concavity (Assumption~\ref{ass:decrDeriv}), we upper bound the sum of future increments with the last experienced increment $\gamma_i(N_{i,t-1}-1)$ that is projected for the future $t-N_{i,t-1}$ pulls, leading to the following estimator:
\begin{align}\label{eq:estRestedDetEst}
\overline{\mu}_i^{\text{\red}}(t) & \coloneqq \hspace{-0.22cm} \textcolor{color11}{\underbrace{\mu_i(N_{i,t-1})}_{\text{(most recent expected reward)}}} \hspace{-0.22cm} + \textcolor{color12}{(t - N_{i,t-1}) \hspace{-0.1cm} \underbrace{ \gamma_i(N_{i,t-1}-1)\textcolor{black}{,}}_{\text{(most recent increment)}}}
\end{align}
if $N_{i,t-1} \ge 2$ else $\overline{\mu}_i^{\text{\red}}(t) \coloneqq +\infty$. Figure~\ref{fig:estConstr} illustrates the construction of the estimator. The optimism of $\overline{\mu}_i^{\text{\red}}$ and a bias bound are proved in Lemma~\ref{lemma:lemmaDetRested}.

\begin{figure}
\centering
	\includegraphics[width=.7\linewidth]{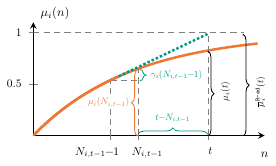}
	\caption{Graphical representation of the estimator construction $\overline{\mu}_i^{\text{\red}}(t)$ for the rested deterministic setting.}\label{fig:estConstr}
\end{figure}

%


\paragraph{Regret Analysis}
We are now ready to provide the regret analysis of \algrested, \ie Algorithm~\ref{alg:alg} when we employ as exploration index $B_i(t) \equiv \overline{\mu}_i^{\text{\red}}(t)$. 

\begin{restatable}[]{thr}{theRegretDetRested}\label{thr:theRegretDetRested}
Let $T \in \Nat$, then \algrested (Algorithm~\ref{alg:alg}) with $B_i(t) \equiv \overline{\mu}_i^{\text{\red}}(t)$ suffers an expected regret bounded, for every $q \in [0,1]$, as:
\begin{align*}
	R_{\bm{\mu}}(\text{\algrested},T) \le 2K + K T^q  \Upsilon_{\bm{\mu}}\left(\left\lceil \frac{T}{K} \right\rceil,q \right).
\end{align*}
\end{restatable}

The regret depends on a parameter $q \in [0,1]$ that can be selected to tighten the bound, whose optimal value depends on $\Upsilon_{\bm{\mu}}(\cdot,q)$, that is a function on the horizon $T$. Some examples, when $\gamma_i(t) \le l^{-c}$ for $c>0$, are reported in Figure~\ref{fig:rate}.

%
%
%

\subsection{Stochastic Setting}\label{sec:restedStoc}
Moving to the \red stochastic setting ($\sigma>0$), we cannot directly exploit the estimator in Equation~\eqref{eq:estRestedDetEst}. Indeed, we only observe the sequence of noisy rewards $(R_{t_{i,n}})_{n=1}^{N_{i,t-1}}$, where $t_{i,n} \in [T]$ is the round at which arm $i\in [K]$ was pulled for the $n$-th time. To cope with stochasticity, we need to employ an $h$-wide window made of the $h$ most recent samples, similarly to what has been proposed by~\citet{seznec2020asingle}. The choice of $h$ represents a \emph{bias-variance trade-off} between employing few recent observations (less biased), compared to many past observations (less variance). 
For $h \in [N_{i,t-1}]$, the resulting estimator $\widehat{\mu}_i^{\text{\red},h}(t)$ is given by:
\begin{align*}
	\widehat{\mu}_i^{\text{\red},h} (t)   \coloneqq \frac{1}{h}  \sum_{l=N_{i,t-1}-h+1}^{N_{i,t-1}} & \Bigg(\textcolor{color11}{\underbrace{R_{t_{i,l}}}_{\text{(estimated expected reward)}}} \hspace{-0.5 cm} + \textcolor{color12}{(t-l) \underbrace{\frac{R_{t_{i,l}} - R_{t_{i,l-h}}}{h}}_{\text{(estimated increment)}}}\Bigg),
\end{align*}
if $h \le \lfloor N_{i,t-1}/2 \rfloor$, else $\widehat{\mu}_i^{\text{\red},h} (t) \coloneqq +\infty$. The construction of the estimator is shown in Appendix~\ref{apx:prrested} and relies on the idea of averaging several estimators of the form of Equation~\eqref{eq:estRestedDetEst} instanced using as starting points different number of pulls $N_{i,t-1}-l+1$ for $l \in [h]$ and replacing the true expected reward with the corresponding reward sample. An efficient way to compute this estimator is reported in Appendix~\ref{apx:efficient}.

\paragraph{Regret Analysis}
By making use of the presented estimator, we build the following optimistic exploration index:
\begin{align*}
	& B_{i}(t) \equiv \widehat{\mu}_i^{\text{\red},h_{i,t}} (t) + \beta_i^{\text{\red},h_{i,t}}(t), \quad \text{where} \\
& \beta^{\text{\red},h_{i,t}}_i(t,\delta_t)\coloneqq \sigma (t-N_{i,t-1}+h_{i,t}-1) \sqrt{ \frac{ 10 \log \frac{1}{\delta_t} }{h_{i,t}^3} },
\end{align*}
and $h_{i,t}$ are arm-and-time-dependent window sizes and $\delta_t$ is a time-dependent confidence parameter. By choosing the window size depending linearly on the number of pulls, we are able to provide the following regret bound.

\begin{restatable}[]{thr}{theRegretRestedStochastic}\label{thr:theRegretRestedStochastic}
Let $T \in \Nat$, then \algrested (Algorithm~\ref{alg:alg}) with $B_i(t) \equiv  \widehat{\mu}_i^{\text{\red},h_{i,t}}(t) + \beta_i^{\text{\red},h_{i,t}}(t)$,  $h_{i,t} = \left\lfloor \epsilon N_{i,t-1}\right\rfloor$ for $\epsilon \in (0,1/2)$ and $\delta_t = t^{-\alpha}$ for $\alpha > 2$, suffers an expected regret bounded, for every $q \in \left[0,  1 \right]$, as:
\begin{align*}
	& R_{\bm{\mu}}(\text{\algrested},T) \hspace{-0.1 cm}\le \BigO \Bigg( \frac{K}{\epsilon} (\sigma  T)^{\frac{2}{3}} \left( \alpha  \log T \right)^{\frac{1}{3}} +   \frac{K {T}^q }{1-2\epsilon}  \Upsilon_{\bm{\mu}}\left(\left\lceil  (1-2\epsilon) \frac{T}{K} \right\rceil,q \right)\hspace{-0.2 cm}  \Bigg).
\end{align*}
\end{restatable}

\begin{figure}
	\centering
\renewcommand{\arraystretch}{2}
\resizebox{0.8\textwidth}{!}{%
\begin{tabular}{r|C{2cm}|C{2cm}|C{2cm}|C{2cm}|}
\cline{2-5}
$f(l)$ &  $e^{-cl}$ & $l^{-c}$ ($c>\frac{3}{2}$) & $l^{-c}$ ($1<c\le\frac{3}{2}$) & $l^{-c}$ ($c \le 1$) \\\cline{2-5}
  Deterministic & $\displaystyle \frac{K}{c} \log T$ & \multicolumn{2}{c|}{$\displaystyle KT^{\frac{1}{c}} \log T $} & $\displaystyle T $  \\\cline{2-5}
Stochastic & \multicolumn{2}{c|}{$\displaystyle KT^{\frac{2}{3}} (\log T)^{\frac{1}{3}} $}  & $\displaystyle KT^{\frac{1}{c}} \log T $ &  $\displaystyle T$ \\\cline{2-5}
\end{tabular}
}
\caption{Regret bounds $\widetilde{\mathcal{O}}$ rates optimized over $q$ for deterministic and stochastic rested rising bandits when $\gamma_i(l) \le l^{-c}$ for $c > 0$.}\label{fig:rate}
\end{figure}

This result deserves some comments. First, compared with the corresponding deterministic \red regret bound (Theorem~\ref{thr:theRegretDetRested}), it reflects a similar dependence of the cumulative increment $\Upsilon_{\bm{\mu}}$, although it now involves the $\epsilon$ parameter defining the window size $h_{i,t} = \lfloor \epsilon N_{i,t-1} \rfloor$.  Second, it includes an additional term of order $\widetilde{\mathcal{O}}(T^{\frac{2}{3}})$ that is due to the noise $\sigma$ presence that increases inversely \wrt the $\epsilon$.\footnote{\hl{In particular, when $\gamma_i(n)$ decreases sufficiently fast (see Table~\ref{tab:rates}), the regret is dominated by the $\widetilde{\mathcal{O}}(T^{\frac{2}{3}})$ component.}} Thus, we visualize a trade-off in the choice of $\epsilon$: larger windows ($\epsilon \approx 1$) are beneficial for the first term, but they enlarge the constant $1/(1-2\epsilon)$ multiplying the second component.

A final remark is about comparing the current work with adversarial bandits. The \red setting can be mapped to an \emph{adversarial} bandit~\cite{AuerCFS02} with an \emph{adaptive} (\ie non-oblivious) adversary. Indeed, the arm expected reward $\mu_i(N_{i,t})$ can be thought to as selected by an adversary who has access to the previous learner choices (\ie the history $\mathcal{H}_{t-1}$), specifically to the number of pulls $N_{i,t}$. However, although adversarial bandit algorithms, such as  \texttt{EXP3}~\citep{AuerCFS02} and \texttt{OSMD}~\citep{AudibertBL14}, suffer $\widetilde{{\BigO}}({\sqrt{T}})$ regret, these results are not comparable with ours. Indeed, while these correspond to guarantees on the \emph{external regret}, the regret definition we employ in Section~\ref{sec:problemSetting} is a notion of \emph{policy regret}~\citep{DekelTA12}.

\section{Numerical Simulations} \label{sec:experiments}

We numerically tested \algrested{} w.r.t.~state-of-the-art algorithms for non-stationary MABs in the \emph{rested} setting.\footnote{Details of the experimental setting are provided in Appendix~\ref{apx:experiments}. The code is available at \url{https://github.com/albertometelli/stochastic-rising-bandits}.}

We consider the following baseline algorithms: \texttt{Rexp3}~\citep{besbes2014stochastic}, a non-stationary MAB algorithm based on variation budget, \texttt{KL-UCB}~\citep{garivier2011kl}, one of the most effective stationary MAB algorithms, \texttt{Ser4}~\citep{allesiardo2017non}, which considers best arm switches during the process, and sliding-window algorithms that are generally able to deal with non-stationary restless bandit settings such as \texttt{SW-UCB}~\citep{garivier2011upper}, \texttt{SW-KL-UCB}~\citep{combes2014unimodal}, and \texttt{SW-TS}~\citep{trovo2020sliding}.
The parameters for all the baseline algorithms have been set as recommended in the corresponding papers (see also Appendix~\ref{apx:experiments}). For our algorithms, the window is set as $h_{i,t} = \lfloor \epsilon N_{i,t-1} \rfloor$ (as prescribed by Theorems~\ref{thr:theRegretRestedStochastic}). We remark that while the baseline algorithms are suited for the restless case, in the rested case, no algorithm has been designed to cope with the stochastic rising setting, provided that no knowledge on the expected reward function is available.
We compare the algorithms in terms of empirical cumulative regret $\widehat{R}_{\bm{\mu}}(\pi,t)$, which is the empirical counterpart of the expected cumulative regret $R_{\bm{\mu}}(\pi,t)$ at round $t$ averaged over multiple independent runs.

%

\begin{figure}
\centering
\includegraphics[width=0.65\textwidth]{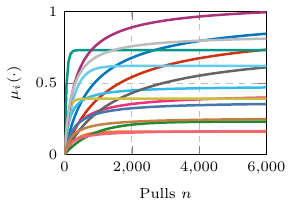}
\caption{$15$ arms setting: first $6000$ pulls of the expected reward functions $\mu_i(\cdot)$.} \label{fig:15arms_rewards}
\end{figure}

\begin{figure}
\centering
\includegraphics[width=0.9\textwidth]{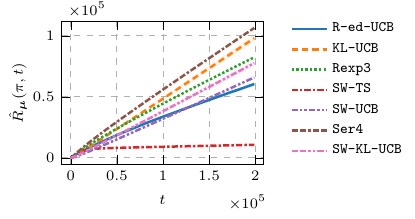}
\caption{$15$ arms setting: cumulative regret (shaded areas represents $95\%$~c.i. computed averaging over $100$ independent runs).} \label{fig:rested_15arms_regrets}
\end{figure}

\subsection{Rested setting}

We employ the same arms generated in the work by~\citet{metelli2022stochastic} to evaluate \algrested{} in the rested setting. We provide the arms expected rewards in Figure~\ref{fig:15arms_rewards} and the empirical cumulative regret of the analysed algorithms in Figure~\ref{fig:rested_15arms_regrets}. \texttt{SW-TS} is confirmed as the best algorithm at the end of the time horizon, although other algorithms (\texttt{SW-UCB} and \texttt{SW-KL-UCB}) suffer less regret at the beginning of learning. \algrested pays the price of the initial exploration, but at the end of the horizon, it manages to achieve the second-best performance. Notice that, besides \algrested, all other baseline algorithms are designed for the restless setting and are not endowed with any guarantee on the regret in the rested scenario.

To highlight this fact, we designed a particular $2$-arms rising rested bandit in which the optimal arm reveals only when pulled a sufficient number of times (linear in $T$). The expected reward functions, fulfilling Assumptions~\ref{ass:incr} and~\ref{ass:decrDeriv}, are shown in Figure~\ref{fig:2arms_rewards} and the algorithms empirical regrets in Figure~\ref{fig:rested_2arms_regrets}. Note that in this setting the expected (instantaneous) regret may be negative for $t < \frac{19T}{400}$, and this is the case for most of the algorithms for $t < 20,000$. While for the first $\approx 20,000$ rounds \algrested{} is on par with the other algorithms, it outperforms all the other policies over a longer run. Note that the regret for \texttt{Rexp3} and \texttt{Ser4} is decreasing the slope for $t > 40,000$, meaning that they are somehow reacting to the change in the reward of the two arms. \texttt{SW-TS} starts reacting even later, at around $t \approx 100,000$. However, they are not prompt to detect such a change in the rewards and, therefore, collect a large regret in the first part of the learning process. The other algorithms suffer a linear regret at the end of the time horizon since they do not employ forgetting mechanisms or because the sliding window should be tuned knowing the characteristics of the expected reward.

\begin{figure}
\centering
\includegraphics[width=0.6\textwidth]{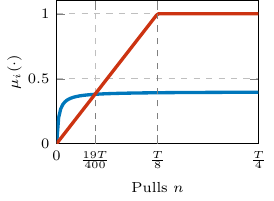}
\caption{$2$ arms setting: expected reward functions $\mu_i(\cdot)$ up to $T/4$.} \label{fig:2arms_rewards}
\end{figure}

\begin{figure}
\centering
\includegraphics[width=0.9\textwidth]{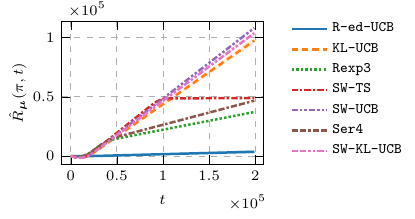}
\caption{$2$ arms setting: cumulative regret (shaded areas represents $95\%$ c.i.~computed averaging over $100$ independent runs).} \label{fig:rested_2arms_regrets}
\end{figure}

\begin{figure}
\centering
\includegraphics[width=0.8\textwidth]{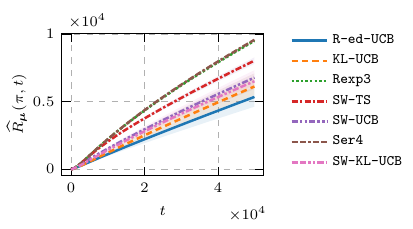} 
\caption{IMDB dataset setting: cumulative regret (shaded areas represents $95\%$~c.i. computed averaging over $30$ independent runs).}\label{fig:imdb}
\end{figure}  

\subsection{IMDB dataset (rested)} \label{sec:IMDB}
We investigate the performance of \algrested{} on an \emph{online model selection} task for a real-world dataset. We employ the IMDB dataset, made of $50,000$ reviews of movies (scores from $0$ to $10$). We preprocessed the data as done by~\citet{maas2011learning} to obtain a binary classification problem. Each review $\bm{x}_t$ lies in a $d = 10,000$ dimensional feature space, where each feature is the frequency of the most common English words. Each arm corresponds to a different online optimization algorithm, \ie two of them are Online Logistic Regression algorithms with different learning rate schemes, and the other five are Neural Networks with different topologies. We provide additional information on the arms of the bandit in Appendix~\ref{apx:imdb}.
At each round, a sample $\bm{x}_t$ is randomly selected from the dataset, a reward of $1$ is generated for correct classification, $0$ otherwise, and the online update step is performed for the chosen algorithm. 

The empirical regret is plotted in Figure~\ref{fig:imdb}. We can see that \algrested{}, with $\epsilon = 1/32$, outperforms the considered baselines. Compared to the synthetic simulations, the smaller window choice is justified by the fact that we need to take into account that the average learning curves of the classification algorithms are not guaranteed to be non-decreasing nor concave on a single run. However, keeping the window linear in $N_{i,t-1}$ is crucial for the regret guarantees of Theorem~\ref{thr:theRegretRestedStochastic}.

\section{Conclusions and Future Works}

This paper studied the MAB problem when the expected rewards are non-decreasing functions that evolve when pulling the corresponding arm, a.k.a.~rested bandit setting. We showed that an assumption on the expected reward (\eg concavity) is essential to make the problem learnable. However, we also showed that, in the most general case, the rising rested concave environment is still unlearnable and we inferred under which conditions it is possible to obtain a sublinear lower bound on the regret. We presented a novel algorithm that suitably employs the concavity assumption to build proper estimators for the setting. The algorithm is proven to suffer a regret made of a first instance-independent component of $\widetilde{\mathcal{O}}(T^{\frac{2}{3}})$ and an instance-dependent component involving the cumulative increment function $\Upsilon_{\bm{\mu}}(\cdot,q)$. For the rested setting, ours represents the first no-regret algorithm for the stochastic rising bandits. The experimental evaluation confirmed our theoretical findings showing advantages over state-of-the-art algorithms designed for non-stationary bandits.

Among the many interesting future lines of research starting from this work, we mention the possibility of designing effective algorithms if the increasing nature of the problem is somehow constrained and the study of cases in which the arms reward presents a structure among them. The former case includes the cases in which the increment is known to the learner, or we have stricter conditions than the concavity, \eg strong concavity. In such a setting, it is an open problem to show the lower bounds dependence and the design of algorithms explicitly exploiting such an additional structure. In the latter case, the research may consider standard structure among arms, \eg linear bandit or Gaussian process bandit structure, to exploit in a more efficient way the information coming from the entire set of arms, while still considering the rising nature of the problem.

\section*{Acknowledgments} 
This paper is supported by FAIR (Future Artificial Intelligence Research) project, funded by the NextGenerationEU program within the PNRR-PE-AI scheme (M4C2, Investment 1.3, Line on Artificial Intelligence).

\bibliography{biblio}
\bibliographystyle{elsarticle-num-names}

\newpage
\appendix

\setlength{\abovedisplayskip}{12pt}
\setlength{\belowdisplayskip}{12pt}
\setlength{\textfloatsep}{20pt}

\onecolumn 

\section{Proofs and Derivations}
In this section, we provide the proof of the results presented in the main paper.

\subsection{Proofs of Section~\ref{sec:problemSetting}}
\thrRestedOptimal*
\begin{proof}
	The proof is reported in Proposition 1 of~\cite{heidari2016tight}.
\end{proof}

\subsection{Proofs of Section~\ref{sec:lbs}}\label{apx:prrested}

\begin{lemma}\label{lemma:nonLearnableLemma}
	In the noiseless ($\sigma = 0$) setting, there exists a $2$-armed non-increasing non-concave bandit such that any learning policy $\pi$ suffers regret:
	\begin{align*}
		R_{\bm{\mu}}(\pi, T) \ge \left\lfloor \frac{T}{12} \right\rfloor.
	\end{align*}
\end{lemma}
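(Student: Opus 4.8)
The statement is the noiseless analogue of Theorem~\ref{thr:nonLearnable} specialised to $\gamma_{\max}=1$, so I would construct exactly the two 2-armed instances depicted in Figure~\ref{fig:instance1}. Set arm~$1$ constant: $\mu_1(n)=1/2$ for all $n$ in both instances. In instance~$A$, let arm~$2$ be $0$ for the first $\lfloor T/3\rfloor$ pulls and then jump to $1$ (this is non-concave but satisfies Assumption~\ref{ass:incr} with $\gamma_2\le 1$); in instance~$B$, let arm~$2$ be identically $0$. Both instances are \emph{deterministic}, and since the learner only observes the reward of the pulled arm, the two instances produce identical feedback to any policy $\pi$ until the moment (if ever) that arm~$2$ has been pulled $\lfloor T/3\rfloor$ times. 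So the behaviour of $\pi$ on $A$ and on $B$ coincides up to that point; in particular the number of pulls $N_{2,t}$ of arm~$2$ evolves identically on the two instances until it reaches $\lfloor T/3\rfloor$.

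The crux is then a case split on $\pi$'s behaviour on the common-feedback trajectory. Either (i) $\pi$ pulls arm~$2$ strictly fewer than $\lfloor T/3\rfloor$ times over the whole horizon (on this shared trajectory), or (ii) it pulls arm~$2$ at least $\lfloor T/3\rfloor$ times. In case~(i), the trajectory is the same on $A$ and $B$ and arm~$2$ never reaches its jump on instance $A$, so on instance $A$ the policy pulls the constant-$1/2$ arm~$1$ on at least $T-\lfloor T/3\rfloor+1 > 2T/3$ rounds while the optimal policy is the oracle constant arm; I would check via Theorem~\ref{thr:thrRestedOptimal} that on instance $A$ the optimal constant arm is arm~$2$ (its total reward is about $T-\lfloor T/3\rfloor\approx 2T/3$, versus $T/2$ for arm~$1$), and using the regret decomposition of Lemma~\ref{lemma:Regret decomposition} with $\overline{\Delta}$ of order $1/6$ and $\mathbb{E}[N_{1,T}]\gtrsim 2T/3$ this yields regret at least $\lfloor T/12\rfloor$ on $A$. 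In case~(ii), I look instead at instance~$B$: the policy has pulled the identically-$0$ arm~$2$ at least $\lfloor T/3\rfloor$ times, while on $B$ the optimal constant arm is arm~$1$ with gap $\overline\Delta=1/2$ per pull of arm~$2$; Lemma~\ref{lemma:Regret decomposition} then gives regret at least $\frac12\lfloor T/3\rfloor\ge\lfloor T/12\rfloor$ on $B$. Either way $\max\{R_{\bm{\mu}^A}(\pi,T),R_{\bm{\mu}^B}(\pi,T)\}\ge\lfloor T/12\rfloor$, which proves the lemma.

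\textbf{Main obstacle.} The only delicate points are bookkeeping: (a) making sure the constants line up so that \emph{both} branches deliver the same floor $\lfloor T/12\rfloor$ — this is why the jump is placed at $\lfloor T/3\rfloor$ and the level of arm~$1$ at $1/2$, giving $\overline\Delta\approx 1/6$ in branch (i) against $\gtrsim 2T/3$ pulls, and $\overline\Delta=1/2$ in branch (ii) against $\gtrsim T/3$ pulls; (b) correctly computing the average expected rewards $\overline\mu_i(T)$ on each instance to identify $i^\star(T)$ via Theorem~\ref{thr:thrRestedOptimal} and to apply Lemma~\ref{lemma:Regret decomposition}; and (c) the determinism argument that the shared-feedback trajectory is genuinely identical on $A$ and $B$ up to the relevant stopping time, so that the dichotomy on $N_{2,\cdot}$ is well defined. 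None of these is conceptually hard; the subtlety is purely in choosing the numerical constants so the two cases balance.
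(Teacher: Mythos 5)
Your construction is exactly the paper's (the two instances of Figure~\ref{fig:instance1}), and your proof is correct, but the argument you run on top of it is organized differently. The paper does not use a binary case split: it introduces $n_1 \coloneqq N_{1,\lfloor T/3\rfloor}$, which is common to both instances because the feedback coincides over the first $\lfloor T/3\rfloor$ rounds, computes the value $J_{\bm{\mu}}(\pi,T)$ of each instance explicitly as a function of $n_1$ and $N_{1,T}$, and obtains $R_{\bm{\mu}^A}\ge n_1/2$ and $R_{\bm{\mu}^B}\ge \lceil T/6\rceil - n_1/2$; minimizing the max over $n_1\in[0,\lfloor T/3\rfloor]$ gives $\tfrac12\lceil T/6\rceil\ge\lfloor T/12\rfloor$. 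You instead exploit the coupling more aggressively: since the instances are deterministic and diverge only at the $(\lfloor T/3\rfloor+1)$-th pull of arm~$2$, the event ``arm~$2$ is pulled at least $\lfloor T/3\rfloor$ times'' is decided on the shared trajectory, and each branch of your dichotomy is handled on a single instance via Lemma~\ref{lemma:Regret decomposition}. This buys you a cleaner argument and in fact a slightly better constant ($\min\{T/9,\,\tfrac12\lfloor T/3\rfloor\}$ rather than $T/12$), since the paper's parametrization by $n_1$ alone discards the fact that a policy pulling arm~$2$ fewer than $\lfloor T/3\rfloor$ times behaves identically on both instances for the \emph{entire} horizon.

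One caveat to fix in the write-up: Lemma~\ref{lemma:Regret decomposition} is stated for rising bandits satisfying \emph{both} Assumptions~\ref{ass:incr} and~\ref{ass:decrDeriv}, whereas instance~$A$ here is deliberately non-concave. The lemma's proof only uses monotonicity (the last $T-N_{1,T}$ values of a non-decreasing $\mu_1$ dominate $\overline{\mu}_1(T)$ and the first $N_{i,T}$ values of $\mu_i$ are dominated by $\overline{\mu}_i(T)$), so your application is substantively sound, but you should either note this explicitly or replace the invocation with the one-line direct computation of $J$ that the paper itself uses for this non-concave case. With that caveat addressed, the proof stands.
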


\begin{proof}
Let $\bm{\mu}^A$ and $\bm{\mu}^B$ be two non-concave non-decreasing rested bandits, defined as:
\begin{align*}
	& \mu^A_1(n) = \mu^B_1(n) = \frac{1}{2},\\
	& \mu^A_2(n) = \begin{cases}
			0 & \text{if } n \le \lfloor \frac{T}{3} \rfloor \\ 
			1 & \text{otherwise}
		\end{cases} ,\\
		& \mu^B_2(n) = 0.
\end{align*}
Consider the instances of Figure~\ref{fig:instance1}. It is clear that for $\bm{\mu}^A$ the optimal arm is $2$, whereas for  bandit $\bm{\mu}^B$ the optimal arm is $1$, having optimal performance respectively $J_{\bm{\mu}^A}^*(T) = \lceil \frac{2}{3}T \rceil$ and $J_{\bm{\mu}^B}^*(T) = \frac{T}{2}$. Let $\pi$ be an arbitrary policy. Since the learner will receive the same rewards for both bandits until at least $\lfloor \frac{T}{3} \rfloor$. Thus, we have:
\begin{align*}
	\pi(\mathcal{H}_t(\bm{\mu}^A)) = \pi(\mathcal{H}_t(\bm{\mu}^B)) \; \implies \; \E_{\bm{\mu}^A} \left[ N_{1,\lfloor \frac{T}{3} \rfloor} \right] = \E_{\bm{\mu}^B} \left[ N_{1,\lfloor \frac{T}{3} \rfloor} \right] =: n_1.
\end{align*}
Let us now compute the performance of policy $\pi$ in the two bandits and the corresponding regrets. Let us start with $\bm{\mu}^A$:
\begin{align}
	J_{\bm{\mu}^A}(\pi,T) & = \frac{1}{2}\E_{\bm{\mu}^A}[N_{1,T}] + \max \left\{0, \E_{\bm{\mu}^A}[N_{2,T}] - \left\lfloor \frac{T}{3} \right\rfloor \right\} \label{p:001}\\
	& = \frac{1}{2}\E_{\bm{\mu}^A}[N_{1,T}] + \max \left\{0, \left\lceil \frac{2}{3}T \right\rceil - \E_{\bm{\mu}^A}[N_{1,T}] \right\},\label{p:002}
\end{align}
where Equation~\eqref{p:001} follows from observing that we get reward from arm $2$ only if we pull it more than $\lfloor\frac{T}{3}\rfloor$ times and Equation~\eqref{p:002} derives from observing that $T = \E_{\bm{\mu}^A}[N_{1,T}] + \E_{\bm{\mu}^A}[N_{2,T}]$. Now, consider the two cases:
\paragraph{Case (i)} $\E_{\bm{\mu}^A}[N_{1,T}] \ge \lceil \frac{2}{3} T \rceil$
\begin{align*}
	J_{\bm{\mu}^A}(\pi,T) = \frac{1}{2} \E_{\bm{\mu}^A}[N_{1,T}],
\end{align*}
that is maximized by taking $\E_{\bm{\mu}^A}[N_{1,T}] = T$. 
\paragraph{Case (ii)} $\E_{\bm{\mu}^A}[N_{1,T}] < \lceil \frac{2}{3} T\rceil$
\begin{align*}
	J_{\bm{\mu}^A}(\pi,T) =\left\lceil \frac{2}{3}T \right\rceil - \frac{1}{2} \E_{\bm{\mu}^A}[N_{1,T}],
\end{align*}
that is maximized by taking the minimum value of $\E_{\bm{\mu}^A}[N_{1,T}]$ possible, that is $\E_{\bm{\mu}^A}[N_{1,T}] \ge \E_{\bm{\mu}^A}[N_{1,\lfloor\frac{T}{3} \rfloor}] = n_1$. Putting all together, we have:
\begin{align*}
	J_{\bm{\mu}^A}(\pi,T) \le \max \left\{ \frac{T}{2}, \left\lceil\frac{2}{3}T\right\rceil - \frac{n_1}{2} \right\} =  \left\lceil \frac{2}{3}T \right\rceil - \frac{n_1}{2},
\end{align*}
having observed that $n_1 \le \lfloor\frac{T}{3}\rfloor$. Let us now focus on the regret:
\begin{align*}
	R_{\bm{\mu}^A}(\pi, T) = J_{\bm{\mu}^A}^*(T) - J_{\bm{\mu}^A}(\pi,T)  =\left\lceil \frac{2}{3}T \right\rceil -\left\lceil \frac{2}{3}T \right\rceil + \frac{n_1}{2} =  \frac{n_1}{2}.
\end{align*}
Consider now bandit $\bm{\mu}^B$, we have:
\begin{align*}
	J_{\bm{\mu}^B}(\pi,T) =  \frac{1}{2}  \E_{\bm{\mu}^B}[N_{1,T}] \le \frac{n_1}{2} + \left\lfloor \frac{T}{3} \right\rfloor,
\end{align*}
having observed that:
\begin{align*}
    \E_{\bm{\mu}^B}[N_{1,T}] =  n_1 + \E_{\bm{\mu}^B}[N_{1,T}] -  \E_{\bm{\mu}^B}[N_{1,\left\lfloor \frac{T}{3} \right\rfloor}] \le n_1 + \left\lceil \frac{2}{3}T \right\rceil.
\end{align*}
Let us now compute the regret:
\begin{align*}
	R_{\bm{\mu}^B}(\pi, T) =  J_{\bm{\mu}^B}^*(T) - J_{\bm{\mu}^B}(\pi,T) = \frac{T}{2} -  \frac{n_1}{2} - \left\lfloor\frac{T}{3}\right\rfloor = \left\lceil\frac{T}{6}\right\rceil - \frac{n_1}{2}.
\end{align*}
Finally, the worst-case regret can be lower bounded as follows:
\begin{align*}
	\inf_{\pi} \sup_{\bm{\mu}} &  R_{\bm{\mu}}(\pi,T) \ge \inf_{\pi} \max \left\{R_{\bm{\mu}^A}(\pi, T), R_{\bm{\mu}^B}(\pi, T) \right\} \\ & =\inf_{n_1 \in \left[ 0,\lfloor \frac{T}{3} \rfloor\right]} \max \left\{  \frac{n_1}{2} , \left\lceil \frac{T}{6} \right\rceil - \frac{n_1}{2}\right\} \ge \frac{1}{2} \left\lceil \frac{T}{6} \right\rceil \ge \left\lfloor \frac{T}{12} \right\rfloor,
\end{align*}
having minimized over $n_1$.
\end{proof}

\nonLearnable*
\begin{proof}
	It is sufficient to rescale the mean function of the proof of Lemma~\ref{lemma:nonLearnableLemma} by the quantity $\gamma_{\max}$.
\end{proof}

\nonLearnableLinear*
\begin{proof}
We have shown in \ref{lemma:Regret decomposition} that the online regret of any stochastic rising rested bandit environment $\bm{\mu}$ for any time horizon $T$ and for any fixed policy $\pi$ can be lower bounded as (we assume for notational convenience and w.l.o.g. that $i^\star(T)=1$):
    \begin{align}\label{rim:deco}
        R_{\bm{\mu}}(\pi, T)\ge\sum_{i \neq 1}\overline{\Delta}_i\mathbb{E}_{\bm{\mu},\pi}[N_{i,T}] 
    \end{align}
    The high-level idea is to devise two two-armed bandit environments that maximize the trade-off between the similarity and the competition, as shown in Figure~\ref{fig:instance2}. In order to do so we define the first two-arm instance $\bm{\mu}$ as:
    \begin{align}\label{eq:istanza1}
       \bm{\mu}=\begin{cases}
           \mu_1(n)= c &\text{constant with the number of pulls $n$}\\
           \mu_2(n)=\min\left\{\frac{n}{T},\frac{1}{2}\right\} & n \in \left[1,T\right]
       \end{cases},
    \end{align}
    while the second two-armed bandit environment $\bm{\mu}'$ is defined as:
   \begin{align}\label{eq:istanza2}
        \bm{\mu}'= \begin{cases}
           \mu_1'(n)= c &\text{constant with the number of pulls $n$}\\
           \mu_2'(n)=\frac{n}{T} & n \in \left[1,T\right]            
        \end{cases}.
    \end{align}
    As we want arm $2$ to be the optimal arm in the second instance, we define $c$ such that $\overline{\Delta}$ is identical in the two instances, i.e.:
    \begin{align}\label{eq:delta}
    2\overline{\Delta}=\frac{1}{T}\sum_{n=1}^T \left( \mu_2'(n)-\mu_2(n)\right),
    \end{align}
    and $c=\overline{\mu}_2(T)+\overline{\Delta}$. We provide now a lower bound on $\overline{\Delta}$ built as such. By straightforward calculation it is easy to show that: 
\begin{align}
 \overline{\mu}_2'(T)&=\frac{1}{2}+\frac{1}{2T},\\
 \overline{\mu}_2(T)&=\frac{1}{8}+\frac{1}{4}+\frac{1}{4T},
\end{align}    
so that:
\begin{align}
   2\overline{\Delta}\ge \frac{1}{8} \implies
   \overline{\Delta}\ge\frac{1}{16}.
\end{align}

From Lemma~\ref{lemma:Regret decomposition}, we can write the online regret in the two different instances with fixed policy $\pi$ as:
\begin{align}
    R_{\bm{\mu}}(\pi,T)&\ge \overline{\Delta}\mathbb{E}_{\bm{\mu},{\pi}}[N_{2,T}]\ge\frac{1}{16}\mathbb{E}_{\bm{\mu},{\pi}}[N_{2,T}], \\
    R_{\bm{\mu}'}(\pi,T)&\ge \overline{\Delta}\mathbb{E}_{\bm{\mu}',{\pi}}[N_{1,T}]\ge \frac{1}{16}\mathbb{E}_{\bm{\mu}',{\pi}}[N_{1,T}],
\end{align}
namely, in instance $\bm{\mu}$ the regret grows as we play the second arm whilst in the second instance $\bm{\mu'}$ as we play the first arm. Choosing as stopping time $\tau= \lfloor \frac{T}{2} \rfloor$, we have the two environment for every filtration up to $\tau$ are indistinguishable so that with some abuse of notation we can state that $\mathbb{E}_{\bm{\mu},{\pi}}\left[N_{1,\left\lfloor \frac{T}{2} \right\rfloor }\right] = \mathbb{E}_{\bm{\mu}',{\pi}}\left[N_{1,\left\lfloor \frac{T}{2} \right\rfloor}\right]$:
\begin{align}
    R_{\bm{\mu}}(\pi,T)+ R_{\bm{\mu}'}(\pi,T)& \ge  R_{\bm{\mu}}\left(\pi,\left\lfloor \frac{T}{2} \right\rfloor\right)+ R_{\bm{\mu}'}\left(\pi,\left\lfloor \frac{T}{2} \right\rfloor\right)\\
    &\ge \frac{1}{16} \left( \mathbb{E}_{\bm{\mu},{\pi}}\left[N_{2,\left\lfloor \frac{T}{2} \right\rfloor}\right]+\mathbb{E}_{\bm{\mu}',\pi}\left[N_{1,\left\lfloor \frac{T}{2} \right\rfloor}\right]\right)\\
    &\ge \frac{1}{16}\left( \mathbb{E}_{\bm{\mu},{\pi}}\left[N_{2,\left\lfloor \frac{T}{2} \right\rfloor}\right]+\mathbb{E}_{\bm{\mu},\pi}\left[N_{1,\left\lfloor \frac{T}{2} \right\rfloor}\right]\right)\\
    &\ge \left\lfloor \frac{T}{32} \right\rfloor.
\end{align} The proof follows from $\max\{a,b\}\ge \frac{a+b}{2}$, for $a,b \ge 0$.
\end{proof}
\clearpage

\LearnableLinearbeta*
\begin{proof}
The corollary directly follows rewriting the instances respectively as:
	\begin{align}\label{eq:istanzacor1}
		\bm{\mu}=\begin{cases}
			\mu_1(n)= c &\text{constant with the number of pulls $n$}\\
			\mu_2(n)=\min\left\{\frac{n}{T},\frac{\lfloor T^{\beta} \rfloor}{T}\right\} & n \in \left[1,T\right]
		\end{cases},
	\end{align}
and:
	\begin{align}\label{eq:istanzacor2}
		\bm{\mu}'= \begin{cases}
			\mu_1'(n)= c &\text{constant with the number of pulls $n$}\\
			\mu_2'(n)=\frac{n}{T} & n \in \left[1,T\right]            
		\end{cases}.
	\end{align}
A graphical representation of the instances is provided in Figure \ref{fig:instance4}. By definition of $\overline{\Delta}$ (see \eqref{eq:delta}), it is obvious that for any $T$ such that $\frac{T}{2}\ge \lfloor T^{\beta} \rfloor$ we have that $\overline{\Delta}\ge \frac{1}{16}$. Then, following the same step for the proof of Theorem \ref{thr:nonLearnable} we deduce that the instances are indistinguishable up to $\lfloor T^{\beta} \rfloor$, choosing a stopping time $\tau=\lfloor T^{\beta} \rfloor$ we can write with some abuse of notation  $\mathbb{E}_{\bm{\mu},{\pi}}\left[N_{1,\left\lfloor T^{\beta} \right\rfloor }\right] = \mathbb{E}_{\bm{\mu}',{\pi}}\left[N_{1,\left\lfloor T^{\beta} \right\rfloor}\right]$:
\begin{align}
	R_{\bm{\mu}}(\pi,T)+ R_{\bm{\mu}'}(\pi,T)& \ge  R_{\bm{\mu}}\left(\pi,\left\lfloor  T^{\beta} \right\rfloor\right)+ R_{\bm{\mu}'}\left(\pi,\left\lfloor T^{\beta} \right\rfloor\right)\\
	&\ge \frac{1}{16} \left( \mathbb{E}_{\bm{\mu},{\pi}}\left[N_{2,\left\lfloor T^{\beta} \right\rfloor}\right]+\mathbb{E}_{\bm{\mu}',\pi}\left[N_{1,\left\lfloor T^{\beta} \right\rfloor}\right]\right)\\
	&\ge \frac{1}{16}\left( \mathbb{E}_{\bm{\mu},{\pi}}\left[N_{2,\left\lfloor T^{\beta} \right\rfloor}\right]+\mathbb{E}_{\bm{\mu},\pi}\left[N_{1,\left\lfloor T^{\beta} \right\rfloor}\right]\right)\\
	&\ge  \left\lfloor \frac{T^{\beta}}{16} \right\rfloor,
\end{align}
the proof follows from $\max\{a,b\}\ge \frac{a+b}{2}$, for $a,b \ge 0$.
\end{proof}
\clearpage

\apprendiamopoco*
\begin{proof}
Using the same instances we have used for the deterministic setting (for reference see Figure \ref{fig:instance3}). Let us choose $\tau$, the stopping time for the adapted filtration, as $\tau=\lfloor T^{\beta} + T^{\frac{2}{3}} \rfloor$ whenever $\lfloor T^{\beta} + T^{\frac{2}{3}} \rfloor \le T$, then using Lemma \ref{lemma:Regret decomposition} we can rewrite the expected cumulative regret as:

\begin{align}
    R_{\bm{\mu}}(\pi,T) & +R_{\bm{\mu'}}(\pi,T)  \ge R_{\bm{\mu}} (\pi, \lfloor T^{\beta}  + T^{\frac{2}{3}} \rfloor)+R_{\bm{\mu'}} (\pi, \lfloor T^{\beta} + T^{\frac{2}{3}} \rfloor) \\
    & \ge \frac{\lfloor T^{\beta} + T^{\frac{2}{3}} \rfloor \overline{\Delta}}{2} \Bigg(\Pr_{\bm{\mu},\pi}\Big(N_1(\lfloor T^{\beta} + T^{\frac{2}{3}} \rfloor)  \leq \frac{\lfloor T^{\beta} + T^{\frac{2}{3}} \rfloor}{2}\Big)+\nonumber\\& \quad +\Pr_{\bm{\mu'},\pi}\Big(N_1(\lfloor T^{\beta} + T^{\frac{2}{3}} \rfloor) \ge \frac{\lfloor T^{\beta} + T^{\frac{2}{3}} \rfloor}{2}\Big)\Bigg)\\
    &\ge \frac{\lfloor T^{\beta} + T^{\frac{2}{3}} \rfloor\overline{\Delta}}{2}\exp\left(-\mathrm{D}\left(\mathbb{P}_{\bm{\mu} \mid \mathcal{F}_{\lfloor T^{\beta} + T^{\frac{2}{3}} \rfloor}},\mathbb{P}_{\bm{\mu'} \mid \mathcal{F}_{\lfloor T^{\beta} + T^{\frac{2}{3}} \rfloor}}\right)\right),
\end{align}
where for the last inequality we used the Bretagnolle-Huber inequality (Lemma \ref{lemma:bret}), with: 
$$
\mathrm{D}\left(\mathbb{P}_{\bm{\mu}\mid \mathcal{F}_{\lfloor T^{\beta} + T^{\frac{2}{3}} \rfloor}}, \mathbb{P}_{\bm{\mu^{\prime}}\mid \mathcal{F}_{\lfloor T^{\beta} + T^{\frac{2}{3}} \rfloor}}\right)=\mathbb{E}_{\bm{\mu}}\left[\log \left(\frac{\mathrm{d} \mathbb{P}_{\bm{\mu}\mid \mathcal{F}_{\lfloor T^{\beta} + T^{\frac{2}{3}} \rfloor}}}{\mathrm{d} \mathbb{P}_{\bm{\mu'}\mid \mathcal{F}_{\lfloor T^{\beta} + T^{\frac{2}{3}} \rfloor}}}\right)\right],
$$
with $\mathbb{P}_{\bm{\mu}\mid \mathcal{F}_{\lfloor T^{\beta} + T^{\frac{2}{3}} \rfloor}}$ and $\mathbb{P}_{\bm{\mu'}\mid \mathcal{F}_{\lfloor T^{\beta} + T^{\frac{2}{3}} \rfloor}}$ respectively  defined, with respect to the product measure $\left(\rho\times\lambda_T\right)^{\lfloor T^{\beta} + T^{\frac{2}{3}} \rfloor}$ defined for every measurable set $B$ as (see also \cite{lattimore2020bandit}, pages 64-65):
\begin{align}
	\mathbb{P}_{\bm{\mu}\mid \mathcal{F}_{\lfloor T^{\beta} + T^{\frac{2}{3}} \rfloor}}(B)=\int_{B}{p_{\bm{\mu} \pi}(\omega)\, d\left((\rho \times \lambda_T)^{\lfloor T^{\beta} + T^{\frac{2}{3}} \rfloor}(\omega)\right)}\\
	\mathbb{P}_{\bm{\mu'}\mid \mathcal{F}_{\lfloor T^{\beta} + T^{\frac{2}{3}} \rfloor}}(B)=\int_{B}{p_{\bm{\mu'} \pi}(\omega)\, d\left((\rho \times \lambda_T)^{\lfloor T^{\beta} + T^{\frac{2}{3}} \rfloor}(\omega)\right)},
\end{align}
being $\rho$ the counting measure and $\lambda_T$ the common measure defined as:
\begin{equation}\label{eq:commonmeasure}
	\lambda_T=\sum_{t=1}^T\sum_{i=1}^{K}\sum_{\underline{a}_t \in {[[K^{t-1}]]}}P_{i|\underline{a}_t}+P_{i|\underline{a}_t}',
\end{equation}
where the summation on $\underline{a}_t$ is over all the possible realizations of the actions up to $t-1$, $P_{i|\underline{a}_t}= P_{i|a_1,\ldots,a_{t-1}}$ is the conditional distribution of the reward of arm $i$ at time $t$ in the environment $\bm{\mu}$ and similarly $P'_{i|\underline{a}_t}=P'_{i|a_1,\ldots,a_{t-1}}$ is the the conditional distribution of the reward of arm $i$ in the environment $\bm{\mu'}$ (notice that the measures the different environments are conditioned to the same history of actions). For the scopes of our analysis, we will consider  $P_{i|a_1,\ldots,a_{t-1}}= \mathcal{N}(\mu_{i\mid a_1,\ldots,a_{t-1}},1)$ and $P_{i|a_1,\ldots,a_{t-1}}'= \mathcal{N}(\mu_{i\mid a_1,\ldots,a_{t-1}}',1)$, i.e., Normal distributions with unit variance. By definition of our instances (see Eq.\eqref{eq:istanza1} and Eq.\eqref{eq:istanza2} for $\beta=1$, Eq.\eqref{eq:istanzacor1} and Eq.\eqref{eq:istanzacor2} for $\beta<1$) $D(P_{i\mid a_1,\ldots,a_t-1},P_{i\mid a_1,\ldots,a_t-1}')<+\infty$ for every possible realization of the actions $a_1,\ldots,a_{t-1}$ then by definition $P_{i\mid a_1,\ldots,a_t-1}'$ is dominated by $P_{i\mid a_1,\ldots,a_t-1}$, i.e., $P_{i\mid a_1,\ldots,a_t-1}' \ll P_{i\mid a_1,\ldots,a_t-1}$ for all $\underline{a}_t$, $i$ and $t\in [[T]]$. Notice that, by definition, $\lambda_T$ dominates every measure $P_{i\mid a_1,\ldots,a_t-1}$ and  $P_{i\mid a_1,\ldots,a_t-1}'$, so formally holds that $P_{i\mid a_1,\ldots,a_t-1}\ll \lambda_T$ and $P_{i\mid a_1,\ldots,a_t-1}'\ll \lambda_T$ for any possible realization of $\underline{a}_t$, for all $i$ and $t \in [[T]]$. So, as for any finite time horizon $T$ the common measure $\lambda_T$ is a finite measure (as it is defined as the finite sum of finite measures), a density exists, see Lemma \ref{lemma:radnyd}, $ p_{a_t}\left(x_t\mid a_1, \ldots, a_{t-1}\right)$ , namely the Radon-Nykodim derivative, such that:
\begin{align}\label{eq:radon}
	\int_{B}{p_{a_t}\left(x_t\mid a_1, \ldots, a_{t-1}\right) d\lambda_T(x_t)}=P_{a_t|a_1,\ldots,a_{t-1}}(B),
\end{align}
and equivalently it is possible to define the density $p_{a_t}'\left(x_t\mid a_1, \ldots, a_{t-1}\right)$ such that:
\begin{align}
		\int_{B}{p_{a_t}'\left(x_t\mid a_1, \ldots, a_{t-1}\right) d\lambda_T(x_t)}=P_{a_t|a_1,\ldots,a_{t-1}}'(B).
\end{align}
Finally it's possible to define the density of $\mathbb{P}_{\bm{\mu}}$, i.e. $p_{\bm{\mu} \pi}$, as:
 \begin{align}
 	&p_{\bm{\mu} \pi}\left(a_1, x_1, \ldots, a_{\lfloor T^{\beta} + T^{\frac{2}{3}} \rfloor}, x_{\lfloor T^{\beta} + T^{\frac{2}{3}} \rfloor}\right)=\nonumber\\
 	&=\prod_{t=1}^{\lfloor T^{\beta} + T^{\frac{2}{3}} \rfloor} \pi_t\left(a_t \mid a_1, x_1, \ldots, a_{t-1}, x_{t-1}\right) p_{a_t}\left(x_t\mid a_1, \ldots, a_{t-1}\right).
 \end{align}

The density of $\mathbb{P}_{\bm{\mu^{\prime}}}$ is identical except that $p_{a_t}$ is replaced by $p_{a_t}^{\prime}$. Then, rewriting $\log \left(\frac{\mathrm{d} \mathbb{P}_{\bm{\mu}\mid \mathcal{F}_{\lfloor T^{\beta} + T^{\frac{2}{3}} \rfloor}}}{\mathrm{d} \mathbb{P}_{\bm{\mu'}\mid \mathcal{F}_{\lfloor T^{\beta} + T^{\frac{2}{3}} \rfloor}}}\right)$ using the chain rule for the Radon-Nykodim deriving w.r.t. the product measure $(\rho \times \lambda_T)^{\lfloor T^{\beta} + T^{\frac{2}{3}} \rfloor}$:

$$
\log \frac{d \mathbb{P}_{\bm{\mu} \mid\mathcal{F}_{\lfloor T^{\beta} + T^{\frac{2}{3}} \rfloor}}}{d \mathbb{P}_{\bm{\mu^{\prime}}\mid\mathcal{F}_{\lfloor T^{\beta} + T^{\frac{2}{3}} \rfloor}}}\left(a_1, x_1, \ldots, a_{t}, x_{t}\right)=\sum_{t=1}^{\lfloor T^{\beta} + T^{\frac{2}{3}} \rfloor} \log \frac{p_{a_t}\left(x_t\mid a_1, \ldots, a_{t-1}\right)}{p_{a_t}^{\prime}\left(x_t\mid a_1, \ldots, a_{t-1}\right)},
$$
where we used the chain rule for Radon-Nikodym derivatives and the fact that the terms involving the policy cancel out. Taking expectations of both sides:
\begin{align}
	&\mathbb{E}_{\bm{\mu}}\left[\log \frac{d \mathbb{P}_{\bm{\mu}\mid \mathcal{F}_{\lfloor T^{\beta} + T^{\frac{2}{3}} \rfloor}}}{d \mathbb{P}_{\bm{\mu^{\prime}}\mid \mathcal{F}_{\lfloor T^{\beta} + T^{\frac{2}{3}} \rfloor}}}\left(A_1, X_1, \ldots, A_{\lfloor T^{\beta} + T^{\frac{2}{3}} \rfloor}, X_{\lfloor T^{\beta} + T^{\frac{2}{3}} \rfloor}\right)\right]=\nonumber\\&=\sum_{t=1}^{\lfloor T^{\beta} + T^{\frac{2}{3}} \rfloor} \mathbb{E}_{\bm{\mu}}\left[\log \frac{p_{A_t}\left(X_t\mid A_1,  \ldots, A_{t-1}\right)}{p_{A_t}^{\prime}\left(X_t\mid A_1, \ldots, A_{t-1}\right)}\right].
\end{align}

Then we can write:
\begin{align}
    &\sum_{t=1}^{\lfloor T^{\beta} + T^{\frac{2}{3}} \rfloor}  \mathbb{E}_{\bm{\mu}}\left[\log \frac{p_{A_t}\left(X_t\mid A_1, \ldots, A_{t-1}\right)}{p_{A_t}^{\prime}\left(X_t\mid A_1, \ldots, A_{t-1}\right)}\right]= \nonumber \\
    & = \sum_{t=1}^{\lfloor T^{\beta} + T^{\frac{2}{3}} \rfloor} \mathbb{E}_{\bm{\mu}}\left[\overbrace{\mathbb{E}_{\bm{\mu}}\left[\log \frac{p_{A_t}\left(X_t\mid A_1, \ldots, A_{t-1}\right)}{p_{A_t}^{\prime}\left(X_t\mid A_1, \ldots, A_{t-1}\right)} \middle|\ A_t,\ldots,A_1\right]}^{(*)}\right]\label{eq:divergenzecoolback}\\
    &\leq \sum_{t=1}^{\lfloor T^{\beta} + T^{\frac{2}{3}} \rfloor} \mathbb{E}_{\bm{\mu}}\left[D(P_{A_t|A_1,\ldots,A_{t-1}},P'_{A_t|A_1,\ldots, A_{t-1}})\right],
\end{align}
where in \eqref{eq:divergenzecoolback} we used that under $\mathbb{P}_{\bm{\mu}|\mathcal{F}_{T^{\frac{2}{3}}+T^{\beta}}}(\cdot\mid A_t, \ldots, A_1)$ the distribution of the reward $X_t$ is given by definition by $dP_{A_t\mid A_1,\ldots,A_t}$ and the outer expected value $\mathbb{E}_{\bm{\mu}}[\cdot]$ is taken over all the possible realization of the actions under $\mathbb{P}_{\bm{\mu}|\mathcal{F}_{T^{\frac{2}{3}}+T^{\beta}}}(\cdot)$. Then again from the Radon–Nikodym derivative defined earlier (see Eq.\ref{eq:radon}) we have that $dP_{A_t\mid A_1,\ldots,A_t}=p_{A_t}\left(X_t\mid A_1, \ldots, A_{t-1}\right)d\lambda_T$, so $(*)$ can be rewritten as:
\begin{align}
	(*)&=\int{\log \frac{p_{A_t}\left(X_t\mid A_1, \ldots, A_{t-1}\right)}{p_{A_t}^{\prime}\left(X_t\mid A_1, \ldots, A_{t-1}\right)}dP_{A_t\mid A_1,\ldots,A_{t-1}}}\\
	&=\int{\log \frac{p_{A_t}\left(X_t\mid A_1,\ldots, A_{t-1}\right)}{p_{A_t}^{\prime}\left(X_t\mid A_1, \ldots, A_{t-1}\right)}p_{A_t}\left(X_t\mid A_1, \ldots, A_{t-1}\right)d\lambda_T}\\
	&=D(P_{A_t\mid A_1,\ldots,A_{t-1}},P'_{A_t\mid A_1,\ldots,A_{t-1}}).
\end{align}
Being the conditional rewards Normal-distributed with unit variance and considering the worst possible realization  $A_1, A_2,\ldots, A_t$ at any $t$ for the relative entropies of the distribution of the rewards of the arm $A_t$ in the two different environments when conditioned to the same history, namely $D(P_{A_t\mid A_1,\ldots,A_{t-1}},P'_{A_t\mid A_1,\ldots,A_{t-1}})$, we obtain for $t < \lfloor T^{\beta} \rfloor $:
\begin{align}
	D(P_{A_t|A_1,\ldots,A_{t-1}},P'_{A_t|A_1,\ldots, A_{t-1}})  = 0,
\end{align}
regardless the action $A_t$ and for $t \ge \lfloor T^{\beta} \rfloor $:
\begin{align}
   D(P_{A_t|A_1,\ldots,A_{t-1}},P'_{A_t|A_1,\ldots, A_{t-1}}) = \begin{cases}
        0 &\textit{if } A_t=1\\
        \frac{1}{2} \left(\frac{ (t- \lfloor T^{\beta} \rfloor )}{T}\right)^2 &\textit{if } A_t=2\\
    \end{cases},      
\end{align}
so that we can rewrite:
\begin{align}
   \sum_{t=1}^{\lfloor T^{\beta}+T^{\frac{2}{3}} \rfloor} \mathbb{E}_{\bm{\mu}}[ & D(P_{A_t|A_1,\ldots,A_{t-1}}, P'_{A_t|A_1,\ldots, A_{t-1}})]= \nonumber\\
   &= \sum_{t=\lfloor T^{\beta} \rfloor}^{\lfloor T^{\beta}+T^{\frac{2}{3}} \rfloor} \mathbb{E}_{\bm{\mu}}\left[D(P_{A_t|A_1,\ldots,A_{t-1}},P'_{A_t|A_1,\ldots,A_{t-1}})\right]\\
   &\leq \frac{1}{2T^2} \sum_{j=0}^{\lfloor T^{\frac{2}{3}}  \rfloor} j^2 \\
   & \le \frac{\lfloor T^{\frac{2}{3}}  \rfloor \lfloor T^{\frac{2}{3}}  \rfloor^2}{2T^2} \le \frac{1}{2}.
\end{align}
By substituting in what we have found earlier, remembering that for any $T$ such that $\frac{T}{2}\ge T^{\beta}$ the error at each round satisfies $\overline{\Delta}\ge\frac{1}{16}$, we obtain:
\begin{align}
    R_{\bm{\mu}}(\pi,T)+R_{\bm{\mu'}}(\pi,T) & \ge \frac{\lfloor T^{\beta} + T^{\frac{2}{3}} \rfloor}{32}\exp\left(-\frac{1}{2}\right),
\end{align}
the proof follows from $\max\{a,b\}\ge \frac{a+b}{2}$, for $a,b \ge 0$.
\end{proof}

\nonLearnableInstanceDep*
\begin{proof}
	We consider the two instances defined below for some $b,c\in (0,1]$ and $a \ge 1$ to be defined later:
	\begin{align*}
	\bm{\mu} = \begin{cases}
		\mu_1(n) = c \\
		\mu_2(n) =  \frac{n}{aT}
		\end{cases},
	\end{align*}
	and, for $\epsilon > 0$:
	\begin{align*}
	\bm{\mu}' = \begin{cases}
		\mu_1'(n) = c \\
		\mu_2'(n) = \min \left\{ \frac{n}{aT}, {b} \right\}
		\end{cases}.
	\end{align*}
	The instances are depicted in Figure~\ref{fig:instance3}. The points in which the arm 2 of the second instance become constant is given by 
	 $\tau' = {abT} \le T$, that implies the condition $ab \le 1$.
	Let us now compute the complexity indexes for the two cases:
	\begin{align*}
		& \Upsilon_{\bm{\mu}}(T,q) = \sum_{n=1}^T \left( \frac{1}{aT} \right)^q = a^{-q}T^{1-q}, \\
		& \Upsilon_{\bm{\mu}'}(T,q) = \sum_{n=1}^{\tau'} \left( \frac{1}{aT} \right)^q = (aT)^{1-q} b. \\
	\end{align*}
	Notice that $\Upsilon_{\bm{\mu}'}(T,q) \le \Upsilon_{\bm{\mu}}(T,q)$ and, clearly, $\min\{\Upsilon_{\bm{\mu}}(T,q),\Upsilon_{\bm{\mu}'}(T,q)\} = \Upsilon_{\bm{\mu}'}(T,q)$.
	Let us compute the average expected rewards for arms 2 of the two instances:
	\begin{align*}
		& \overline{\mu}_2(T) = \frac{T+1}{2aT}, \\
		&\overline{\mu}_2'(T) =\frac{b}{2T} \left(1 + (2-ab)T \right).
	\end{align*}
	We now enforce the value of $\overline{\Delta}$ such that $2\overline{\Delta} = \overline{\mu}_2(T) - \overline{\mu}_2'(T)$ and that will allow defining the value of $c$ by solving the equation $c =\overline{\mu}_2'(T)+  \overline{\Delta}$. We have:
	\begin{align*}
	2\overline{\Delta} = \frac{(1-ab)(1+T(1-ab))}{2aT} \ge \frac{\frac{1}{2}\left(1+\frac{T}{2}\right)}{2aT} \ge \frac{1}{4aT},
	\end{align*}
	having enforced $ab \le \frac{1}{2}$. We are now ready to lower bound the regret by exploiting Lemma~\ref{lemma:Regret decomposition}:
	\begin{align*}
	R_{\bm{\mu}}(\pi,T) \ge \overline{\Delta} \mathbb{E}_{\bm{\mu},\pi}[N_{2,T}], \qquad R_{\bm{\mu}'}(\pi,T) \ge \overline{\Delta} \mathbb{E}_{\bm{\mu}',\pi}[N_{1,T}].
	\end{align*}
	Thus, recalling that the two instances are indistinguishable until when $t \le \lfloor \tau' \rfloor$, we have:
	\begin{align*}
	R_{\bm{\mu}}(\pi,T) + R_{\bm{\mu}'}(\pi,T) & \ge R_{\bm{\mu}}(\pi,\lfloor \tau' \rfloor) + R_{\bm{\mu}'}(\pi,\lfloor \tau' \rfloor) \\
	& = \overline{\Delta} \left(\mathbb{E}_{\bm{\mu},\pi}[N_{2,\lfloor \tau' \rfloor}] +  \mathbb{E}_{\bm{\mu}',\pi}[N_{1,\lfloor \tau' \rfloor}] \right) \\
	& = \overline{\Delta} \left(\mathbb{E}_{\bm{\mu},\pi}[N_{2,\lfloor \tau' \rfloor}] +  \mathbb{E}_{\bm{\mu},\pi}[N_{1,\lfloor \tau' \rfloor}] \right)\\
	& = \overline{\Delta} \lfloor \tau' \rfloor \ge \frac{bT}{4}.
	\end{align*} 
	We now need to manipulate the last expression and highlight the dependence on the desired quantities:
	\begin{align*}
		\frac{bT}{4} = \underbrace{(aT)^{1-q} b}_{\min\{\Upsilon_{\bm{\mu}}(T,q),\Upsilon_{\bm{\mu}'}(T,q)\}}  \cdot \frac{T^q a^{q-1}}{4}.
	\end{align*}
	We choose $a = 1$ and:
	\begin{align*}
			b = \frac{\Upsilon_{\bm{\mu}}(T,q)}{2T^{1-q}} \le \frac{1}{2}.
	\end{align*}
	Putting all together, we have:
	\begin{align*}
	\max\left\{ \frac{R_{\bm{\mu}}(\pi,T)}{\Upsilon_{\bm{\mu}}(T,q)}, \frac{R_{\bm{\mu}'}(\pi,T)}{\Upsilon_{\bm{\mu}'}(T,q)} \right\} & \ge \frac{\max\{R_{\bm{\mu}}(\pi,T) , R_{\bm{\mu}'}(\pi,T)\}}{\min\{\Upsilon_{\bm{\mu}}(T,q),\Upsilon_{\bm{\mu}'}(T,q)\}} \\
	& \ge \frac{R_{\bm{\mu}}(\pi,T) + R_{\bm{\mu}'}(\pi,T)}{2\min\{\Upsilon_{\bm{\mu}}(T,q),\Upsilon_{\bm{\mu}'}(T,q)\}}  \\
	& \ge \frac{T^q}{8}.
	\end{align*}
	The statement follows.
\end{proof}

\subsection{Proofs of Section~\ref{sec:alg}}

\begin{restatable}[]{lemma}{lemmaDetRested}\label{lemma:lemmaDetRested}
	For every arm $i\in [K]$ and every round $t \in [T]$, let us define:
	\begin{align*}
		\overline{\mu}_i^{\text{\red}}(t) \coloneqq \mu_i(N_{i,t-1}) + (t - N_{i,t-1}) \gamma_i(N_{i,t-1}-1),
	\end{align*}
	if $N_{i,t-1} \ge 2$ else $\overline{\mu}_i^{\text{\red}}(t) \coloneqq +\infty$. Then, $\overline{\mu}_i^{\text{\red}}(t) \ge \mu_i(t) $	and, if $N_{i,t-1} \ge 2$, it holds that:
	\begin{align*}
		\overline{\mu}_i^{\text{\red}}(t) - \mu_i(N_{i,t}) \le (t-N_{i,t-1})  \gamma_i(N_{i,t-1}-1).
	\end{align*}
\end{restatable}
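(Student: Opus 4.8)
The plan is to treat the degenerate case and the two inequalities separately, since all three are direct consequences of the monotonicity and concavity of $\mu_i$. When $N_{i,t-1} < 2$ the estimator equals $+\infty$, so optimism $\overline{\mu}_i^{\text{\red}}(t) \ge \mu_i(t)$ is immediate and the bias bound is not claimed; hence from here on I assume $N_{i,t-1} \ge 2$. This guarantees $N_{i,t-1}-1 \ge 1$, so that $\gamma_i(N_{i,t-1}-1)$ is a genuine increment to which Assumption~\ref{ass:decrDeriv} applies, and it also gives $t - N_{i,t-1} \ge 0$ because at most $t-1$ pulls can have occurred by round $t-1$, i.e.\ $N_{i,t-1} \le t-1$.

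For optimism, the first step is to write the exact telescoping identity of Equation~\eqref{eq:estRestedDet}, $\mu_i(t) = \mu_i(N_{i,t-1}) + \sum_{n=N_{i,t-1}}^{t-1} \gamma_i(n)$, which merely unrolls $\gamma_i(n) = \mu_i(n+1) - \mu_i(n)$ over the $t - N_{i,t-1}$ pulls separating index $N_{i,t-1}$ from index $t$. The second step invokes concavity: since $\gamma_i$ is non-increasing by Assumption~\ref{ass:decrDeriv}, each term $\gamma_i(n)$ with $n \ge N_{i,t-1} > N_{i,t-1}-1$ is at most $\gamma_i(N_{i,t-1}-1)$, so $\sum_{n=N_{i,t-1}}^{t-1} \gamma_i(n) \le (t - N_{i,t-1})\,\gamma_i(N_{i,t-1}-1)$. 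Adding $\mu_i(N_{i,t-1})$ to both sides yields exactly $\mu_i(t) \le \overline{\mu}_i^{\text{\red}}(t)$.

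For the bias bound, I would simply expand $\overline{\mu}_i^{\text{\red}}(t) - \mu_i(N_{i,t})$ with the definition of the estimator, obtaining $\big(\mu_i(N_{i,t-1}) - \mu_i(N_{i,t})\big) + (t - N_{i,t-1})\,\gamma_i(N_{i,t-1}-1)$. Since $N_{i,t} = N_{i,t-1} + \Ind\{I_t = i\} \ge N_{i,t-1}$ and $\mu_i$ is non-decreasing by Assumption~\ref{ass:incr}, the parenthesized term is $\le 0$; discarding it gives the claimed inequality $\overline{\mu}_i^{\text{\red}}(t) - \mu_i(N_{i,t}) \le (t - N_{i,t-1})\,\gamma_i(N_{i,t-1}-1)$.

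There is essentially no serious obstacle here: the argument is two applications of Assumptions~\ref{ass:incr} and~\ref{ass:decrDeriv} plus the telescoping identity. The only point requiring care is the bookkeeping on edge cases — checking that $N_{i,t-1} - 1 \ge 1$ so that concavity legitimately bounds the most recent increment (and the convention $\gamma_i(0) = \mu_i(1)$ is never needed), and that $t - N_{i,t-1} \ge 0$ so the number of projected future pulls is nonnegative — both of which follow from $2 \le N_{i,t-1} \le t-1$.
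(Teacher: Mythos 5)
Your proof is correct and follows essentially the same route as the paper's: the telescoping identity for $\mu_i(t)$, the concavity bound $\gamma_i(n) \le \gamma_i(N_{i,t-1}-1)$ for $n \ge N_{i,t-1}$ to establish optimism, and monotonicity ($\mu_i(N_{i,t-1}) \le \mu_i(N_{i,t})$) to discard the non-positive term in the bias bound. Your extra bookkeeping on the edge cases ($N_{i,t-1}-1 \ge 1$ and $t - N_{i,t-1} \ge 0$) is a welcome, if minor, addition to the paper's terser argument.
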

\begin{proof}
	Let us consider the following derivation:
	\begin{align*}
		\mu_i(t) &= \mu_i(N_{i,t-1}) + \sum_{n=N_{i,t-1}}^{t-1} \gamma_i(n) \le \mu_i(N_{i,t-1}) + (t-N_{i,t-1}) \gamma_i(N_{i,t-1}-1) \nonumber\\&\eqqcolon \overline{\mu}^{\text{\red}}_i(t),
	\end{align*}
	where the inequality holds thanks to Assumption~\ref{ass:decrDeriv}, having observed that $\sum_{n=N_{i,t-1}}^{t-1} \gamma_i(n) \le (t-N_{i,t-1}) \gamma_i(N_{i,t-1}) \le (t-N_{i,t-1}) \gamma_i(N_{i,t-1}-1)$. 
	For the bias bound, when $N_{i,t-1} \ge 2$, we consider the following derivation:
	\begin{align*}
		\overline{\mu}^{\text{\red}}_i(t) - \mu_i(N_{i,t}) & = \mu_i(N_{i,t-1}) + (t-N_{i,t-1}) \gamma_i(N_{i,t-1}-1) - \mu_i(N_{i,t}) \nonumber\\&\le (t-N_{i,t-1}) \gamma_i(N_{i,t-1}-1).
	\end{align*}
	having observed that $ \mu_i(N_{i,t-1})\le  \mu_i(N_{i,t})$ by Assumption~\ref{ass:incr}.
\end{proof}

\theRegretDetRested*
\begin{proof}
We have to analyze the following expression:
\begin{align*}
	R_{\bm{\mu}}(\text{\algrested},T) = \sum_{t=1}^T \mu_{i^{\star}}(t) - \mu_{I_t}(N_{i,t}) ,
\end{align*}
where $i^{\star} \in \argmax_{i \in [K]}\left\{ \sum_{l \in [T]} \mu_i(l) \right\}$. We consider a term at a time, use $B_i(t)\equiv\overline{\mu}_i^{\text{\red}}(t)$, and we exploit the optimism, \ie $B_{i^{\star}}(t) \le B_{I_t}(t)$:
\begin{align*}
	\mu_{i^{\star}}(t) &- \mu_{I_t}(N_{I_t,t}) + B_{I_t}(t) - B_{I_t}(t)\le  \\ &\le  \min \left\{ 1, \underbrace{\mu_{i^{\star}}(t) - B_{i^{\star}}(t)}_{\le 0} + B_{I_t}(t) - \mu_{I_t}(N_{I_t,t}) \right\}\\
	&  \le \min \left\{ 1, B_{I_t}(t) - \mu_{I_t}(N_{I_t,t})\right\}.
\end{align*}
Now we work on the term inside the minimum when $N_{I_t,t-1} \ge 2$:
\begin{align*}
B_{I_t}(t) - \mu_{I_t}(N_{I_t,t}) & = \overline{\mu}_i^{\text{\red}}(t) - \mu_{I_t}(N_{I_t,t})
\le (t - N_{i,t-1}) \gamma_{I_t}( N_{i,t-1}-1),
\end{align*}
where the inequality follows from Lemma~\ref{lemma:lemmaDetRested}.
We are going to decompose the summation of this term over the $K$ arms:
\begin{align*}
R_{\bm{\mu}}(\text{\algrested},T) & \le \sum_{t=1}^T  \min \left\{ 1, (t - N_{i,t-1}) \gamma_{I_t}( N_{i,t-1}-1)\right\} \\
& \le 2K + \sum_{i \in [K]} \sum_{j=3}^{N_{i,T}} \min \left\{ 1,(t_{i,j} - (j-1)) \gamma_{i}( j-2)\right\},
\end{align*}
where $t_{i,j} \in [T]$ is the round at which arm $i \in [K]$ was pulled for the $j$-th time.
Now, $q \in [0,1]$, then for any $x \ge 0$ it holds that $\min\{1,x\} \le \min\{1,x\}^{q} \le x^{q}$. By applying this latter inequality to the inner summation, we get:
\begin{align*}
\sum_{j=3}^{N_{i,T}} \min \left\{ 1, (t_{i,j}-(j-1)) \gamma_i(j-2)  \right\} & \le \sum_{j=3}^{N_{i,T}} \min \left\{1, T \gamma_i(j-2)\right\} \nonumber\\ &\le T^q \sum_{j=3}^{N_{i,T}} \gamma_i(j-2)^q,
\end{align*}
having used $t_{i,j}-(j-1) \le T$.
Summing over the arms, we obtain:
{
\begin{align*}
	 T^q \sum_{i \in [K]}  \sum_{j=3}^{N_{i,T}} \gamma_i(j-2)^q \le T^q K \Upsilon_{\bm{\mu}}\left( \left\lceil \frac{T}{K} \right\rceil,q \right),
\end{align*}
}
where the last inequality is obtained from Lemma~\ref{lemma:boundUpsilonMax}.
\end{proof}

\paragraph{Estimator Construction for the Stochastic Rising Rested  Setting}
Before moving to the proofs, we provide some intuition behind the estimator construction. We start observing that for every $l\in \{2, \dots, N_{i,t-1}\}$, we have that:
\begin{align*}
	\mu_i(t) &= \textcolor{color11}{\underbrace{\mu_i(l)}_{\text{(past expected reward)}}} + \textcolor{color12}{\underbrace{\sum_{j=l}^{t-1} \gamma_i(j)}_{\text{(sum of future increments)}}}  \nonumber\\ &\le\textcolor{color11}{\underbrace{\mu_i(l)}_{\text{(past expected reward)}}} + \textcolor{color12}{(t-l)\underbrace{\gamma_i(l-1)}_{\text{(past increment)}}},
\end{align*}
where the inequality follows from Assumption~\ref{ass:decrDeriv}.\footnote{The estimator of the deterministic case in Equation~\eqref{eq:estRestedDetEst} is obtained by setting $l=N_{i,t-1}$.} Since we do not have access to the exact expected rewards $\mu_i(l)$ and exact increments $\gamma_i(l-1) = \mu_i(l) - \mu_i(l-1)$, one may be tempted to directly replace them with the corresponding point estimates $R_{t_{i,l}}$ and $R_{t_{i,l}} - R_{t_{i,l-1}}$ and average the resulting estimators for a window of the most recent $h$ values of $l$. Unfortunately, while replacing $\mu_i(l)$ with $R_{t_{i,l}}$ is a viable option, replacing $\gamma_i(l-1)$ with $R_{t_{i,l}} - R_{t_{i,l-1}}$ will prevent concentration since the estimate  $R_{i,t_l} - R_{i,t_{l-1}}$ is too unstable. To this end, before moving to the estimator, we need a further bounding step to get a more stable, although looser, quantity. Based on Lemma~\ref{lemma:boundDeriv}, we bound for every $l\in \{2, \dots, N_{i,t-1}\}$ and $h \in[ l -1]$:
\begin{align*}
\textcolor{color12}{\underbrace{\gamma_i(l-1)}_{\text{(past increment at $l$)}}} \le \textcolor{color12}{\underbrace{\frac{\mu_i(l) - \mu_i(l-h)}{h}}_{\text{(average past increment over $\{l-h,\dots,l\}$)}}}.
\end{align*}
We can now introduce the optimistic approximation of $\mu_i(t)$, \ie $\widetilde{\mu}_i^{\text{\red},h}(t)$, and the corresponding estimator, \ie $\widehat{\mu}_i^{\text{\red},h}(t)$, that are defined in terms of a window of size $1 \le h \le \lfloor N_{i,t-1}/2 \rfloor$:
\begin{align*}
&  \widetilde{\mu}_i^{\text{\red},h} (t)  \coloneqq \frac{1}{h} \sum_{l=N_{i,t-1}-h+1}^{N_{i,t-1}} \Bigg(\textcolor{color11}{\underbrace{\mu_i(l)}_{\text{(past expected reward)}}} + \textcolor{color12}{(t-l) \underbrace{\frac{\mu_i(l) - \mu_{i}(l-h)}{h}}_{\text{(average past increment)}}}\Bigg),\\
& \widehat{\mu}_i^{\text{\red},h} (t)  \coloneqq \frac{1}{h} \sum_{l=N_{i,t-1}-h+1}^{N_{i,t-1}} \Bigg(\textcolor{color11}{\underbrace{R_{t_{i,l}}}_{\text{(estimated past expected reward)}}} \nonumber\\  & \hspace{6.5 cm}+ \textcolor{color12}{(t-l) \underbrace{\frac{R_{t_{i,l}} - R_{t_{i,l-h}}}{h}}_{\text{(estimated average past increment)}}}\Bigg).
\end{align*}

The proof is composed of the following steps:
\begin{enumerate}[label=(\roman*)]
	\item Lemma~\ref{lemma:firstEstimator} shows that $\widetilde{\mu}_i^{\text{\red},h}(t)$ is an upper-bound for $\mu_i(t)$ and provides a bound to its bias \wrt $\mu_i(N_{i,t})$ for every value of $h$;
	\item Lemma~\ref{lemma:firstEstimatorConc} analyzes the concentration of $\widehat{\mu}_i^{\text{\red},h}(t)$ around $\widetilde{\mu}_i^{\text{\red},h}(t)$ for a specific choice of $\delta_t = t^{-\alpha}$ and when $h_{i,h} \coloneqq h(N_{i,t-1})$ is a function of the number of pulls $N_{i,t-1}$ only;
	\item Theorem~\ref{thr:theRegretRestedStochastic} bounds the expected regret of \algrested when $h_{i,h} =  \lfloor \epsilon N_{i,t-1} \rfloor$, for $\epsilon \in (0,1/2)$.
\end{enumerate}

\begin{restatable}[]{lemma}{firstEstimator}\label{lemma:firstEstimator}
For every arm $i \in [K]$, every round $t \in [T]$, and window width $ 1 \le h \le \lfloor N_{i,t-1}/2 \rfloor$, let us define:
\begin{align*}
	 \widetilde{\mu}_i^{\text{\red},h} (t)  \coloneqq \frac{1}{h} \sum_{l=N_{i,t-1}-h+1}^{N_{i,t-1}} \left(\mu_i(l) + (t-l) \frac{\mu_i(l) - \mu_{i}(l-h)}{h}\right),
\end{align*}
otherwise if $h=0$, we set $\widetilde{\mu}_i^{\text{\red},h}(t)  \coloneqq +\infty$. Then, $\widetilde{\mu}_i^{\text{\red},h}(t) \ge \mu_i(t)$ and, if $N_{i,t-1} \ge 2$, it holds that:
\begin{align*}
	& \widetilde{\mu}_i^{\text{\red},h}(t)   - \mu_i(N_{i,t}) \le \frac{1}{2}(2t - 2N_{i,t-1} + h -1) \gamma_i(N_{i,t-1}-2h+1).
\end{align*}
\end{restatable}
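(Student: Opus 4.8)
The plan is to establish the two assertions separately, in both cases reducing to the preliminary bounding steps recalled just before the statement. Throughout, note that the hypothesis $1 \le h \le \lfloor N_{i,t-1}/2 \rfloor$ guarantees that for every $l \in \{N_{i,t-1}-h+1,\dots,N_{i,t-1}\}$ all the arguments appearing are valid: $l-h \ge N_{i,t-1}-2h+1 \ge 1$ and $l-1 \ge N_{i,t-1}-h \ge \lceil N_{i,t-1}/2\rceil \ge 1$, so $\mu_i(l)$, $\mu_i(l-h)$, $\gamma_i(l-1)$, and $\gamma_i(N_{i,t-1}-2h+1)$ are all well-defined. The case $h=0$ is trivial since there $\widetilde{\mu}_i^{\text{\red},h}(t) = +\infty$.

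For optimism, I would argue that each summand already dominates $\mu_i(t)$, so their average does too. Fix such an $l$. Writing $\mu_i(t) = \mu_i(l) + \sum_{n=l}^{t-1}\gamma_i(n)$ and using Assumption~\ref{ass:decrDeriv} (so $\gamma_i(n) \le \gamma_i(l-1)$ for $n \ge l-1$) gives $\mu_i(t) \le \mu_i(l) + (t-l)\gamma_i(l-1)$. Then Lemma~\ref{lemma:boundDeriv}, which bounds $\gamma_i(l-1) \le (\mu_i(l)-\mu_i(l-h))/h$, upgrades this to $\mu_i(t) \le \mu_i(l) + (t-l)\tfrac{\mu_i(l)-\mu_i(l-h)}{h}$, i.e., exactly the $l$-th term of $\widetilde{\mu}_i^{\text{\red},h}(t)$. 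Averaging over $l$ yields $\widetilde{\mu}_i^{\text{\red},h}(t) \ge \mu_i(t)$.

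For the bias bound, assume $N_{i,t-1} \ge 2$ and write $\widetilde{\mu}_i^{\text{\red},h}(t) - \mu_i(N_{i,t})$ as the average over $l$ of $\big(\mu_i(l) - \mu_i(N_{i,t})\big) + (t-l)\tfrac{\mu_i(l)-\mu_i(l-h)}{h}$. Since $l \le N_{i,t-1} \le N_{i,t}$, Assumption~\ref{ass:incr} makes the first bracket $\le 0$, so those terms are dropped. For the second, concavity gives $\mu_i(l)-\mu_i(l-h) = \sum_{n=l-h}^{l-1}\gamma_i(n) \le h\,\gamma_i(l-h)$, and, again by concavity together with $l-h \ge N_{i,t-1}-2h+1$, $\gamma_i(l-h) \le \gamma_i(N_{i,t-1}-2h+1)$. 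This leaves the bound $\tfrac{\gamma_i(N_{i,t-1}-2h+1)}{h}\sum_{l=N_{i,t-1}-h+1}^{N_{i,t-1}}(t-l)$. Reindexing with $j = N_{i,t-1}-l$ gives $\sum_{l=N_{i,t-1}-h+1}^{N_{i,t-1}}(t-l) = \sum_{j=0}^{h-1}(t-N_{i,t-1}+j) = h(t-N_{i,t-1}) + \tfrac{h(h-1)}{2}$, so dividing by $h$ collapses everything to $\gamma_i(N_{i,t-1}-2h+1)\big(t-N_{i,t-1}+\tfrac{h-1}{2}\big)$, which is precisely $\tfrac12(2t-2N_{i,t-1}+h-1)\gamma_i(N_{i,t-1}-2h+1)$.

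There is no substantial obstacle here: the argument is a chain of elementary monotonicity/concavity estimates plus one arithmetic-series evaluation. The one point requiring care — and the reason the window is capped at $\lfloor N_{i,t-1}/2 \rfloor$ — is ensuring that every argument of $\mu_i$ and $\gamma_i$ stays in the admissible range $\ge 1$ (in particular $N_{i,t-1}-2h+1 \ge 1$) when shrinking the increment terms down to a single value $\gamma_i(N_{i,t-1}-2h+1)$.
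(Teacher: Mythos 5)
Your proof is correct and follows essentially the same route as the paper's: optimism via the concavity bound $\gamma_i(n)\le\gamma_i(l-1)$ combined with Lemma~\ref{lemma:boundDeriv}, and the bias bound by dropping $\mu_i(l)-\mu_i(N_{i,t})\le 0$, shrinking the increments to $\gamma_i(N_{i,t-1}-2h+1)$, and evaluating the arithmetic sum $\frac{1}{h}\sum_l(t-l)=\frac{1}{2}(2t-2N_{i,t-1}+h-1)$. The added care about the admissible range of the arguments (in particular $N_{i,t-1}-2h+1\ge 1$) is a welcome clarification but does not change the argument.
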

\begin{proof}
Following the derivation provided above, we have that for every $l\in \{2, \dots, N_{i,t-1}\}$:
\begin{align}
\mu_i(t) & = {\mu_i(l)} + \sum_{j=l}^{t-1} \gamma_i(j) \notag\\
&  \le  \mu_i(l) + (t-l){\gamma_i(l-1)} \label{:pp:-001}\\
& \le  \mu_i(l) + (t-l)\frac{\mu_i(l) - \mu_i(l-h)}{h}, \label{:pp:-002}
\end{align}
where line~\eqref{:pp:-001} follows from Assumption~\ref{ass:decrDeriv}, line~\eqref{:pp:-002} is obtained from Lemma~\ref{lemma:boundDeriv}. By averaging over the most recent $1 \le h \le \lfloor N_{i,t-1}/2 \rfloor$ pulls, we obtain:
\begin{align*}
	\mu_i(t) &\le \frac{1}{h} \sum_{l=N_{i,t-1}-h+1}^{N_{i,t-1}} \left(\mu_i(l) + (t-l)\frac{\mu_i(l) - \mu_i(l-h)}{h} \right) \eqqcolon \widetilde{\mu}_i^{\text{\red},h}(t).
\end{align*}
For the bias bound, when $N_{i,t-1} 	\ge 2$, we have:
\begin{align}
&\widetilde{\mu}_i^{\text{\red},h}(t) - \mu_i(N_{i,t})  = \nonumber\\ &=\frac{1}{h} \sum_{l=N_{i,t-1}-h+1}^{N_{i,t-1}}  \left(\mu_i(l) +  (t-l)  \frac{\mu_i(l) - \mu_i(l-h)}{h} \right) - \mu_i(N_{i,t}) \label{:pp:-0032}\\
& \le \frac{1}{h} \sum_{l=N_{i,t-1}-h+1}^{N_{i,t-1}} (t-l)  \frac{\mu_i(l) - \mu_i(l-h)}{h} \notag\\
& = \frac{1}{h} \sum_{l=N_{i,t-1}-h+1}^{N_{i,t-1}} (t-l)  \frac{1}{h} \sum_{j=l-h}^{l-1} \gamma_j(l)\notag \\
& \le \frac{1}{h} \sum_{l=N_{i,t-1}-h+1}^{N_{i,t-1}} (t-l) \gamma_i(l-h) \label{:pp:-003}\\
& \le \frac{1}{2}(2t - 2N_{i,t-1} + h -1) \gamma_i(N_{i,t-1}-2h+1) \label{:pp:-004}.
\end{align}
where line~\eqref{:pp:-0032} follows from Assumption~\ref{ass:incr} applied as $\mu_i(l) \le \mu_i(N_{i,t})$, line~\eqref{:pp:-003} follows from Assumption~\ref{ass:decrDeriv} and bounding $ \frac{1}{h} \sum_{j=l-h}^{l-1} \gamma_j(l) \le \gamma_i(l-h)$ and line~\eqref{:pp:-004} is derived still from Assumption~\ref{ass:decrDeriv}, $\gamma_i(l-h) \le \gamma_i(N_{i,t-1}-2h+1)$ and computing the summation.
\end{proof}
\begin{restatable}[]{lemma}{firstEstimatorConc}\label{lemma:firstEstimatorConc}
For every arm $i \in [K]$, every round $t \in [T]$, and window width $1 \le h \le \lfloor N_{i,t-1} / 2\rfloor$, let us define:
\begin{align*}
	 & \widehat{\mu}_i^{\text{\red},h} (t)  \coloneqq \frac{1}{h} \sum_{l=N_{i,t-1}-h+1}^{N_{i,t-1}} \left(R_{t_{i,l}} + (t-l) \frac{R_{t_{i,l}} - R_{t_{i,l-h}}}{h}\right),\\
	 & \beta^{\text{\red},h}_i(t,\delta)\coloneqq \sigma  (t-N_{i,t-1}+h-1) \sqrt{ \frac{ 10  \log \frac{1}{\delta} }{h^3} },
\end{align*}
otherwise if $h=0$, we set $\widehat{\mu}_i^{\text{\red},h} (t) \coloneqq +\infty$ and $\beta^{\text{\red},h}_i(t,\delta)\coloneqq+\infty$ .
Then, if the window size depends on the number of pulls only $h_{i,t} = h(N_{i,t-1}) $ and if $\delta_t = t^{-\alpha}$ for some $\alpha > 2$, it holds for every round  $t \in [T]$ that:
\begin{align*}
	\Pr \left(\left| \widehat{\mu}_i^{\text{\red},h_{i,t}}(t) - \widetilde{\mu}_i^{\text{\red},h_{i,t}}(t) \right| >  \beta^{\text{\red},h_{i,t}}_i(t,\delta_t) \right) \le 2t^{1-\alpha}.
\end{align*}
\end{restatable}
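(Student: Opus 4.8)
The plan is to express the deviation $\widehat{\mu}_i^{\text{\red},h_{i,t}}(t) - \widetilde{\mu}_i^{\text{\red},h_{i,t}}(t)$ as a \emph{fixed} linear combination of the per-pull noise terms $\xi_{i,m} \coloneqq R_{t_{i,m}} - \mu_i(m)$, and then to combine a sub-Gaussian tail bound for each admissible window with a union bound over the possible values of $N_{i,t-1}$. Write $N \coloneqq N_{i,t-1}$ and $h \coloneqq h_{i,t} = h(N)$. Subtracting the two estimators term by term cancels the deterministic parts $\mu_i(l)$ and $\mu_i(l-h)$ and leaves
\begin{align*}
	\widehat{\mu}_i^{\text{\red},h}(t) - \widetilde{\mu}_i^{\text{\red},h}(t) = \sum_{m = N - 2h + 1}^{N} c_m\, \xi_{i,m},
\end{align*}
where, collecting the coefficient of each $\xi_{i,m}$ (the two index sets $\{N-h+1,\dots,N\}$ and $\{N-2h+1,\dots,N-h\}$ of $\xi$-indices appearing in the two parts of the sum are disjoint), one gets $c_m = \frac1h + \frac{t-m}{h^2}$ for $m \in \{N-h+1,\dots,N\}$ and $c_m = -\frac{t-m-h}{h^2}$ for $m \in \{N-2h+1,\dots,N-h\}$. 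Since $N = N_{i,t-1} \le t-1$, we have $t-m \ge 1$ throughout, so an elementary estimate of the two groups separately (using $t-N+2h-1 \le 2(t-N+h-1)$, valid when $t-N\ge1$) yields $\sum_{m} c_m^2 \le \tfrac{5\,(t-N+h-1)^2}{h^3}$.

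Next I would fix a value $N$ with $1 \le h(N) \le \lfloor N/2\rfloor$ and view $Z_N \coloneqq \sum_{m=N-2h(N)+1}^{N} c_m\,\xi_{i,m}$ as a random variable on the whole probability space: the noise sequence $(\xi_{i,m})_{m\ge1}$ of the successive pulls of arm $i$ is well defined and made of independent $\sigma^2$-sub-Gaussian variables irrespective of the policy, and for fixed $N$ the coefficients $c_m$ are deterministic. Hence the usual moment-generating-function argument (peeling off one $\xi_{i,m}$ at a time) shows $Z_N$ is $\sigma^2\sum_m c_m^2$-sub-Gaussian, so $\Pr(|Z_N| > u) \le 2\exp\!\big(-\tfrac{u^2}{2\sigma^2\sum_m c_m^2}\big) \le 2\exp\!\big(-\tfrac{u^2 h(N)^3}{10\,\sigma^2 (t-N+h(N)-1)^2}\big)$ by the coefficient bound; taking $u = \beta^{\text{\red},h(N)}_i(t,\delta_t) = \sigma\,(t-N+h(N)-1)\sqrt{10\log(1/\delta_t)/h(N)^3}$ makes the exponent equal to $\log(1/\delta_t)$, so $\Pr\!\big(|Z_N| > \beta^{\text{\red},h(N)}_i(t,\delta_t)\big) \le 2\delta_t$.

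Finally I would glue the pieces by splitting on $\{N_{i,t-1} = N\}$. On that event $\widehat{\mu}_i^{\text{\red},h_{i,t}}(t) - \widetilde{\mu}_i^{\text{\red},h_{i,t}}(t) = Z_N$ and $\beta^{\text{\red},h_{i,t}}_i(t,\delta_t) = \beta^{\text{\red},h(N)}_i(t,\delta_t)$ (and when $h(N)=0$ both are $+\infty$, so the deviation event cannot occur there), so using only $\Pr(A\cap B)\le\Pr(A)$,
\begin{align*}
	\Pr\!\left(\big|\widehat{\mu}_i^{\text{\red},h_{i,t}}(t) - \widetilde{\mu}_i^{\text{\red},h_{i,t}}(t)\big| > \beta^{\text{\red},h_{i,t}}_i(t,\delta_t)\right) \le \sum_{N=1}^{t-1} \Pr\!\left(|Z_N| > \beta^{\text{\red},h(N)}_i(t,\delta_t)\right) \le 2(t-1)\delta_t \le 2\,t^{1-\alpha},
\end{align*}
using $\delta_t = t^{-\alpha}$ in the last step.

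The main obstacle --- and the reason the bound carries the extra factor $t$, as well as why the hypothesis that $h_{i,t}$ depend on $N_{i,t-1}$ \emph{only} is needed --- is the coupling between $N_{i,t-1}$ and the noise: one cannot simply condition on $\{N_{i,t-1}=N\}$ and then invoke concentration, because that event is measurable with respect to the very noise variables in $Z_N$, which would both break their independence and make the coefficients $c_m$ random, defeating the sub-Gaussian argument. Paying a union bound over the at most $t$ possible values of $N_{i,t-1}$ is the standard workaround (cf.~\citet{seznec2020asingle}). The only genuinely computational point is to track the constants in the coefficient bound carefully enough that the factor $10$ hard-coded in $\beta^{\text{\red},h}_i$ collapses the tail to exactly $2\delta_t$ and not to a weaker power of $\delta_t$.
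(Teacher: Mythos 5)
Your proposal is correct and follows essentially the same route as the paper's proof: the same decomposition of the deviation into a weighted sum of per-pull noise terms over two disjoint index blocks (you merely work with the unscaled difference, where the paper scales by $h$), the same sub-Gaussian/Azuma--Hoeffding concentration yielding the constant $5$ in the squared-weight bound and hence the $10$ in $\beta^{\text{\red},h}_i$, and the same union bound over the at most $t$ possible values of $N_{i,t-1}$ to decouple the window from the noise. Your closing remark about why conditioning on $\{N_{i,t-1}=N\}$ would break the argument matches the paper's motivation for requiring $h_{i,t}=h(N_{i,t-1})$.
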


\begin{proof}
First of all, we observe under the event $\{h_{i,t} = 0\}$, then $\widehat{\mu}_i^{\text{\red},h_{i,t}}(t) = \widetilde{\mu}_i^{\text{\red},h_{i,t}}(t) = \beta^{\text{\red},h_{i,t}}_i(t,\delta_t)= +\infty$. By convening that $(+\infty) - (+\infty) = 0$, we have that $0 >  \beta^{\text{\red},h_{i,t}}_i(t,\delta_t)$ is not satisfied. Thus, we perform the analysis under the event $\{h_{i,t} \ge 1\}$. We first get rid of the dependence on the random number of pulls $N_{i,t-1}$:
%
\begin{align}
 &\Pr  \left(\left| \widehat{\mu}_i^{\text{\red},h_{i,t}}(t) - \widetilde{\mu}_i^{\text{\red},h_{i,t}}(t) \right| >  \beta^{\text{\red}, h_{i,t}}_i(t,\delta_t) \right) \notag \\
   &= \Pr \left(\left| \widehat{\mu}_i^{\text{\red},h(N_{i,t-1}) }(t) - \widetilde{\mu}_i^{\text{\red},h(N_{i,t-1}) }(t) \right| >  \beta^{\text{\red},h(N_{i,t-1}) }_i(t,\delta_t) \right) \label{:pp:-000} \\
   &\le \Pr \Big(\exists n \in \{0,\dots,t-1\} \text{ s.t. } h(n) \ge 1\,:\, \nonumber\\ & \hspace{3.5 cm}\left| \widehat{\mu}_i^{\text{\red},h(n)}(t) - \widetilde{\mu}_i^{\text{\red}, h(n)}(t) \right| >  \beta^{\text{\red},h(n)}_i(t,\delta_t) \Big) \notag \\
   &\le \sum_{n \in \{0,\dots,t-1\} \,:\, h(n) \ge 1} \Pr \left(\left| \widehat{\mu}_i^{\text{\red},h(n)}(t) - \widetilde{\mu}_i^{\text{\red},h(n)}(t) \right| >  \beta^{\text{\red},h(n)}_i(t,\delta_t) \right), \label{:pp:-010}
\end{align}
where line~\eqref{:pp:-000} derives from the definition of $h_{i,t} = h(N_{i,t-1})$ and  line~\eqref{:pp:-010} follows from a union bound over the possible values of $N_{i,t-1}$. Now, having fixed the value of $n$, we rewrite the quantity to be bounded:
\begin{align*}
h(n) & \left( \widehat{\mu}_i^{\text{\red},h(n)}(t) - \widetilde{\mu}_i^{\text{\red},h(n)}(t) \right) =\\ &= \sum_{l=n-h(n)+1}^{n} \left(X_l + (t-l) \frac{X_l - X_{l-h(n)}}{h(n)}\right)\\
&  = \sum_{l=n-h(n)+1}^{n}  \left(1 + \frac{t-l}{h(n)}\right) X_l - \sum_{l=n-h(n)+1}^{n}  \frac{t-l}{h(n)} \cdot X_{l-h(n)},
\end{align*}
where $X_{l} \coloneqq R_{t_{i,l}} - \mu_i(l)$. It is worth noting that we can index $X_l$ with the number of pulls $l$ only as the distribution of $R_{t_{i,l}}$ is fully determined by $l$ and $n$  (that are non-random quantities now) and, consequently, all variables $X_l$ and $X_{l-h(n)}$ are independent.

Now we apply Azuma-Ho\"effding's inequality of Lemma~\ref{lemma:hoeffding} for weighted sums of subgaussian martingale difference sequences. To this purpose, we compute the sum of the square weights:
\begin{align}
	\sum_{l=n-h(n)+1}^{n} \left(1+ \frac{t-l}{h(n)}\right)^2 & + \sum_{l=n-h(n)+1}^{n} \left( \frac{t-l}{h(n)} \right)^2\le \nonumber\\
	 & \le h(n) \left(1+ \frac{ t-n+h(n)-1}{h(n)}\right)^2 +\nonumber\\ & \hspace{1 cm}+ h(n)\left( \frac{ t-n+h(n)-1}{h(n)} \right)^2 \label{eq:prim}\\
	 &  \le \frac{ 5(t-n+h(n)-1)^2}{h(n)}, \label{eq:secon}
	 \end{align}
	 where line~\eqref{eq:prim} follows from bounding $t-l \le t-n+h(n)-1$ and line~\eqref{eq:secon} from observing that $\frac{ t-n+h(n)-1}{h(n)} \ge 1$. Thus, we have:
	 \begin{align*}
	 & \Pr \left(\left| \widehat{\mu}_i^{\text{\red},h(n)}(t) - \widetilde{\mu}_i^{\text{\red}, h(n)}(t) \right| >  \beta^{\text{\red},h(n)}_i(t,\delta_t) \right)\le \\
	 & \le \Pr \left(\left| \sum_{l=n-h(n)+1}^{n}  \hspace{-0.24 cm}\left(1 + \frac{t-l}{h(n)}\right) X_l - \hspace{-0.7 cm}\sum_{l=n-h(n)+1}^{n} \hspace{-0.4 cm} \frac{t-l}{h(n)}  X_{l-h(n)} \right| > h(n) \beta^{\text{\red},h(n)}_i(t,\delta_t) \right)\\
	 & \le 2 \exp \left( - \frac{\left(h(n) \beta^{\text{\red},h(n)}_i(t,\delta_t) \right)^2}{2 \sigma^2 \left( \frac{ 5(t-n+h(n)-1)^2}{h(n)} \right)}\right) = 2 \delta_t.
	 \end{align*}
	 By replacing this result into Equation~\eqref{:pp:-010}, and recalling the value of $\delta_t$, we obtain:
	 \begin{align*}
	 \sum_{n\in\{0,\dots,t-1\} \,:\, h(n) \ge 1}  2 \delta_t \le \sum_{n=0}^{t-1}  2 \delta_t = \sum_{n=0}^{t-1} 2t^{-\alpha} \le 2t^{1-\alpha}.
	 \end{align*}
\end{proof}

\theRegretRestedStochastic*
\begin{proof}
Let us define the good events $\mathcal{E}_t = \bigcap_{i \in [K]} \mathcal{E}_{i,t}$ that correspond to the event in which all confidence intervals hold:
\begin{align*} 
\mathcal{E}_{i,t} \coloneqq \left\{ \left| \widetilde{\mu}_i^{\text{\red},h_{i,t}}(t) - \widehat{\mu}_i^{\text{\red},h_{i,t}}(t)\right| \le \beta^{\text{\red},h_{i,t}}_i(t)\right\} \qquad \forall i \in [T], \,i \in [K]
\end{align*}
We have to analyze the following expression:
\begin{align*}
	R_{\bm{\mu}}(\text{\algrested},T) = \E \left[\sum_{t=1}^T \mu_{i^{\star}}(t) - \mu_{I_t}(N_{i,t})\right] ,
\end{align*}
where $i^{\star} \in \argmax_{i \in [K]}\left\{\sum_{l \in  [T]} \mu_i(l) \right\}$.  We decompose the above expression according to the good events $\mathcal{E}_t$:
\begin{align}
R_{\bm{\mu}}(\text{\algrested},T) & = \sum_{t=1}^T \mathbb{E} \left[\left( \mu_{i^{\star}}(t) - \mu_{I_t}(N_{I_t,t}) \right)\mathds{1}\{\mathcal{E}_t\} \right] + \nonumber\\&\hspace{2 cm}+\sum_{t=1}^T \mathbb{E} \left[\left( \mu_{i^{\star}}(t) - \mu_{I_t}(N_{I_t,t}) \right)\mathds{1}\{\lnot\mathcal{E}_t\}\right] \\
& \le \sum_{t=1}^T \mathbb{E} \left[\left( \mu_{i^{\star}}(t) - \mu_{I_t}(N_{I_t,t}) \right)\mathds{1}\{\mathcal{E}_t\}\right] + \sum_{t=1}^T \mathbb{E} \left[\mathds{1}\{\lnot\mathcal{E}_t\}\right], \label{eq:terz}
\end{align}
where we exploited $\mu_{i^{\star}}(t) - \mu_{I_t}(N_{I_t,t}) \le 1$ in line~\eqref{eq:terz}.
Now, we bound the second summation, recalling that $\alpha > 2$:
\begin{align*}
\sum_{t=1}^T \mathbb{E} \left[\mathds{1}\{\lnot\mathcal{E}_t\}\right] &= \sum_{t=1}^T \Pr\left(\lnot\mathcal{E}_t\right) =  1 + \sum_{t=2}^T \Pr\left( \lnot \bigcap_{i \in [K]}  \mathcal{E}_{i,t}\right) \\&\le 1 + \sum_{t=2}^T \Pr\left( \bigcup_{i \in [K]} \lnot \mathcal{E}_{i,t}\right) \le 1 + \sum_{i \in [K]} \sum_{t=2}^T \Pr\left( \lnot \mathcal{E}_{i,t}\right),
\end{align*}
where the first inequality is obtained with $\Pr(\lnot \mathcal{E}_{1}) \le 1$ and the second with a union bound over $[K]$. Recalling $\Pr(\lnot\mathcal{E}_{i,t})$ was bounded in Lemma~\ref{lemma:firstEstimatorConc}, we bound the summation with the integral as in Lemma~\ref{lemma:tecSum} to get:
\begin{align*}
	\sum_{i \in [K]} \sum_{t=2}^T \Pr\left( \lnot \mathcal{E}_{i,t}\right) \le \sum_{i \in [K]} \sum_{t=2}^T 2t^{1-\alpha} \le 2K \int_{x=1}^{+\infty} x^{1-\alpha} \de x = \frac{2K}{\alpha-2}.
\end{align*}
From now on, we proceed the analysis under the good events $\mathcal{E}_t$, recalling that $B_i(t) \equiv \widehat{\mu}_i^{\text{\red}, h_{i,t}}(t) + \beta_i^{\text{\red}, h_{i,t}}(t,\delta_t)$. 
We consider each addendum of the summation and we exploit the optimism, \ie $B_{i^{\star}}(t) \le B_{I_t}(t)$:
\begin{align*}
	\mu_{i^{\star}}(t) - \mu_{I_t}(N_{I_t,t}) &+ B_{I_t}(t) - B_{I_t}(t) \le \\ &\le  \min \left\{ 1, \underbrace{\mu_{i^{\star}}(t) - B_{i^{\star}}(t)}_{\le 0} + B_{I_t}(t) - \mu_{I_t}(N_{I_t,t}) \right\} \\
	& \le \min \left\{ 1, B_{I_t}(t) - \mu_{I_t}(N_{I_t,t})\right\}.
\end{align*}
Now, we work on the term inside the minimum:
\begin{align}
	B_{I_t}(t) - \mu_{I_t}(N_{I_t,t}) &= \widehat{\mu}_{I_t}^{\text{\red},h_{I_t,t}}(t)  + \beta^{\text{\red},h_{I_t,t}}_{I_t}(t,\delta_t)-\mu_{I_t}(N_{I_t,t}) \label{:pp:-2001} \\
	& \le \underbrace{\widetilde{\mu}_{I_t}^{\text{\red},h_{I_t,t}}(t) - \mu_{I_t}(N_{I_t,t})}_{\text{(a)}}+ \underbrace{2 \beta^{\text{\red},h_{I_t,t}}_{I_t}(t,\delta_t)}_{\text{(b)}},\label{:pp:-2002}
\end{align}	
where line~\eqref{:pp:-2001} follows from the definition of $B_{i}(t)$, and line~\eqref{:pp:-2002} derives from the fact that we are under the good event $\mathcal{E}_t$. We now decompose over the arms and consider one term at a time. We start with (a):
\begin{align}
&\sum_{t=1}^T \min \left\{1, \widetilde{\mu}_{I_t}^{\text{\red},h_{I_t,t}}(t) - \mu_{I_t}(N_{I_t,t}) \right\} \le\nonumber\\& \le 2K + \sum_{i \in [K]} \sum_{j=3}^{N_{i,T}}\min \left\{1, \widetilde{\mu}_{i}^{\text{\red},h_{i,t_{i,j}}}(t_{i,j}) - \mu_{i}(j) \right\} \\
& \le 2K \hspace{-0.13 cm}+ \hspace{-0.21 cm}\sum_{i \in [K]} \hspace{-0.1 cm} \sum_{j=3}^{N_{i,T}}\min \left\{1, \frac{1}{2} (2t_{i,j}\hspace{-0.15 cm} - 2(j-1)+ h_{i,t_{i,j}} \hspace{-0.16 cm}- 1)\gamma_{i}((j-1)-2h_{i,t_{i,j}}\hspace{-0.15 cm}+\hspace{-0.1 cm}1) \right\}  \label{:pr:001}\\
& \le 2K + \sum_{i \in [K]} \sum_{j=3}^{N_{i,T}}\min \left\{1, T \gamma_{i}(j- 2\lfloor \epsilon (j-1) \rfloor) \right\}\label{:pr:002}\\
& \le 2K + \sum_{i \in [K]} \sum_{j=3}^{N_{i,T}}\min \left\{1, T \gamma_{i}(\lfloor (1-2\epsilon) j \rfloor ) \right\}\label{:pr:003}\\
& \le 2K + T^q \sum_{i \in [K]} \sum_{j=3}^{N_{i,T}} \gamma_{i}(\lfloor (1-2\epsilon) j \rfloor )^q \label{:pr:0035}\\
& \le 2K + T^q \left\lceil \frac{1}{1-2\epsilon}  \right\rceil  \sum_{i \in [K]} \sum_{j=\lfloor 3(1-2\epsilon)\rfloor}^{\lfloor (1-2\epsilon) N_{i,T} \rfloor} \gamma_{i}( j ) \label{:pr:004}\\
& \le 2K + K T^q \left\lceil \frac{1}{1-2\epsilon}  \right\rceil  \Upsilon_{\bm{\mu}}\left( \left\lceil (1-2\epsilon) \frac{T}{K} \right\rceil,q\right),\label{:pr:007}
\end{align}
where line~\eqref{:pr:001} follows from Lemma~\ref{lemma:firstEstimator}, line~\eqref{:pr:002} is obtained by bounding $2t_{i,j} - 2(j-1)+ h_{i,t_{i,j}} - 1 \le 2T$ and exploiting the definition of $h_{i,t} = \lfloor \epsilon N_{i,t-1} \rfloor$, line~\eqref{:pr:003} follows from the observation $j- 2\lfloor \epsilon (j-1) \rfloor  \ge j- 2\epsilon (j-1)  \ge \lfloor (1-2\epsilon) j \rfloor$, line~\eqref{:pr:0035} is obtained from the already exploited inequality $\min\{1,x\} \le \min\{1,x\}^q \le x^q$ for $q \in [0,1]$, line~\eqref{:pr:004} is an application of Lemma~\ref{lemma:sumWithFloor}, {and line~\eqref{:pr:007} follows from Lemma~\ref{lemma:boundUpsilonMax} recalling that $\sum_{i\in [K]} \lfloor  (1-2\epsilon) N_{i,T} \rfloor \le (1-2\epsilon) T$.}

Let us now move to the concentration term (b). We decompose over the arms as well, taking care of the pulls in which $h_{i,j} = 0$, that are at most $1+\left\lceil \frac{1}{\epsilon} \right\rceil$:
\begin{align}
	&\sum_{t=1}^T \min\left\{1, 2 \beta^{\text{\red},h_{I_t,t}}_{I_t}(t,\delta_t)\right\} \le\nonumber\\& \le K \hspace{-0.1 cm}+\hspace{-0.1 cm} K\left\lceil \frac{1}{\epsilon} \right\rceil \hspace{-0.12 cm}+ \hspace{-0.21 cm}\sum_{i \in [K]} \sum_{j=\left\lceil \frac{1}{\epsilon} \right\rceil + 1}^{N_{i,T}}\hspace{-0.4 cm}\min \left\{1, 2\sigma (t_{i,t}- (j-1) + h_{i,t_{i,t}} \hspace{-0.14 cm}- 1) \sqrt{\frac{10 \log (t^\alpha)}{h_{i,t_{i,t}}^3}} \right\} \notag.\\
	& = K + K\left\lceil \frac{1}{\epsilon} \right\rceil + \sum_{i \in [K]} \sum_{j=\left\lceil \frac{1}{\epsilon} \right\rceil + 1}^{N_{i,T}}\hspace{-0.4 cm}\min \left\{1, 2\sigma T \sqrt{\frac{10 \alpha \log (T)}{ \lfloor \epsilon (j-1) \rfloor^3}} \right\},\label{:pr:008}
%
%
%
\end{align}
where line~\eqref{:pr:008} follows from bounding $t^\alpha \le T^\alpha$ and from the definition of $h_{i,t} = \lfloor \epsilon N_{i,t-1} \rfloor$. To bound the summation, we compute the minimum integer value $j^* $ (that turns out to be independent of $i$) of $j$ such that the minimum is attained by its second argument:
\begin{align*}
	2\sigma T \sqrt{\frac{10 \alpha \log (T)}{ \lfloor \epsilon (j-1) \rfloor^3}}  \le 1 & \implies \lfloor \epsilon (j-1) \rfloor \ge (2\sigma T)^{\frac{2}{3}} \left( 10 \alpha \log T \right)^{\frac{1}{3}} \\
	& \implies j^* = \left\lceil \frac{1+\epsilon + (2\sigma T)^{\frac{2}{3}} \left( 10 \alpha \log T \right)^{\frac{1}{3}}}{\epsilon} \right\rceil.
\end{align*}
Thus, we have:
\begin{align}
 & K +  K\left\lceil \frac{1}{\epsilon} \right\rceil + \sum_{i \in [K]} \sum_{j=\left\lceil \frac{1}{\epsilon} \right\rceil + 1}^{N_{i,T}} \min \left\{1, 2\sigma T \sqrt{\frac{10 \alpha \log (T)}{ \lfloor \epsilon (j-1) \rfloor^3}} \right\}\le \nonumber\\ &\le  K + K\left\lceil \frac{1}{\epsilon} \right\rceil + \sum_{i \in [K]} \left(\sum_{j=\lceil \frac{1}{\epsilon}\rceil+1}^{j^*} 1 + \sum_{j=j^*+1}^{N_{i,T}} 2\sigma T \sqrt{\frac{10 \alpha \log (T)}{ \lfloor \epsilon (j-1) \rfloor^3}} \right) \label{:pr:991}\\
& \le K + K\left\lceil \frac{1}{\epsilon} \right\rceil + K\left(j^*-1-\left\lceil \frac{1}{\epsilon} \right\rceil +1 \right) + \nonumber\\& \hspace{3 cm}+2 K \sigma T \sqrt{{10 \alpha \log (T)}} \int_{x=j^*}^{+\infty}  \frac{1}{(\epsilon (x-1) - 1)^\frac{3}{2}} \de x \label{:pr:992} \\
& = K + Kj^* + \frac{4 K \sigma T \sqrt{{10 \alpha \log (T)}}}{ \epsilon \left( \epsilon (j^*-1) - 1 \right)^{\frac{1}{2}}} \notag \\
& = K \left(3 + \frac{1}{\epsilon} \right)+ \frac{3K}{\epsilon}(2\sigma T)^{\frac{2}{3}} \left( 10 \alpha \log T \right)^{\frac{1}{3}}\label{:pr:993},
\end{align}
where line~\eqref{:pr:991} is obtained by splitting the summation based on the value of $j^*$, line~\eqref{:pr:992} comes from bounding the summation with the integral (Lemma~\ref{lemma:tecSum}), and line~\eqref{:pr:993} follows from substituting the value of $j^*$ and simple algebraic manipulations.
Putting all together, we obtain:
\begin{align*}
R_{\bm{\mu}}(\text{\algrested},T)  \le 1+ \frac{2K}{\alpha-2} &+ 5K + \frac{K}{\epsilon}+  \frac{3K}{\epsilon}(2\sigma T)^{\frac{2}{3}} \left( 10 \alpha \log T \right)^{\frac{1}{3}} +\\ &+K T^q \left\lceil \frac{1}{1-2\epsilon}  \right\rceil  \Upsilon_{\bm{\mu}}\left( \left\lceil (1-2\epsilon) \frac{T}{K} \right\rceil,q\right)\end{align*}
\begin{align*}
\hspace{1.7 cm} = \BigO \Bigg(  K T^q \left\lceil \frac{1}{1-2\epsilon}  \right\rceil  &\Upsilon_{\bm{\mu}}\left( \left\lceil (1-2\epsilon) \frac{T}{K} \right\rceil,q\right) +\\& +  \frac{K}{\epsilon}(\sigma T)^{\frac{2}{3}} \left(\alpha \log T \right)^{\frac{1}{3}} \Bigg) .
%
\end{align*}
\end{proof}

\clearpage

\clearpage
\section{Bounding the Cumulative Increment}\label{apx:example}
Let us consider the case in which $\gamma_i(l) \le l^{-c}$ for all $i \in [K]$ and $l \in[T]$. We bound the cumulative increment with the corresponding integral using Lemma~\ref{lemma:tecSum}, depending on the value of $cq$:
\begin{align*}
	\Upsilon_{\bm{\mu}}\left(\left\lceil \frac{T}{K} \right\rceil , q\right) = \sum_{l=1}^{\left\lceil \frac{T}{K} \right\rceil } {\max_{i \in [K]}}\gamma_i(l)^q &\le 1 + \int_{x=1}^{\frac{T}{K}} x^{-cq} \de x \nonumber\\ &\le 1+ \begin{cases}
		\left(\frac{T}{K}\right)^{1-cq} \frac{1}{1-cq} & \text{if } cq < 1 \\
		\log \frac{T}{K} & \text{if } cq = 1\\
		\frac{1}{cq-1} & \text{if } cq > 1
	\end{cases}.
\end{align*}
Thus, depending on the value of $c$, there will be different optimal values for $q$ in the rested and restless cases that optimize the regret upper bound. From Theorem~\ref{thr:theRegretDetRested}, we have:
\begin{align*}
	R_{\bm{\mu}} & \le 2K + T^q K  \Upsilon_{\bm{\mu}}\left(\left\lceil \frac{T}{K} \right\rceil,q \right) \le 2K + K T^q + K \begin{cases}
		\frac{T^{1-cq+q}}{K^{1-qc}(1-cq)} & \text{if } cq < 1 \\
		T^q \log  \frac{T}{K} & \text{if } cq = 1\\
		\frac{T^q}{cq-1} & \text{if } cq > 1
	\end{cases}\\
	&  \le \BigO \left( K \begin{cases}
		\frac{T^{1-cq+q}}{K^{1-qc}(1-cq)} & \text{if } cq < 1 \\
		T^q \log \frac{T}{K} & \text{if } cq = 1\\
		\frac{T^q}{\min\{1,cq-1\}} & \text{if } cq > 1
	\end{cases} \right) \qquad \forall q \in [0,1],
\end{align*}
where we have highlighted the dominant term. For the case $c \in (0,1)$ we consider the first case only and minimize over $q$:
\begin{align*}
R_{\bm{\mu}} \le \BigO \left(K \min_{q \in [0,1]}\frac{T^{1-cq+q}}{K^{1-qc}(1-cq)} \right) =  \BigO \left(T \right).
\end{align*}
For the case $c=1$, we still obtain $R_{\bm{\mu}} \le\BigO \left(T \right)$. Instead, for $c \in (1,+\infty)$, we have the three cases:
\begin{align*}
	R_{\bm{\mu}} \le\BigO \left( K \min \begin{cases}
		K \min_{q \in [0,1/c)}\frac{T^{1-cq+q}}{K^{1-qc}(1-cq)}  \\
		T^\frac{1}{c} \log \frac{T}{K} \\
		\min_{q \in (1/c,1]} \frac{T^q}{\min\{1,cq-1\}} 
	\end{cases}\right) = \BigO\left( K T^\frac{1}{c} \log \frac{T}{K}\right).
\end{align*}

Let us consider the case in which $\gamma_i(l) \le e^{-cl}$ for all $i \in [K]$ and $l \in[T]$. We bound the cumulative increment with the corresponding integral using Lemma~\ref{lemma:tecSum}, depending on the value of $cq$:
\begin{align*}
	\Upsilon_{\bm{\mu}}\left(\left\lceil \frac{T}{K} \right\rceil , q\right) = \sum_{l=1}^{\left\lceil \frac{T}{K} \right\rceil } {\max_{i \in [K]}}\gamma_i(l)^q \le 1 + \int_{x=1}^{\frac{T}{K}} e^{-cqx} \de x \le 1+ \frac{e^{-cq}}{cq}.
\end{align*}
By plugging the expression into the regret bound of Theorem~\ref{thr:theRegretDetRested}, we have:
\begin{align*}
	R_{\bm{\mu}} & \le 2K + T^q K  \Upsilon_{\bm{\mu}}\left(\left\lceil \frac{T}{K} \right\rceil,q \right) \le 2K + K T^q + KT^q\frac{e^{-cq}}{cq}.
\end{align*}
By choosing $q=1/\log(T)$, we obtain:
\begin{align*}
	R_{\bm{\mu}} & \le 2K + K T^\frac{1}{\log (T) } + \frac{K}{c} e^{1 - \frac{c}{\log
 (T)}}\log T \le \mathcal{O}\left( \frac{K}{c} \log T \right),
\end{align*}
having observed that $T^{\frac{1}{\log T}} =e $ and $e^{1 - \frac{c}{\log(T)} } \le e$.

\newpage
\section{Technical Lemmas}

\begin{lemma}\label{lemma:sumWithFloor}
	Let $M \ge 3$, and let $f: \Nat \rightarrow \Reals$, and $\beta \in (0, 1)$. Then it holds that:
	\begin{align*}
		\sum_{j=3}^{M} f(\lfloor \beta j \rfloor) \le \left\lceil \frac{1}{\beta} \right\rceil \sum_{l=\lfloor 3 \beta \rfloor}^{\lfloor \beta M \rfloor} f(l).
	\end{align*}
\end{lemma}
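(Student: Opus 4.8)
The proof is an elementary counting argument. The key observation is that, as $j$ ranges over the integers in $\{3,\dots,M\}$, the map $j \mapsto \lfloor \beta j \rfloor$ is non-decreasing (since $\beta > 0$) and takes each integer value at most $\lceil 1/\beta \rceil$ times; regrouping the left-hand sum according to the value of $\lfloor \beta j \rfloor$ then gives the bound, using that $f$ is non-negative, which is the only case in which we apply the lemma (namely $f = \gamma_i(\cdot)^q$, non-negative by Assumption~\ref{ass:incr}).

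Concretely, I would first note that for every $j \in \{3,\dots,M\}$ monotonicity gives $\lfloor 3\beta \rfloor \le \lfloor \beta j \rfloor \le \lfloor \beta M \rfloor$ (the range being nonempty since $M \ge 3$ and $\beta > 0$), so, writing $c_l \coloneqq \lvert\{\, j \in \Nat : 3 \le j \le M,\ \lfloor \beta j \rfloor = l \,\}\rvert$ for each integer $l$, we may regroup
\begin{align*}
	\sum_{j=3}^{M} f(\lfloor \beta j \rfloor) = \sum_{l=\lfloor 3\beta \rfloor}^{\lfloor \beta M \rfloor} c_l\, f(l).
\end{align*}
Next I would bound $c_l$: the condition $\lfloor \beta j \rfloor = l$ is equivalent to $l/\beta \le j < (l+1)/\beta$, i.e.\ $j$ lies in a half-open interval of length $1/\beta$. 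An interval $[a, a+L)$ contains at most $\lceil a+L \rceil - \lceil a \rceil \le \lceil L \rceil$ integers (the last inequality by subadditivity of the ceiling), hence $c_l \le \lceil 1/\beta \rceil$ for every $l$; further intersecting the index set with $[3,M]$ can only decrease this count.

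Finally, since $f(l) \ge 0$ for all $l$, substituting $c_l \le \lceil 1/\beta \rceil$ into the displayed identity yields
\begin{align*}
	\sum_{j=3}^{M} f(\lfloor \beta j \rfloor) = \sum_{l=\lfloor 3\beta \rfloor}^{\lfloor \beta M \rfloor} c_l\, f(l) \le \left\lceil \frac{1}{\beta} \right\rceil \sum_{l=\lfloor 3\beta \rfloor}^{\lfloor \beta M \rfloor} f(l),
\end{align*}
which is exactly the claim. There is no substantive obstacle here; the only point requiring care is the off-by-one bookkeeping in the statement ``an interval of length $1/\beta$ contains at most $\lceil 1/\beta \rceil$ integers'' (one must resist writing $\lfloor 1/\beta \rfloor$ or $\lfloor 1/\beta \rfloor + 1$), together with the implicit use of $f \ge 0$ in the last inequality.
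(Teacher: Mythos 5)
Your proof is correct and is essentially the paper's own argument made rigorous: the paper likewise regroups the left-hand sum by the value of $\lfloor \beta j \rfloor$, notes the range runs from $\lfloor 3\beta \rfloor$ to $\lfloor \beta M \rfloor$, and bounds the multiplicity of each value by $\lceil 1/\beta \rceil$. Your explicit remark that the final step needs $f \ge 0$ (which the lemma statement omits but which holds in the only application, $f = \gamma_i(\cdot)^q$) is a worthwhile clarification the paper leaves implicit.
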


\begin{proof}
	We simply observe that the minimum value of $\lfloor \beta j \rfloor$ is $\lfloor 3\beta  \rfloor$ and its maximum value is $\lfloor \beta M \rfloor$. Each element $\lfloor \beta j \rfloor$ changes value at least one time every $\left\lceil \frac{1}{\beta} \right\rceil$ times.
\end{proof}

{
\begin{lemma}\label{lemma:boundUpsilonMax}
	Under Assumption~\ref{ass:decrDeriv}, it holds that:
	\begin{align*}
		\max_{\substack{(N_{i,T})_{i \in [K]} \\ N_{i,T} \ge 0, \sum_{i \in [K]}N_{i,T} = T}}   \;\;\sum_{i \in [K]} \sum_{l=1}^{N_{i,T}-1} \gamma_i(l)^q \le K \Upsilon_{\bm{\mu}}\left( \left\lceil \frac{T}{K} \right\rceil, q \right).
	\end{align*}
\end{lemma}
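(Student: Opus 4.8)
The plan is to decouple the $K$ arms via a pointwise bound and then solve the resulting integer allocation problem by a balancing/exchange argument. First I would replace each per-arm increment by its arm-wise worst case: setting $g(l)\coloneqq\max_{j\in[K]}\gamma_j(l)^q$, we have $\gamma_i(l)^q\le g(l)$ for all $i$ and $l$, hence $\sum_{i\in[K]}\sum_{l=1}^{N_{i,T}-1}\gamma_i(l)^q\le\sum_{i\in[K]}G(N_{i,T}-1)$, where $G(m)\coloneqq\sum_{l=1}^{m}g(l)$ (with $G(m)\coloneqq0$ for $m\le0$). The structural input from Assumption~\ref{ass:decrDeriv} is that every $\gamma_j(\cdot)$ is non-increasing, and since $x\mapsto x^q$ is non-decreasing on $[0,\infty)$ for $q\ge0$, so is each $\gamma_j(\cdot)^q$; being a pointwise maximum of non-increasing sequences, $g$ is non-increasing, so $G$ has non-increasing increments, i.e.\ $G$ is concave on $\Nat$.

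Next I would bound $\sum_{i\in[K]}G(N_{i,T}-1)$ over all feasible allocations with $\sum_i N_{i,T}=T$ and $N_{i,T}\ge0$ by an exchange argument: if an allocation has $N_{a,T}\ge N_{b,T}+2$, then moving one unit from $a$ to $b$ changes the objective by $\big(G(N_{b,T})-G(N_{b,T}-1)\big)-\big(G(N_{a,T}-1)-G(N_{a,T}-2)\big)$, which is non-negative as soon as $N_{b,T}\ge1$ because the increments of $G$ are non-increasing and $N_{b,T}\le N_{a,T}-2$. Iterating this exchange, a maximiser can be taken with all positive coordinates differing by at most one, so $N_{i,T}\le\lceil T/K\rceil$ for every $i$; monotonicity of $G$ then gives $\sum_{i\in[K]}G(N_{i,T}-1)\le K\,G(\lceil T/K\rceil-1)$, and by definition $G(\lceil T/K\rceil-1)=\sum_{l=1}^{\lceil T/K\rceil-1}\max_{j\in[K]}\gamma_j(l)^q=\Upsilon_{\bm{\mu}}(\lceil T/K\rceil,q)$, which is the claim.

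An alternative route I would keep in reserve avoids the per-arm function entirely: one can write $\sum_{i\in[K]}\sum_{l=1}^{N_{i,T}-1}\gamma_i(l)^q\le\sum_{l\ge1}c_l\,g(l)$ with $c_l\coloneqq|\{i:N_{i,T}\ge l+1\}|$, noting $0\le c_l\le K$, $c_l$ non-increasing in $l$, and $\sum_{l\ge1}c_l=\sum_i(N_{i,T}-1)^+\le T$; since $g$ is non-increasing, $\sum_l c_l g(l)$ is largest when the mass is front-loaded ($c_l=K$ on an initial segment of indices), which again yields $K\sum_{l=1}^{\lceil T/K\rceil-1}g(l)$. The hard part will be making the balancing/front-loading step fully rigorous — in particular handling the boundary terms, since arms with $N_{i,T}\in\{0,1\}$ contribute an empty sum and thus create a flat region at the bottom of $G$ that must be accounted for when arguing optimality of the balanced allocation, together with the floor/ceiling bookkeeping in $\lceil T/K\rceil$.
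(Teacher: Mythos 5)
Your proposal follows essentially the same route as the paper's proof: replace each $\gamma_i(l)^q$ by $g(l)\coloneqq\max_{j\in[K]}\gamma_j(l)^q$, so that the objective is dominated by $\sum_{i\in[K]}G(N_{i,T}-1)$ with $G(m)=\Upsilon_{\bm{\mu}}(m+1,q)$, and then invoke a concavity-based exchange to argue that a balanced allocation is optimal. The paper performs exactly this exchange (replacing a pair with gap $\Delta=N_{i_2,T}^*-N_{i_1,T}^*>1$ by $N_{i_1,T}^*+\lfloor\Delta/2\rfloor$ and $N_{i_1,T}^*+\lceil\Delta/2\rceil$), so on the main line the two arguments coincide.

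The boundary issue you flag at the end, however, is not mere bookkeeping: it is a genuine gap, and it is silently present in the paper's proof as well. Your exchange is non-decreasing only when $N_{b,T}\ge 1$; when $N_{b,T}=0$, the marginal gain of giving that arm one unit is $G(0)-G(-1)=0$ while the loss at arm $a$ is $g(N_{a,T}-1)\ge 0$, so balancing can strictly \emph{decrease} the objective, and the paper's identity $\Upsilon_{\bm{\mu}}(N_{i_2,T}^*,q)=\Upsilon_{\bm{\mu}}(N_{i_1,T}^*,q)+\sum_{j=1}^{\Delta}g(N_{i_1,T}^*+j-1)$ fails for the same reason when $N_{i_1,T}^*=0$. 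The inequality as stated is in fact violated by a small margin: take $K=2$, $T=10$, $\gamma_1(l)=\gamma_2(l)=1/10$ for $l\le 10$ (a valid rising instance); the allocation $(10,0)$ gives $9\cdot 10^{-q}$, whereas $K\,\Upsilon_{\bm{\mu}}(\lceil T/K\rceil,q)=2\cdot 4\cdot 10^{-q}=8\cdot 10^{-q}$. The reason is that each \emph{active} arm wastes one pull (its inner sum stops at $N_{i,T}-1$), so concentrating mass on $m<K$ arms sums $T-m$ terms of $g$ versus the $\ge T-K$ terms of the balanced allocation. Your reserve argument via the counts $c_l=|\{i:N_{i,T}\ge l+1\}|$ is actually the cleanest way to close this: front-loading gives $\sum_l c_l g(l)\le K\sum_{l=1}^{\lceil T/K\rceil}g(l)=K\,\Upsilon_{\bm{\mu}}(\lceil T/K\rceil+1,q)$, i.e.\ the correct statement carries a $+1$ inside the ceiling (equivalently, an additive term at most $K$), which is harmless wherever the lemma is invoked since it is absorbed into the existing $2K$ slack in the regret bounds. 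As written, though, neither your balancing step nor the paper's handles the zero-allocation case, and the displayed inequality does not hold verbatim.
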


\begin{proof}
First of all, by definition of cumulative increment, we have that for every $(N_{i,T})_{i \in [K]}$:
\begin{align*}
\sum_{i \in [K]} \sum_{l=1}^{N_{i,T}-1} \gamma_i(l)^q \le \sum_{i\in [K]} \Upsilon_{\bm{\mu}}(N_{i,T},q).
\end{align*}
We now claim that there exists an optimal assignment of $N_{i,T}^*$ are such that $|N_{i,T}^*-N_{i',T}^*| \le 1$ for all $i,i' \in [K]$. By contradiction, suppose that the only optimal assignments are such that there exists a pair $i_1,i_2 \in [K]$ such that $\Delta \coloneqq N_{i_2,T}^* - N_{i_1,T}^* > 1$. In such a case, we have:
	\begin{align*}
		 &\Upsilon_{\bm{\mu}}\left( N_{i_1,T}^*, q \right) + \Upsilon_{\bm{\mu}}\left( N_{i_2,T}^*, q \right)  = 2\Upsilon_{\bm{\mu}}\left( N_{i_1,T}^*, q \right) + \sum_{j = 1}^{\Delta} \max_{i \in [K]}\gamma_{i}(N_{i_1,T}^*+l-1)^q  \\
		 & \le 2\Upsilon_{\bm{\mu}}\left( N_{i_1,T}^*, q \right) +  \sum_{j = 0}^{\lceil \Delta / 2 \rceil}  \max_{i \in [K]} \gamma_{i}(N_{i_1,T}^*+l-1)^q  + \sum_{j = 1}^{\lfloor \Delta / 2 \rfloor}  \max_{i \in [K]} \gamma_{i}(N_{i_1,T}^*+l-1)^q \\
		 & = \Upsilon_{\bm{\mu}}\left( N_{i_1,T}^* + {\lceil \Delta / 2 \rceil}, q \right) + \Upsilon_{\bm{\mu}}\left( N_{i_1,T}^* + \lfloor \Delta / 2 \rfloor, q \right),
	\end{align*}
	where the inequality follows from Assumption~\ref{ass:decrDeriv}. Thus, by redefining $\widetilde{N}_{i_1,T}^* \coloneqq N_{i_1,T}^* + {\lfloor \Delta / 2 \rfloor}$ and $\widetilde{N}_{i_2,T}^* \coloneqq N_{i_1,T}^* + {\lceil \Delta / 2 \rceil}$, we have that $\widetilde{N}_{i_1,T}^* + \widetilde{N}_{i_2,T}^* = {N}_{i_1,T}^* + {N}_{i_2,T}^*$ and $|\widetilde{N}_{i_1,T}^* -\widetilde{N}_{i_2,T}^*| \le 1$. Thus, we have found a better solution to the optimization problem, contradicting the hypothesis. Since the optimal assignment fulfills $|N_{i,T}^*-N_{i',T}^*| \le 1$, it must be that $N_{i,T}^* \le \left\lceil \frac{T}{K} \right\rceil$ for all $i \in [K]$.
\end{proof}
}

\begin{restatable}[]{lemma}{boundDeriv}\label{lemma:boundDeriv}
	Under Assumptions~\ref{ass:incr} and~\ref{ass:decrDeriv}, for every $i\in [K]$, $k,k' \in \Nat$ with $k'<k$, for both rested and restless bandits, it holds that:
	\begin{align*}
		\gamma_i(k) \le \frac{\mu_i(k) - \mu_i(k')}{k-k'}.
	\end{align*}
\end{restatable}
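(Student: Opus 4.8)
The plan is to reduce the statement to the elementary observation that, under concavity (Assumption~\ref{ass:decrDeriv}), the increment sequence $\gamma_i(\cdot)$ is non-increasing, so that the \emph{average} increment over the block $\{k',\dots,k-1\}$ dominates the \emph{last} increment $\gamma_i(k)$.

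First I would telescope the numerator. By the very definition of the increments, for any $k'<k$,
\[
	\mu_i(k) - \mu_i(k') \;=\; \sum_{j=k'}^{k-1}\bigl(\mu_i(j+1)-\mu_i(j)\bigr) \;=\; \sum_{j=k'}^{k-1}\gamma_i(j),
\]
a sum of exactly $k-k'$ terms. Second, I would invoke Assumption~\ref{ass:decrDeriv}, which says $\gamma_i(n+1)-\gamma_i(n)\le 0$ for all $n$, i.e.\ $\gamma_i$ is non-increasing; hence for every index $j$ with $k'\le j\le k-1$ (so in particular $j<k$) we have $\gamma_i(j)\ge\gamma_i(k)$. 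Summing this over the $k-k'$ values of $j$ and dividing by $k-k'$ yields
\[
	\frac{\mu_i(k)-\mu_i(k')}{k-k'} \;=\; \frac{1}{k-k'}\sum_{j=k'}^{k-1}\gamma_i(j) \;\ge\; \frac{1}{k-k'}\sum_{j=k'}^{k-1}\gamma_i(k) \;=\; \gamma_i(k),
\]
which is exactly the claimed inequality.

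I do not expect any genuine obstacle here: the argument uses only concavity (Assumption~\ref{ass:incr} is not needed), and it is insensitive to whether $\mu_i$ is indexed by number of pulls (rested) or by time (restless), so the same few lines cover both cases stated in the lemma. The only point requiring care is the indexing bookkeeping --- checking that the indices $j$ appearing in the telescoped sum are precisely the $k-k'$ arguments $\{k',\dots,k-1\}$ and that each of them is strictly below $k$, so that the monotonicity of $\gamma_i$ is applied in the correct direction.
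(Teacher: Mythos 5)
Your proof is correct and is essentially identical to the paper's: both telescope $\mu_i(k)-\mu_i(k')$ into $\sum_{j=k'}^{k-1}\gamma_i(j)$ and use the concavity-induced monotonicity $\gamma_i(j)\ge\gamma_i(k)$ for $j<k$ to compare the average increment with the last one. Your side observation that only Assumption~\ref{ass:decrDeriv} is actually used also matches the paper's proof, which invokes concavity alone.
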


\begin{proof}
	Using Assumption~\ref{ass:decrDeriv}, we have:
	\begin{align*}
		\gamma_i(k) = \frac{1}{k-k'} \sum_{l=k'}^{k-1} \gamma_i(k) \le \frac{1}{k-k'} \sum_{l=k'}^{k-1} \gamma_i(l) &= \frac{1}{k-k'} \sum_{l=k'}^{k-1} \big( \mu_i(l+1) - \mu_i(l) \big) \nonumber\\ &= \frac{\mu_i(k) - \mu_i(k')}{k-k'},
	\end{align*}
	where the first inequality comes from the concavity of the reward function, and the second equality from the definition of increment.
\end{proof}

\begin{lemma}\label{lemma:tecSum} 
Let $a,b \in \Nat$ and let $f: [a,b] \rightarrow \Reals$. If $f$ is monotonically non-decreasing function, then:
\begin{align*}
	\sum_{n=a}^b f(n) \le \int_{x = a}^b f(x) \de x + f(b) \le \int_{x = a}^{b+1} f(x) .
\end{align*}
If $f$ is monotonically non-increasing, then:
\begin{align*}
	\sum_{n=a}^b f(n) \le f(a) + \int_{x = a}^{b} f(x) \de x \le \int_{x = a-1}^{b} f(x) \de x.
\end{align*}
\end{lemma}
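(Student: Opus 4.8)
The plan is to prove each of the two chains of inequalities by the classical device of comparing a discrete sum with the area under the curve, using monotonicity to sandwich each summand between integrals over unit intervals. No heavy machinery is needed; the whole argument is an integral comparison.

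\textbf{Non-decreasing case.} First I would fix an integer $n$ with $a \le n \le b-1$. Since $f$ is non-decreasing, $f(n) \le f(x)$ for every $x \in [n, n+1]$, hence $f(n) = \int_{n}^{n+1} f(n)\, \de x \le \int_{n}^{n+1} f(x)\, \de x$. Summing over $n = a, \dots, b-1$ and telescoping the integrals gives $\sum_{n=a}^{b-1} f(n) \le \int_{a}^{b} f(x)\, \de x$; adding $f(b)$ to both sides yields the first inequality $\sum_{n=a}^{b} f(n) \le \int_{a}^{b} f(x)\, \de x + f(b)$. For the second inequality, the same monotonicity estimate on $[b, b+1]$ gives $f(b) \le \int_{b}^{b+1} f(x)\, \de x$, so $\int_{a}^{b} f(x)\, \de x + f(b) \le \int_{a}^{b+1} f(x)\, \de x$.

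\textbf{Non-increasing case.} This is the mirror image: for an integer $n$ with $a+1 \le n \le b$, non-increasingness gives $f(n) \le f(x)$ for all $x \in [n-1, n]$, hence $f(n) \le \int_{n-1}^{n} f(x)\, \de x$. Summing over $n = a+1, \dots, b$ gives $\sum_{n=a+1}^{b} f(n) \le \int_{a}^{b} f(x)\, \de x$, and adding $f(a)$ produces the first inequality. The second follows from $f(a) \le \int_{a-1}^{a} f(x)\, \de x$, so $f(a) + \int_{a}^{b} f(x)\, \de x \le \int_{a-1}^{b} f(x)\, \de x$.

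The only point worth a word of care is that the right-hand integrals range over $[a, b+1]$ (resp.\ $[a-1, b]$), i.e.\ slightly outside the nominal domain $[a,b]$ of $f$; since in every application $f$ is a monotone function defined on all of $\Nat$ (or at least on an interval containing these endpoints), the extension is automatic and no genuine obstacle arises. Beyond that bookkeeping, there is no hard step.
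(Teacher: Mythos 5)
Your proof is correct and follows essentially the same route as the paper's: both compare each summand $f(n)$ with the integral of $f$ over an adjacent unit interval (the paper just names these intervals $I_i=[x_{i-1},x_i]$), peel off the boundary term $f(b)$ (resp.\ $f(a)$), and then absorb it into the extended integral using monotonicity once more. Your remark about the integrals extending slightly beyond $[a,b]$ is a reasonable bit of bookkeeping that the paper leaves implicit.
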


\begin{proof}
	Let us consider the intervals $I_i = [x_{i-1},x_{i}]$ with $x_{0} = a$ and $x_{i} = x_{i-1} + 1$ for $i \in [b-a]$. If $f$ is monotonically non-decreasing, we have that for all $i \in [b-a]$ and $x \in I_i$ it holds that $f(x) \ge f(x_{i-1})$ and consequently $\int_{I_i} f(x) \de x \ge f(x_{i-1}) \mathrm{vol}(I_{i}) = f(x_{i-1})$. Thus:
	\begin{align*}
		\sum_{n=a}^b f(n) = \sum_{i=1}^{b-a} f(x_{i-1}) + f(b) \le \sum_{i=1}^{b-a} \int_{I_i} f(x) \de x + f(b) = \int_{x=a}^b f(x) \de x + f(b).
	\end{align*}
	Recalling that $f(b) \le \int_{x=b}^{b+1} f(x) \de x$, we get the second inequality.
	Conversely, if $f$ is monotonically non-increasing, then for all $i \in [b-a]$ and $x \in I_i$, it holds that $f(x) \ge f(x_{i})$ and consequently $\int_{I_{i}} f(x) \de x \ge f(x_{i})$. Thus:
	\begin{align*}
		\sum_{n=a}^b f(n) = f(a) + \sum_{i=1}^{b-a} f(x_{i})  \le f(a) + \sum_{i=1}^{b-a} \int_{I_i} f(x) \de x = f(a) + \int_{x=a}^b f(x) \de x .
	\end{align*}
	Recalling that $f(a) \le \int_{x=a-1}^{a} f(x) \de x$, we get the second inequality.
\end{proof}

\begin{thr}[H\"oeffding-Azuma’s inequality for weighted martingales] \label{lemma:hoeffding}
Let $\mathcal{F}_1 \subset \dots \subset \mathcal{F}_n$ be a filtration and $X_1, \dots, X_n$ be real random variables such that $X_t$ is $\mathcal{F}_t$-measurable, $\E[X_t|\mathcal{F}_{t-1}]=0$ (\ie a martingale difference sequence), and $\E[\exp(\lambda X_t)|\mathcal{F}_{t-1}]\le \exp\left( \frac{\lambda^2 \sigma^2}{2} \right)$ for any $\lambda >0$ (\ie $\sigma^2$-subgaussian). Let $\alpha_1, \dots, \alpha_n$ be non-negative real numbers. Then,  for every $\kappa \ge 0$ it holds that:
\begin{align*}
\Pr \left( \left| \sum_{t=1}^n \alpha_t X_t \right| >\kappa\right) \le 2 \exp \left( -\frac{\kappa^2}{2 \sigma^2 \sum_{t=1}^n \alpha_i^2}  \right).
\end{align*}
%
%
%
%
\end{thr}
\begin{proof}
	It is a straightforward extension of Azuma-H\"oeffding inequality for subgaussian random variables. We apply the Chernoff's method for some $s > 0$:
	\begin{align*}
	\Pr \left(  \sum_{t=1}^n \alpha_t X_t >\kappa\right) & = \Pr \left(  e^{ s \sum_{t=1}^n \alpha_t X_t}> e^{s\kappa}\right) \le \frac{\E\left[  e^{ s \sum_{t=1}^n \alpha_t X_t} \right]}{ e^{s\kappa}},
	\end{align*}
	where the last inequality follows from the application of Markov's inequality. We use the martingale property to deal with the expectation. By the law of total expectation, we have:
	\begin{align*}
	\E\left[  e^{ s \sum_{t=1}^n \alpha_t X_t} \right] = \E\left[  e^{ s \sum_{t=1}^{n-1} \alpha_t X_t} \E \left[e^{ s \alpha_n X_n}  \rvert \mathcal{F}_{t-1} \right] \right].
	\end{align*}
	Using now the subgaussian property, we have:
	\begin{align*}
	\E \left[e^{ s \alpha_n X_n}  \rvert \mathcal{F}_{t-1} \right]  \le \exp \left( \frac{s^2\alpha_n^2 \sigma^2}{2} \right).
	\end{align*}
	An inductive argument, leads to:
	\begin{align*}
	\E\left[  e^{ s \sum_{t=1}^n \alpha_t X_t} \right] \le  \exp \left( \frac{s^2 \sigma^2}{2} \sum_{t=1}^n \alpha_n^2 \right).
	\end{align*}
	Thus, minimizing \wrt $s > 0$, we have:
	\begin{align*}
		\Pr \left(  \sum_{t=1}^n \alpha_t X_t >\kappa\right)  \le\min_{s \ge 0} \, \exp \left( \frac{s^2 \sigma^2}{2} \sum_{t=1}^n \alpha_n^2 - s\kappa \right) =   \exp\left( -\frac{\kappa^2}{2  \sigma^2 \sum_{t=1}^n \alpha_n^2}\right),
	\end{align*}
	being the minimum attained by $s = \frac{\kappa}{\sigma^2 \sum_{t=1}^n \alpha_n^2}$. The reverse inequality can be derived analogously. A union bound completes the proof.
\end{proof}

\begin{lemma}\label{lemma:lemmaBoundsGamma}
	Let $\Upsilon_{\bm{\mu}}(T,q)$ be as defined in Equation~\eqref{eq:magicQuantity} for some $q \in [0,1]$. Then, for all $i \in [K]$ and $l \in \Nat$ the following statements hold:
	\begin{itemize}
		\item if $\gamma_i(l) \le b e^{-cl}$, then $\Upsilon_{\bm{\mu}}(T,q) \le  \BigO\left( b^{q} \frac{e^{-cq}}{cq} \right)$;
		\item if $\gamma_i(l) \le b l^{-c}$  with $cq > 1$, then $\Upsilon_{\bm{\mu}}(T,q) \le \BigO\left( \frac{b^q}{cq - 1} \right)$;
		\item if $\gamma_i(l) \le b l^{-c}$  with $cq = 1$, then $\Upsilon_{\bm{\mu}}(T,q) \le \BigO\left(  b^q \log T\right)$;
		\item if $\gamma_i(l) \le b l^{-c}$  with $cq < 1$, then $\Upsilon_{\bm{\mu}}(T,q) \le \BigO\left( b^q\frac{T^{1-cq}}{1-cq} \right)$.
	\end{itemize}
\end{lemma}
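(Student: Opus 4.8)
The plan is to reduce all four cases to a single monotone sum‑to‑integral comparison. First, since $0 \le \gamma_i(l) \le f(l)$ for every $i \in [K]$ and $l \in \Nat$ (where $f$ is the relevant bounding function) and $x \mapsto x^q$ is non‑decreasing on $[0,\infty)$, we immediately get $\max_{i\in[K]}\{\gamma_i(l)^q\} \le f(l)^q$, hence
\[
\Upsilon_{\bm{\mu}}(T,q) = \sum_{l=1}^{T-1} \max_{i\in[K]}\{\gamma_i(l)^q\} \le \sum_{l=1}^{T-1} f(l)^q .
\]
In each case $f(l)^q$ is a non‑increasing function of $l$ (for $q>0$; for $q=0$ the exponential bound is understood as vacuous, and the three polynomial bounds hold trivially since $\Upsilon_{\bm{\mu}}(T,0)=T-1$ and then $cq<1$). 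I would then apply the non‑increasing branch of Lemma~\ref{lemma:tecSum} with $a=1$, $b=T-1$ to $g(x)\coloneqq f(x)^q$, obtaining $\sum_{l=1}^{T-1} f(l)^q \le f(1)^q + \int_{1}^{T-1} f(x)^q\,\de x$, and evaluate the elementary integral case by case.

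For $f(l)=b\,e^{-cl}$ we have $f(x)^q=b^q e^{-cqx}$ and $\int_{1}^{T-1} b^q e^{-cqx}\,\de x=\tfrac{b^q}{cq}\big(e^{-cq}-e^{-cq(T-1)}\big)\le \tfrac{b^q e^{-cq}}{cq}$, while the boundary term $f(1)^q=b^q e^{-cq}$ is lower order; summing gives $\Upsilon_{\bm{\mu}}(T,q)\le b^q e^{-cq}\big(1+\tfrac{1}{cq}\big)=\BigO\big(b^q\,\tfrac{e^{-cq}}{cq}\big)$. For $f(l)=b\,l^{-c}$ we have $f(x)^q=b^q x^{-cq}$, and $\int_{1}^{T-1} x^{-cq}\,\de x$ splits on the sign of $cq-1$: if $cq>1$ it converges to at most $\tfrac{1}{cq-1}$, giving $\Upsilon_{\bm{\mu}}(T,q)\le b^q\big(1+\tfrac{1}{cq-1}\big)=\BigO\big(\tfrac{b^q}{cq-1}\big)$; if $cq=1$ it equals $\log(T-1)\le\log T$, giving $\BigO(b^q\log T)$; and if $cq<1$ it equals $\tfrac{(T-1)^{1-cq}-1}{1-cq}\le\tfrac{T^{1-cq}}{1-cq}$, giving $\BigO\big(b^q\,\tfrac{T^{1-cq}}{1-cq}\big)$. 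In each instance the boundary contribution $f(1)^q=b^q$ is dominated by the integral term (which is always at least a constant), which I would point out explicitly. These bounds also recover the entries of Table~\ref{tab:rates} upon setting $b=1$.

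There is no deep obstacle here; the only points requiring care are (i) keeping the integral's lower limit at $1$ rather than $0$ — essential in the polynomial case with $cq\ge 1$, where $\int_0^{\cdot} x^{-cq}\,\de x$ diverges — and (ii) verifying, in each of the four cases, that the additive constant $f(1)^q$ is absorbed into the stated $\BigO$ rate, which is immediate. I expect the write‑up to be short, with the bulk being the four routine integral evaluations above.
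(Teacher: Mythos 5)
Your proof is correct and follows essentially the same route as the paper's: bound $\max_i \gamma_i(l)^q$ by $f(l)^q$, compare the resulting non-increasing sum with its integral via Lemma~\ref{lemma:tecSum}, and evaluate the four elementary integrals, absorbing the boundary term $f(1)^q$ into the stated rates. Your handling of the $q=0$ edge case and the explicit remark about keeping the lower integration limit at $1$ are minor refinements over the paper's write-up, but the argument is the same.
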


\begin{proof}
The proofs of all the statements are obtained by bounding the summation defining $\Upsilon_{\bm{\mu}}(T,q)$ with the corresponding integrals, as in Lemma~\ref{lemma:tecSum}. Let us start with $\gamma_i(l) \le b e^{-cl}$:
\begin{align*}
	\Upsilon_{\bm{\mu}}(T,q) = \sum_{l=1}^T \gamma_i(l)^q \le  b^qe^{-cq} + \int_{x=1}^{T} b^qe^{-cqx} \de x &\le b^qe^{-cq} + \frac{b^q}{cq} e^{-cq} \\&= \BigO\left( b^{q} \frac{e^{-cq}}{cq} \right).
\end{align*}
We now move to $\gamma_i(l) \le b l^{-c}$. If $cq < 1$, we have:
\begin{align*}
\Upsilon_{\bm{\mu}}(T,q) = \sum_{l=1}^T \gamma_i(l)^q \le b^q + \int_{x=1}^{T} b^q x^{-cq} \de x = b^q + \frac{b^q}{cq - 1} = \BigO\left( \frac{b^q}{cq - 1} \right).
\end{align*}
For $cq=1$, we obtain:
\begin{align*}
\Upsilon_{\bm{\mu}}(T,q) = \sum_{l=1}^T \gamma_i(l)^q \le b^q + \int_{x=1}^{T} \frac{b^q}{x} \de x = b^q + b^q \log T = \BigO\left(  b^q \log T\right).
\end{align*}
Finally, for $cq < 1$, we have:
\begin{align*}
	\Upsilon_{\bm{\mu}}(T,q) = \sum_{l=1}^T \gamma_i(l)^q \le b^q + \int_{x=1}^{T} b^q x^{-cq} \de x = b^q + b^q\frac{T^{1-cq}}{1-cq} = \BigO\left( b^q\frac{T^{1-cq}}{1-cq} \right).
\end{align*}
The results of Table~\ref{tab:rates} are obtained by setting $b=1$.
\end{proof}

\lowerBoundWithDeltaBar*

\begin{proof}
    Let $T_i(T)$ be the r.v. representing the number of pulls at the time horizon $T$ for the $i-th$ arm, then the regret for an arbitrary experiment can be written as:
    \begin{align}
        r_{\bm{\mu}}(\pi,T)=\sum_{n=T_1(T)+1}^{T} \mu_1(n)-\sum_{i\ge 2}\sum_{n=1}^{T_i(T)}\mu_i(n)
    \end{align}
    By definition of the rising rested problem the following inequalities will hold true for all possible values of $T_1(T)$:
    \begin{align}
        \sum_{n=T_1(T)+1}^{T} \mu_1(n) \ge \sum_{n=T_1(T)+1}^{T} \overline{\mu}_1(T),
    \end{align}
    being $ \overline{\mu}_1(T)=\frac{1}{T}\sum_{n=1}^T\mu_1(n)$, conversely for the same reason it will also hold true for all the possible values of $T_i(T)$ that:
    \begin{align}
        \sum_{n=1}^{T_i(T)} \mu_i(n) \leq \sum_{n=1}^{T_i(T)} \overline{\mu}_i(T),
    \end{align}
    being $ \overline{\mu}_i(T)=\frac{1}{T}\sum_{n=1}^T\mu_i(n)$, so that the regret for a single instantation can be lower bounded as:
    \begin{align} 
        r_{\bm{\mu}}(\pi,T)&=\sum_{n=T_1(T)+1}^{T} \mu_1(n)-\sum_{i\ge 2}\sum_{n=1}^{T_i(T)}\mu_i(n)\\
        &\ge \sum_{n=T_1(T)+1}^{T} \overline{\mu}_1(T)-\sum_{i\ge 2}\sum_{n=1}^{T_i(T)}\overline{\mu}_i(T)\\
        &= \sum_{i\ge 2}T_i(T)\overline{\Delta}_i
    \end{align}
    so that applying the expectation value wrt the policy we do obtain:
    \begin{align}
     \mathbb{E}[r_{\bm{\mu}}(\pi,T)]=R_{\bm{\mu}}(\pi,T)=\sum_{i\ge 2} \overline{\Delta}_i\mathbb{E}_{\bm{\mu}}^{\pi}[T_i(T)]
    \end{align}
\end{proof} 

\begin{lemma}[Bretagnolle-Huber Inequality]\label{lemma:bret}
    Let $P$ and $Q$ be two probability distributions on the same measurable space $(\Omega, \mathcal{F})$, and let $A \in \mathcal{F}$ be an arbitrary event. Then:
    \begin{align}
        P(A)+Q\left(A^c\right) \geq \frac{1}{2} \exp (-\mathrm{D}(P, Q)),
    \end{align}
where $A^c=\Omega \backslash A$ is the complement of $A$.
\end{lemma}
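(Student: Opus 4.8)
The plan is to prove the inequality by the standard route through the Bhattacharyya affinity $\int \sqrt{pq}$. First I would fix a common dominating measure $\lambda$ (for instance $\lambda = P + Q$) and let $p = \frac{\de P}{\de \lambda}$, $q = \frac{\de Q}{\de \lambda}$ be the corresponding densities, so that $P(A) = \int_A p\,\de\lambda$ and $Q(A^c) = \int_{A^c} q\,\de\lambda$. If $\mathrm{D}(P,Q) = +\infty$ the right-hand side of the claim is $0$ and there is nothing to prove, so I may assume $\mathrm{D}(P,Q) < +\infty$ (which forces $P \ll Q$).

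The core is a two-step chain: overlap $\ge \tfrac12(\text{affinity})^2 \ge \tfrac12 e^{-\mathrm{D}}$. For the lower bound by the overlap, note that pointwise $\mathbf{1}_A\, p \ge \mathbf{1}_A \min\{p,q\}$ and $\mathbf{1}_{A^c}\, q \ge \mathbf{1}_{A^c}\min\{p,q\}$; integrating against $\lambda$ and adding yields
\[
P(A) + Q(A^c) \;\ge\; \int_\Omega \min\{p,q\}\,\de\lambda .
\]
Next, Cauchy--Schwarz applied to $\sqrt{pq} = \sqrt{\min\{p,q\}}\cdot\sqrt{\max\{p,q\}}$, together with $\max\{p,q\} \le p+q$ and $\int_\Omega (p+q)\,\de\lambda = 2$, gives
\[
\left( \int_\Omega \sqrt{pq}\,\de\lambda \right)^{2} \le \left( \int_\Omega \min\{p,q\}\,\de\lambda \right)\!\left( \int_\Omega \max\{p,q\}\,\de\lambda \right) \le 2 \int_\Omega \min\{p,q\}\,\de\lambda .
\]
Finally, restricting the affinity integral to $\{p>0\}$ (its value on $\{p=0\}$ being $0$) and applying Jensen's inequality to the convex map $\exp$,
\[
\int_\Omega \sqrt{pq}\,\de\lambda \;=\; \mathbb{E}_P\!\left[ \exp\!\big( \tfrac12 \log \tfrac{q}{p} \big) \right] \;\ge\; \exp\!\left( \tfrac12\, \mathbb{E}_P\!\left[ \log \tfrac{q}{p} \right] \right) \;=\; \exp\!\left( -\tfrac12\,\mathrm{D}(P,Q) \right).
\]
Chaining the three displays gives $P(A) + Q(A^c) \ge \tfrac12 \big( \int_\Omega \sqrt{pq}\,\de\lambda \big)^2 \ge \tfrac12 \exp(-\mathrm{D}(P,Q))$, which is the claim.

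The argument is entirely routine; the only points deserving a moment of care are the measure-theoretic bookkeeping --- existence of the common dominating measure (the choice $P+Q$ always works), integrability of $\sqrt{q/p}$ under $P$ (which follows from $\int_\Omega \sqrt{pq}\,\de\lambda \le 1$ by Cauchy--Schwarz), and handling the null set $\{p=0\}$ in the Jensen step --- together with the trivial dispatch of the case $\mathrm{D}(P,Q) = +\infty$. There is no substantive obstacle: the whole content is the elementary chain overlap $\ge \tfrac12(\text{affinity})^2 \ge \tfrac12 e^{-\mathrm{D}}$.
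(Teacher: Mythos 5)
Your proof is correct: the chain $P(A)+Q(A^c) \ge \int_\Omega \min\{p,q\}\,\mathrm{d}\lambda \ge \tfrac12\bigl(\int_\Omega \sqrt{pq}\,\mathrm{d}\lambda\bigr)^2 \ge \tfrac12 e^{-\mathrm{D}(P,Q)}$ is the standard argument, and you handle the measure-theoretic points (common dominating measure, the infinite-divergence case, the null set $\{p=0\}$) appropriately. The paper itself states Lemma~\ref{lemma:bret} as a known result without any proof, so there is nothing to compare against; your argument matches the classical textbook derivation.
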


\begin{lemma}[Radon-Nykodin Theorem, \cite{billingsley1996probandmeasure}, Theorem 32.2]\label{lemma:radnyd}
Let $\mu$ and $\nu$ be two measures on $(\Omega,\mathcal{F})$, if $\mu$ and $\nu$ are two $\sigma$-measure such that $\nu \ll \mu$, then there exist a non-negative $f$, a density, such that:
	\begin{align}
		\nu(A)=\int_{A} f d\mu
	\end{align}
	for all $A \in \mathcal{F}$.
\end{lemma}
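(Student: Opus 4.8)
The final statement is the classical Radon--Nikodym theorem, so the plan is to reproduce a self-contained measure-theoretic argument rather than merely defer to the cited reference; I would use von Neumann's Hilbert-space proof for its brevity. The first step is to reduce to the case of \emph{finite} measures: since $\mu$ and $\nu$ are $\sigma$-finite, I would partition $\Omega$ into countably many disjoint measurable sets $\{\Omega_k\}_{k \ge 1}$ on each of which both $\mu$ and $\nu$ are finite, establish the representation on each $\Omega_k$ to obtain densities $f_k$ supported on $\Omega_k$, and then glue them together by setting $f \coloneqq \sum_{k \ge 1} f_k \mathbf{1}_{\Omega_k}$. Countable additivity of $\nu$ together with the monotone convergence theorem then upgrades the piecewise identities to the global statement $\nu(A) = \int_A f \, d\mu$ for every $A \in \mathcal{F}$. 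This reduction is routine, so the core of the argument is the finite case.

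For finite $\mu$ and $\nu$ I would introduce the auxiliary finite measure $\varphi \coloneqq \mu + \nu$ and consider the functional $L(g) = \int g \, d\nu$ on the Hilbert space $L^2(\varphi)$. By Cauchy--Schwarz, $|L(g)| \le \int |g| \, d\varphi \le \varphi(\Omega)^{1/2} \, \|g\|_{L^2(\varphi)}$, so $L$ is a bounded linear functional; the Riesz representation theorem then furnishes some $h \in L^2(\varphi)$ with $\int g \, d\nu = \int g \, h \, d\varphi$ for all $g \in L^2(\varphi)$, which rearranges to the key identity $\int g\,(1-h)\, d\nu = \int g \, h \, d\mu$. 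Testing this identity against indicator functions of the sets $\{h < 0\}$ and $\{h > 1\}$ shows that each is $\varphi$-null, hence $0 \le h \le 1$ holds $\varphi$-almost everywhere. At this point the absolute continuity hypothesis $\nu \ll \mu$ enters decisively: taking $g = \mathbf{1}_{\{h=1\}}$ forces $\mu(\{h=1\}) = 0$, so $\{h=1\}$ is also $\nu$-null, and we may treat $0 \le h < 1$ almost everywhere. Defining $f \coloneqq h/(1-h)$, which is nonnegative and measurable, and substituting $g = \mathbf{1}_A (1 + h + \cdots + h^n)$ into the identity and letting $n \to \infty$ by monotone convergence, yields $\nu(A) = \int_A f \, d\mu$, completing the finite case.

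The main obstacle is precisely the almost-everywhere control of $h$, namely establishing $0 \le h \le 1$ and then discarding $\{h=1\}$: this is the only step where the hypothesis $\nu \ll \mu$ is genuinely used, and without it the density need not exist, so I would take care to exhibit the test sets explicitly and to justify the monotone-convergence passage defining $f$. The $\sigma$-finite gluing requires only mild attention to ensure $f$ is well defined $\mu$-almost everywhere and that the countable sum of integrals equals the integral of the sum. Should one prefer to avoid Hilbert-space machinery, an alternative route constructs $f$ as (the measurable envelope of) the supremum of the class $\{g \ge 0 : \int_A g \, d\mu \le \nu(A) \text{ for all } A \in \mathcal{F}\}$ and rules out a strictly positive residual measure $\nu - \int_{\cdot} f\,d\mu$ via a Hahn decomposition of $\nu - \varepsilon \mu$; I would keep von Neumann's argument as the primary plan and relegate this variant to a remark.
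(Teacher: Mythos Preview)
Your proposal is correct: it is the standard von Neumann Hilbert-space proof of the Radon--Nikodym theorem, and every step you outline (reduction to the finite case, Riesz representation for the functional $g \mapsto \int g\, d\nu$ on $L^2(\mu+\nu)$, the $0 \le h \le 1$ bound, the use of $\nu \ll \mu$ to discard $\{h=1\}$, and the monotone-convergence passage to $f = h/(1-h)$) is sound.

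The paper, however, does not prove this lemma at all: it is stated purely as a citation to Billingsley (Theorem~32.2) and invoked as a black box in the lower-bound construction. So your approach is not a different route so much as an actual proof where the paper provides none. What you gain is self-containment; what the paper gains by citing is brevity, which is appropriate given that the Radon--Nikodym theorem is entirely standard and peripheral to the paper's contributions. If your goal is to match the paper, a one-line reference suffices; if your goal is a complete write-up, your von Neumann argument is the right choice and needs no modification.
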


\section{Efficient Update}\label{apx:efficient}
Under the assumption that the window size depends on the number of pulls only and that $0 \le h({n+1}) - h({n}) \le 1$, we can employ the following efficient $\mathcal{O}(1)$ update for \algrested. Denoting with $n$ the number of pulls of arm $i$, we update the estimator at every time step $t \in [T]$ as:
\begin{align*}
	\widehat{\mu}_{i}^{h(n)}(t) = \frac{1}{h(n)} \left( a_n + \frac{t(a_n - b_n)}{h(n)}  - \frac{c_n - d_n}{h(n)} \right),
\end{align*}
where the following sequences are updated only when the arm is pulled:
\begin{align*}
	& a_{n} = \begin{cases}
					a_{n-1} + r_i(n)-r_i(n-h(n)) & \text{if } h({n}) = h({n-1}) \\
					a_{n-1} + r_i(n) & \text{otherwise}
				\end{cases}, \\
	&  b_{n} = \begin{cases}
					b_{n-1} +r_i(n-h(n)) - r_i(n-2h(n)) & \text{if } h({n}) = h({n-1}) \\
					b_{n-1} + r_i(n-2 h(n)+1) & \text{otherwise}
				\end{cases}, \\
	&  c_{n} = \begin{cases}
					c_{n-1} +nr_i(n) - (n-h(n))r_i(n-h(n)) & \text{if } h({n}) = h({n-1}) \\
					c_{n-1} + nr_i(n) & \text{otherwise}
				\end{cases}, \\
	&  d_{n} = \begin{cases}
					d_{n-1} +n r_i(n-h(n)) -(n-h(n)) r_i(n-2h(n))& \text{if } h({n}) = h({n-1}) \\
					d_{n-1} + (n-h(n))r_i(n-2h(n)+1) + b_n & \text{otherwise}
				\end{cases},
\end{align*}

where we have abbreviated $r_i(n) \coloneqq R_{t_{i,n}}$.

\section{Experimental Setting and Additional Results}\label{apx:experiments}

\subsection{Parameter Setting}
The choices of the parameters of the algorithms we compared \texttt{R-less/ed-UCB} with are the following:
\begin{itemize}
	\item \texttt{Rexp3}: $V_T = K$ since in our experiments we consider the reward of each arm to evolve from $0$ to $1$, thus the maximum global variation possible is equal the number of arms of the bandit; $\gamma = \min \left\{ 1, \sqrt{\frac{K\log{K}}{(e-1)\Delta_T}} \right\}$, $\Delta_T = \lceil (K\log{K})^{1/3} (T/V_T)^{2/3} \rceil$ as recommended by~\citet{besbes2014stochastic};
	\item \texttt{KL-UCB}: $c = 3$ as required by the theoretical results on the regret provided by~\citet{garivier2011kl};
	\item \texttt{Ser4}: according to what suggested by~\citet{allesiardo2017non} we selected $\delta=1/T$, $\epsilon=\frac{1}{KT}$, and $\phi = \sqrt{\frac{N}{TK\log({KT})}}$;
	\item \texttt{SW-UCB}: as suggested by~\citet{garivier2011upper} we selected the sliding-window $\tau = 4\sqrt{T\log{T}}$ and the constant $\xi = 0.6$;
	\item \texttt{SW-KL-UCB} as suggested by~\citet{garivier2011upper} we selected the sliding-window $\tau = \sigma^{-4/5}$;
	\item \texttt{SW-TS}: as suggested by~\citet{trovo2020sliding} for the smoothly changing environment we set $\beta = 1/2$ and sliding-window $\tau = T^{1-\beta} = \sqrt{T}$.
\end{itemize}

\subsection{IMDB Experiment}\label{apx:imdb}
We created a bandit environment in which each of the classification algorithms is an arm of the bandit.
The interaction for each round $t \in T$ of the real-world experiment is composed by the following:
\begin{itemize}
	\item the agent decides to pull arm $I_t$;
	\item a random point $x_t$ of the IMDB dataset is selected and supplied to the classification algorithm associated to arm $I_t$;
	\item the \quotes{base} algorithm classifies the sample, \ie~it provides the prediction $\hat{y}_t \in \{0,1\}$ for the selected sample $x_t$;
	\item the environment generates the reward comparing the prediction $\hat{y}_t$ to the target class $y_t$ using the following function $R_t = 1 - |y_t - \hat{y}_t|$;
	\item the base algorithm is updated using $(x_t, y_t)$;
\end{itemize}
Since the base algorithms are trained only if their arm is selected, this is a problem which belongs to the rested scenario.

For the classification task we decided to employ:
\begin{itemize}
	\item $2$ Online Logistic Regression (LR) methods with different schemes used for the learning rate $\lambda_t$;
	\item $5$ Neural Networks (NNs) different in terms of shape and number of neurons
\end{itemize}
Specifically, we adopt a decreasing scheme for the learning rate of $\lambda_t = \frac{\beta}{t}$ (denoted with \texttt{LR}$(t)$ from now on) and a constant learning rate $\lambda_t = \beta$ (denoted as \texttt{LR} from now on).
Moreover, the NNs use as activation functions the rectified linear unit, \ie $relu(x) = \max(0,x)$, a constant learning rate $\alpha = 0.001$ and the \quotes{adam} stochastic gradient optimizer for fitting.
Two of the chosen nets have only one hidden layer, with $1$ and $2$ neurons, respectively, the third net has $2$ hidden layer, with $2$ neurons each, and two nets have $3$ layers with $2,2,2$ and $1,1,2$ neurons, respectively.
We refer to a specific NN denoting in curve brackets the cardinalities of the layers, \eg the one having $2$ layer with $2$ neurons each is denoted by \texttt{NN}$(2,2)$.

We analyzed their global performance on the IMDB dataset by averaging $1,000$ independent runs in which each strategy is sequentially fed with all the available $50,000$ samples.
The goal was to determine, at each step, the value of the expected reward $\mu_i(n)$. Figure~\ref{fig:imdb_learning_curves} provides the average learning curves of the selected algorithms.
As we expected, from a qualitative perspective, the average learning curves are increasing and concave, however, due to the limited number of simulations, Assumptions~\ref{ass:incr} and~\ref{ass:decrDeriv} are not globally satisfied.

We also perform an experiment using only \texttt{LR}$(t)$ and \texttt{LR} as arms.
Figure~\ref{fig:imdb_old} reports the result of a run of the MAB algorithms over the IMDB scenario.
The analogy between this result and the one of the $2$-arms synthetic rested bandit (Figure~\ref{fig:rested_2arms_regrets}) is clear, indeed \algrested outperforms the other baselines when the learning curves of the base algorithms at some point intersects one another. 

\begin{figure*}
	\vspace{.25cm}
	\begin{minipage}{.48\textwidth}
		\centering
		\subfloat{\scalebox{0.9}{\includegraphics{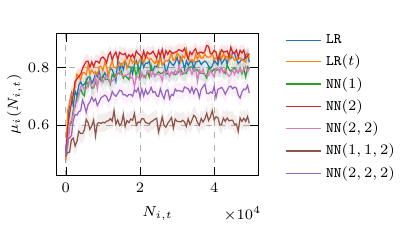} \label{fig:imdb_learning_curves}}}
		\captionof{figure}{Empirical learning curves of the classification algorithms (arms) of the IMDB experiment}
	\end{minipage}%
	\hfill
	\begin{minipage}{.48\textwidth}
		\centering
		\subfloat{\scalebox{0.9}{\includegraphics{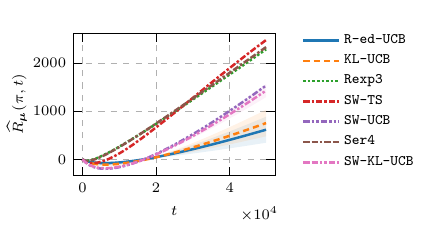} \label{fig:imdb_old}}}
		\captionof{figure}{Cumulative regret in a $2$-arms online model selection on IMDB dataset ($30$ runs, $95$\% c.i.).}
	\end{minipage}
	\end{figure*}

\subsection{Pulls of each arm}

Figure~\ref{fig:pulls} presents the average number of pulls for each arm for each one of the algorithm analysed in the synthetic experiments of Section~\ref{sec:experiments}.
Figure~\ref{fig:rested_15arms_pulls} shows that \algrested{} explored arms $13$ and $1$ more than the others, which are respectively the best and the second-best, and most likely needs a longer time horizon to select which one is the best among the twos.
Figure~\ref{fig:rested_2arms_pulls} highlights the fact that \algrested{} undoubtedly identified which arm is the best (arm $2$), while \texttt{KL-UCB}, \texttt{SW-UCB}, \texttt{SW-KL-UCB} do not identify the best arm. \texttt{Ser4}, \texttt{Rexp3} and \texttt{SW-TS} pulled the best arm slightly more than $50\%$ of the times, paying the already discussed initial learning phase.

\begin{figure}
	\centering
	\subfloat{\scalebox{0.75}{\includegraphics{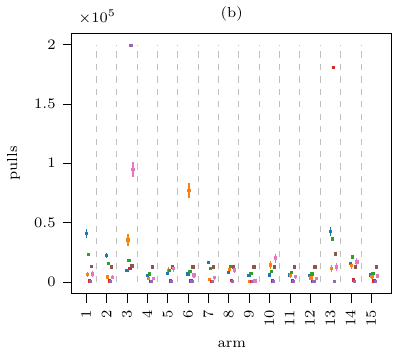} \label{fig:rested_15arms_pulls}}}
	\hfill
	\subfloat{\scalebox{0.75}{\includegraphics{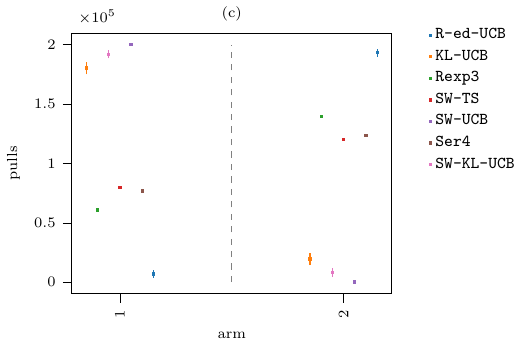} \label{fig:rested_2arms_pulls}}}
	\caption{Average number of pulls: (a)~$15$ arms \red, (b)~$2$ arms \red.} \label{fig:pulls}
\end{figure}
\end{document}